\setlist[itemize]{leftmargin=0.4cm,itemsep=-0.1cm,topsep=0cm}
\setlist[enumerate]{leftmargin=0.4cm,itemsep=-0.1cm,topsep=0cm}
\theoremstyle{plain}
\newtheorem{theorem}{Theorem}[section]
\newtheorem{proposition}[theorem]{Proposition}
\newtheorem{lemma}[theorem]{Lemma}
\theoremstyle{definition}
\newtheorem{definition}[theorem]{Definition}
\theoremstyle{plain}
\newtheorem{remark}[theorem]{Remark}
\newcommand{\TrueRDF}{\eta^*}
\newcommand{\TrueVF}{V^*}
\newcommand{\StateSpace}{\mathcal{X}}
\newcommand{\RealProbDists}{\mathscr{P}(\mathbb{R})}
\newcommand{\RewardFn}{r}
\newcommand{\VBellmanOp}{T}
\newcommand{\TrueCatCDF}{F^*}
\newcommand{\CatOp}{T_P}
\newcommand{\EmpCatOp}{T_{\hat{P}}}
\newcommand{\EmpCatCDF}{\hat{F}}
\newcommand{\CDFSpace}{\mathscr{F}}
\newcommand{\CDFBootstrapMap}{B}
\newcommand{\CDFStocBellmanOp}{\mathcal{T}_{\mathrm{SCC}}}
\newcommand{\tildeTrueCatCDF}{\widetilde{F}^*}
\newcommand{\SampleCatOp}{\hat{T}_{P}}
\newcommand{\SampleEmpCatOp}{\hat{T}_{\hat{P}}}
\newcommand{\Proj}{{\Pi_m}}
\newcommand{\altP}{Q}
\newcommand{\EmpAltP}{\hat{Q}}
\newcommand{\altPCDF}{F^Q}
\newcommand{\FAltP}{F^{\altP}}
\title{Near-Minimax-Optimal Distributional Reinforcement Learning with a Generative Model}
\author{%
  Mark Rowland \\
  Google DeepMind\thanks{Correspondence to \texttt{markrowland@google.com}.} \\
  \And
  Li Kevin Wenliang \\
  Google DeepMind \\
  \And
  R\'emi Munos \\
  FAIR, Meta\thanks{Work done while at Google DeepMind.} \\
  \AND
  Clare Lyle \\
  Google DeepMind \\
  \And
  Yunhao Tang \\
  Google DeepMind \\
  \And
  Will Dabney \\
  Google DeepMind \\
}
\begin{document}

\maketitle

\begin{abstract}
  We propose a new algorithm for model-based distributional reinforcement learning (RL), and prove that it is minimax-optimal for approximating return distributions in the generative model regime (up to logarithmic factors), the first result of this kind for any distributional RL algorithm.   Our analysis also provides new theoretical perspectives on categorical approaches to distributional RL, as well as introducing a new distributional Bellman equation, the stochastic categorical CDF Bellman equation, which we expect to be of independent interest. Finally, we provide an experimental study comparing a variety of model-based distributional RL algorithms, with several key takeaways for practitioners.
\end{abstract}

\section{Introduction}

In distributional reinforcement learning, the aim is to predict the full probability distribution of possible returns at each state, rather than just the mean return \citep{morimura2010nonparametric,bellemare2017distributional,bdr2023}.
Applications of distributional reinforcement learning range from dopamine response modelling in neuroscience \citep{dabney2020distributional}, to
driving risk-sensitive decision-making and exploration in domains such as robotics \citep{bodnar2020quantile}, healthcare \citep{bock2022superhuman}, and algorithm discovery \citep{fawzi2022discovering}, as well as forming a core component of many deep reinforcement learning architectures \citep{bellemare2017distributional,dabney2018distributional,dabney2018implicit,yang2019fully,bellemare2020autonomous,shahriari2022revisiting,wurman2022outracing}.

The full distribution of returns is a much richer signal than the expectation to predict. A foundational, as-yet-unanswered problem is how many sampled transitions are required to accurately estimate return distributions, and in particular, whether this task is statistically harder than estimating just the value function. We study these questions in the setting where sampled transitions are given by a generative model \citep{kearns2002sparse,kakade2003sample,gheshlaghi2013minimax}.

We provide a new distributional RL algorithm, the \emph{direct categorical fixed-point algorithm} (DCFP), and prove that the number of samples required by this algorithm for accurate return distribution estimation matches the lower bound established by \citet{zhang2023estimation}, up to logarithmic factors. This resolves the foundational question above, and, perhaps surprisingly, shows that in this setting, \emph{distributional RL is essentially no harder, statistically speaking, than learning a value function}.

In addition to this central result, our analysis provides new perspectives on categorical approaches to distributional RL \citep{bellemare2017distributional}, including a new distributional Bellman equation, the stochastic categorical CDF Bellman equation (see Section~\ref{sec:sc-cdf-bellman}), which we expect to be of broad use in future work on categorical distributional RL. We also provide an empirical study, comparing the newly-proposed DCFP algorithm to existing approaches to distributional RL such as quantile dynamic programming \citep[QDP; ][]{dabney2018distributional,rowland2023analysis}, and identify several key findings for practitioners, including the importance of levels of environment stochasticity and discount factor for the relative performance of these algorithms.

\section{Background}\label{sec:background}

Throughout the paper, we consider the problem of evaluation in an infinite-horizon Markov reward process (MRP), with finite state space $\mathcal{X}$, transition probabilities $P \in \mathbb{R}^{\StateSpace \times \StateSpace}$, reward function $\RewardFn : \StateSpace \rightarrow [0,1]$, and discount factor $\gamma \in [0, 1)$; this encompasses the problem of policy evaluation in Markov decision processes \citep{sutton2018reinforcement}. A random trajectory $(X_t, R_t)_{t \geq 0}$ is generated from an initial state $X_0 = x$ according to the conditional distributions $X_t \mid (X_0,\ldots,X_{t-1}) \sim P( \cdot | X_{t-1})$, and $R_t = \RewardFn(X_t)$. 
The \emph{return} associated with the trajectory is given by the quantity $\sum_{t \geq 0} \gamma^t R_t$.
In RL,
a central task is to estimate the \emph{value function} $\TrueVF : \mathcal{X} \rightarrow \mathbb{R}$, defined by
\begin{align}\label{eq:value-function}
    \textstyle
    \TrueVF(x) = \mathbb{E}[\sum_{t \geq 0} \gamma^t R_t \mid X_0 = x] \, ,
\end{align}
given some form of observations from the MRP. The value function defines the expected return, conditional on each possible starting state in the MRP.
The value function satisfies the Bellman equation $\TrueVF = \VBellmanOp \TrueVF$, where $\VBellmanOp : \mathbb{R}^{\mathcal{X}} \rightarrow \mathbb{R}^{\mathcal{X}}$ is defined by
\begin{align}\label{eq:bellman-op}
    (\VBellmanOp V)(x) = \mathbb{E}_x[R + \gamma V(X')] \, ,
\end{align}
where $(x, R, X')$ is a random transition in the environment, distributed as described above.
When the transition probabilities of the MRP are known, the right-hand side can be evaluated as an affine transformation of $V$. MRP theory (see, e.g., \citealp{puterman2014markov} for an overview) then shows how (an approximation to) $\TrueVF$ can be obtained. For example, a \emph{dynamic programming} approach takes an initial approximation $V_0 \in [0, (1-\gamma)^{-1}]^{\mathcal{X}}$, and the sequence $(V_k)_{k=0}^\infty$ is computed via the update $V_{k+1} = \VBellmanOp V_k$; it is guaranteed that $\| V_k - \TrueVF \|_\infty \leq \gamma^k (1-\gamma)^{-1}$.
Alternatively, the linear system $V = \VBellmanOp V$ can be solved directly with linear-algebraic methods to obtain $\TrueVF$ as its unique solution.

\subsection{Reinforcement learning with a generative model}
\label{sec:background-generative-model}

In many settings the transition probabilities of the MRP are unknown, and the value function must be estimated based on data comprising sampled transitions, introducing a statistical element to the problem. A commonly used model for this data is a \emph{generative model} \citep{kearns2002sparse,kakade2003sample}.
In this setting, for each state $x \in \mathcal{X}$, we observe $N$ i.i.d.\ samples $(X^x_i)_{i=1}^N$ from $P( \cdot | x)$, and this collection of $N|\mathcal{X}|$ samples may then be used by an algorithm to estimate the value function. \citet{gheshlaghi2013minimax} showed that at least $N = \Omega(\varepsilon^{-2} (1-\gamma)^{-3}\log(|\mathcal{X}|/\delta))$ samples are required to obtain $\varepsilon$-accurate estimates of the value function with high probability (measured in $L^\infty$ norm), and also showed that this bound is attained (up to logarithmic factors) by a \emph{certainty equivalence} algorithm, which treats the empirically observed transition frequencies as the true ones, and solves for the value function of the corresponding MRP.

\subsection{Distributional reinforcement learning}\label{sec:background-drl}

Distributional RL aims to capture the full probability distribution of the random return at each state, not just its mean. Mathematically, the object of interest is the return-distribution function (RDF) $\TrueRDF : \StateSpace \rightarrow \RealProbDists$, defined by
\begin{align}\label{eq:rdf}
    \textstyle
    \TrueRDF(x) = \mathcal{D}\big(\sum_{t = 0}^\infty \gamma^t R_t \mid X_0 = x \big) \, ,
\end{align}
where $\mathcal{D}$ extracts the probability distribution of the input random variable.
The distributional perspective on reinforcement learning has proved practically useful in a wide variety of applications, including healthcare \citep{bock2022superhuman}, navigation \citep{bellemare2020autonomous}, and algorithm discovery \citep{fawzi2022discovering}. The central equation behind dynamic programming approaches to approximating the return distribution function is the \emph{distributional Bellman equation} \citep{sobel1982variance,morimura2010nonparametric,bellemare2017distributional}, given by
$\TrueRDF = \mathcal{T} \TrueRDF$,
where $\mathcal{T} : \mathscr{P}(\mathbb{R})^\mathcal{X} \rightarrow \mathscr{P}(\mathbb{R})^\mathcal{X}$ is the \emph{distributional Bellman operator}, defined by
\begin{align*}
    (\mathcal{T} \eta)(x) = \mathcal{D}\big( R + \gamma G(X') \mid X = x \big) \, ,
\end{align*}
where independent from the random transition $(X, R, X')$, we have $G(x) \sim \eta(x)$ for each $x \in \mathcal{X}$.

\subsection{Categorical dynamic programming}
\label{sec:categorical}

Given an initial RDF approximation $\eta \in \mathscr{P}([0, (1 - \gamma)^{-1}])^\mathcal{X}$, it also holds that the update $\eta \leftarrow \mathcal{T} \eta$ converges to $\TrueRDF$ in an appropriate sense (e.g., in Wasserstein distance; see \citealp{bellemare2017distributional}), in analogy with dynamic programming algorithms for the value function, as described above.
However, generally it is not possible to tractably implement repeated computation of the update $\eta \leftarrow \mathcal{T} \eta$ as a means of computing approximations to return distributions; probability distributions are infinite-dimensional objects, and as such computational costs quickly become prohibitive. Instead, the use of some kind of approximate, tractable representation of probability distributions is typically required.

\textbf{Representations.}
In this paper, we focus on the categorical approach to distributional reinforcement learning 
\citep{bellemare2017distributional}, in which estimates of return distributions are represented as categorical distributions over a finite number of outcomes $z_1 < \cdots < z_m$. We will take $\{z_1,\ldots,z_m\}$ to be an equally spaced grid over the range of possible returns $[0,(1-\gamma)^{-1}]$, so that $z_i = \tfrac{i-1}{m-1}(1-\gamma)^{-1}$ for $i=1,\ldots,m$.
Approximations of the RDF are then represented in the form
\begin{align}\label{eq:cat-dist}
    \textstyle 
    \eta(x) = \sum_{i=1}^m p_i(x) \delta_{z_i} \, .
\end{align}
Here, $\delta_z$ is the Dirac distribution at the outcome $z$, and $p = ((p_i(x))_{i=1}^m : x \in \mathcal{X})$ are adjustable probability mass parameters; see Figure~\ref{fig:cdrl}(a). The number of categories $m$ can be interpreted as controlling the \emph{expressivity} of the representation, and should be carefully chosen in practice to trade off between increased accuracy (larger $m$), and computational tractability (smaller $m$).

\begin{figure}[b]
    \centering
    \includegraphics[width=.88\textwidth]{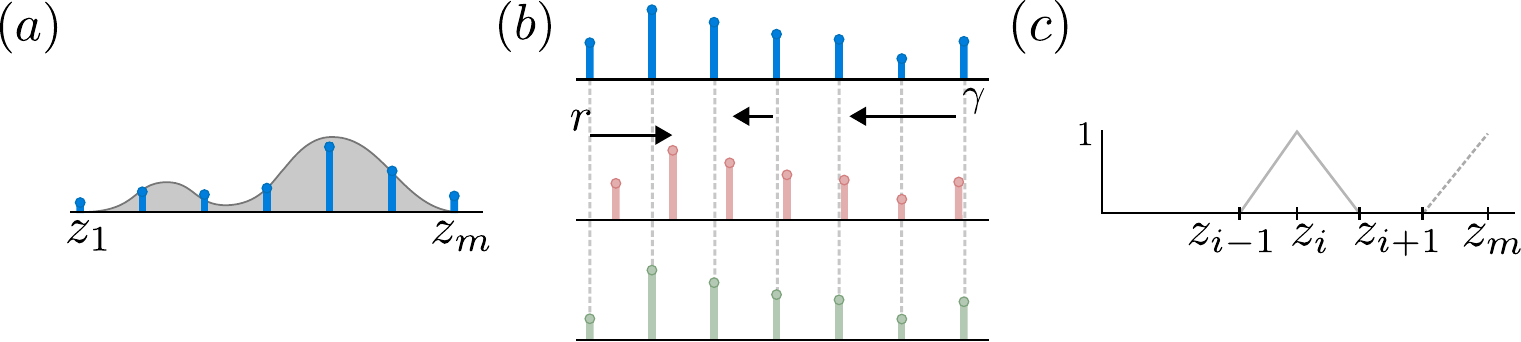}
    \caption{(a) The density of a distribution $\nu$ (grey), and its categorical projection $\Proj \nu \in \mathscr{P}(\{z_1,\ldots,z_m\})$ (blue). (b) A categorical distribution (blue); its update after being scaled by $\gamma$ and shifted by $r$ by the distributional Bellman operator $\mathcal{T}$, moving its support off the grid $\{z_1,\ldots,z_m\}$ (pink); the resulting realigned distribution supported on the grid $\{z_1,\ldots,z_m\}$ after projection via $\Proj$ (green). (c) Hat functions $h_i$ (solid) and $h_m$ (dashed). }
    \label{fig:cdrl}
\end{figure}
\textbf{Dynamic programming.}
The iteration $\eta \leftarrow \mathcal{T} \eta$ \emph{cannot} be used to update the parameters $p$ in Equation~\eqref{eq:cat-dist} directly, since the distributions $(\mathcal{T} \eta)(x)$ are no longer supported on $\{z_1,\ldots,z_m\}$, and so cannot be expressed in the form given in Equation~\eqref{eq:cat-dist}; see Figure~\ref{fig:cdrl}(b).
\citet{bellemare2017distributional} circumvent this issue by \emph{projecting} the resulting distributions back onto the support set $\{z_1,\ldots,z_m\}$ via a map $\Proj : \mathscr{P}([0, (1-\gamma)^{-1}]) \rightarrow \mathscr{P}(\{z_1,\ldots,z_m\}])$. Intuitively, $\Proj$ can be thought of as allocating each outcome $z \in [z_i, z_{i+1}]$ to its neighbouring gridpoints $z_i$ and $z_{i+1}$, in proportion to their proximity, so that the projection of the Dirac distribution $\delta_z$, is defined by
\begin{align*}
    \Proj \delta_{z} = \frac{z_{i+1} - z}{z_{i+1} - z_i} \delta_{z_i} + \frac{z - z_i}{z_{i+1} - z_i} \delta_{z_{i+1}} \, .
\end{align*}
In this paper, we work with the equivalent definition of the projection $\Proj$ given by \citet[Proposition~6]{rowland2018analysis}, in which the probability mass assigned to $z_i$ by $\Proj \nu$ is given by the expectation $\mathbb{E}_{Z \sim \nu}[h_i(Z)]$, where $h_i : [0, (1-\gamma)^{-1}] \rightarrow [0,1]$ is the ``hat function'' at $z_i$, which linearly interpolates between a value of $1$ at $z_i$, and 0 at neighbouring gridpoints $z_{i-1}$, $z_{i+1}$, and is 0 outside this range. Figure~\ref{fig:cdrl}(c) illustrates $h_i$ and $h_m$; see Appendix~\ref{sec:additional-notation-results} for a restatement of the full definition given by \citet{rowland2018analysis}.

The projected update $\eta \leftarrow \Pi_m \mathcal{T} \eta$ is thus guaranteed to keep $\eta$ in the space of approximations of the form given in Equation~\eqref{eq:cat-dist}, and can be viewed as a tractable alternative to the update $\eta \leftarrow \mathcal{T} \eta$ described above. 
\citet{rowland2018analysis} show that despite the introduction of the additional projection map $\Pi_m$, repeated computation of the update $\eta \leftarrow \Pi_m \mathcal{T} \eta$, referred to as \emph{categorical dynamic programming} (CDP), is guaranteed to convergence to a \emph{categorical fixed point}, and further, the categorical fixed point can be made arbitrarily close to the true RDF $\eta^*$ by increasing $m$, as measured by Cram\'er distance \citep{cramer1928composition,szekely2003statistics,szekely2013energy}. 

\begin{definition}
    The \emph{Cram\'er distance} $\ell_2 : \mathscr{P}(\mathbb{R}) \times \mathscr{P}(\mathbb{R}) \rightarrow \mathbb{R}$ is defined by
    \begin{align*}
        \ell_2(\nu, \nu') = \bigg[ \int_\mathbb{R} (F_\nu(t) - F_{\nu'}(t))^2 \; \mathrm{d}t  \bigg]^{1/2} \, ,
    \end{align*}
    where $F_{\nu}, F_{\nu'}$ are the CDFs of $\nu, \nu'$, respectively. The \emph{supremum-Cram\'er distance} $\overline{\ell}_2$ on $\mathscr{P}(\mathbb{R})^\mathcal{X}$ is defined by
    \begin{align*}
        \overline{\ell}_2(\eta, \eta') = \max_{x \in \mathcal{X}} \ell_2(\eta(x), \eta'(x)) \, .
    \end{align*}
\end{definition}

The central convergence results concerning CDP are summarised below.

\begin{proposition}\citep{rowland2018analysis}.\label{prop:cdrl}
    The operator $\Proj \mathcal{T} : \mathscr{P}([0,(1-\gamma)^{-1}])^\mathcal{X} \rightarrow \mathscr{P}([0,(1-\gamma)^{-1}])^\mathcal{X}$ is a contraction mapping with respect to $\overline{\ell}_2$, with contraction factor $\sqrt{\gamma}$, and has a unique fixed point, $\eta_\text{C} \in \mathscr{P}(\{z_1,\ldots,z_m\})^\mathcal{X}$. As a result, for any $\eta_0 \in \mathscr{P}([0,(1-\gamma)^{-1}])^\mathcal{X}$, with $\eta_{k+1} = \Proj \mathcal{T} \eta_k$, we have $\overline{\ell}_2(\eta_k, \eta_{\text{C}}) \leq (1 - \gamma)^{-1} \gamma^k$. Further, the distance between $\eta_\text{C}$ and the true RDF $\TrueRDF$ can be bounded as
    \begin{align}\label{eq:categorical-quality}
        \overline{\ell}_2(\eta_{\text{C}}, \TrueRDF) \leq \frac{1}{(1-\gamma)\sqrt{m-1}}
        \, .
    \end{align}
\end{proposition}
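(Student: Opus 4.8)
The plan is to reduce both claims to two structural facts about the Cramér geometry: that $\DistBellmanOp$ is a $\sqrt{\gamma}$-contraction in $\overline{\ell}_2$, and that $\Proj$ is a nonexpansion---indeed, the $L^2$-orthogonal projection onto grid-supported distributions with respect to the inner product on CDFs underlying $\ell_2$. Granting these, $\Proj\DistBellmanOp$ is a $\sqrt{\gamma}$-contraction on $(\mathscr{P}(\{z_1,\ldots,z_m\})^{\StateSpace},\overline{\ell}_2)$, which is a complete metric space, so Banach's fixed-point theorem yields the unique fixed point $\eta_{\mathrm{C}}$ and the estimate $\overline{\ell}_2(\eta_k,\eta_{\mathrm{C}}) \le \gamma^{k/2}\,\overline{\ell}_2(\eta_0,\eta_{\mathrm{C}})$. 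Bounding the initial distance by the $\overline{\ell}_2$-diameter of the space, $\overline{\ell}_2(\eta_0,\eta_{\mathrm{C}}) \le (1-\gamma)^{-1/2}$ (attained between Diracs at the endpoints of $[0,(1-\gamma)^{-1}]$), gives geometric convergence at rate $\sqrt{\gamma}$, matching the stated decay.

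To establish the two facts I would first record three elementary properties of $\ell_2$: writing $F$ for CDFs, $\ell_2^2$ is the squared $L^2$-distance of CDFs, so (i) it is translation-invariant; (ii) under the pushforward $z\mapsto\gamma z$, a change of variables multiplies $\ell_2^2$ by $\gamma$; and (iii) since CDFs are linear in the distribution and $\|\cdot\|_{L^2}^2$ is convex, $\ell_2^2$ is jointly convex under common mixtures. Since $(\DistBellmanOp\eta)(x)$ is the $\TransitionProbs(\cdot\mid x)$-mixture of the copies of $\eta(x')$ pushed forward by the affine map $z\mapsto r+\gamma z$, combining (i)--(iii) gives $\ell_2^2((\DistBellmanOp\eta)(x),(\DistBellmanOp\eta')(x)) \le \gamma\sum_{x'}\TransitionProbs(x'\mid x)\,\ell_2^2(\eta(x'),\eta'(x')) \le \gamma\,\overline{\ell}_2^2(\eta,\eta')$, the desired contraction. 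For $\Proj$, I would use the hat-function characterization to verify that $F_{\Proj\nu}-F_\nu$ integrates to zero over each grid cell $[z_i,z_{i+1}]$; as grid CDFs are exactly those constant on each cell interior (and pinned to $0$ and $1$ outside $[z_1,z_m]$), this says precisely that $\Proj$ is the $L^2$-orthogonal projection onto the affine set of grid CDFs, hence a nonexpansion.

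For the approximation bound I would exploit this orthogonality rather than a bare triangle inequality. Fix $x$ and set $A=\eta_{\mathrm{C}}(x)=(\Proj\DistBellmanOp\eta_{\mathrm{C}})(x)$, $B=(\Proj\DistBellmanOp\TrueRDF)(x)=(\Proj\TrueRDF)(x)$ and $C=\TrueRDF(x)=(\DistBellmanOp\TrueRDF)(x)$, using $\TrueRDF=\DistBellmanOp\TrueRDF$. Since $B$ is the orthogonal projection of $C$ and $A$ is grid-supported, the Pythagorean identity gives
\begin{align*}
    \ell_2^2(A,C) = \ell_2^2(A,B) + \ell_2^2(B,C) \, .
\end{align*}
The residual $\ell_2^2(B,C)=\ell_2^2((\Proj\TrueRDF)(x),\TrueRDF(x))$ is a one-step projection error: for a single Dirac $\delta_z$ with $z$ at fractional position $\alpha$ in a cell of width $\varsigma=(m-1)^{-1}(1-\gamma)^{-1}$ it equals $\varsigma\,\alpha(1-\alpha)\le\varsigma/4$, so mixture convexity yields $\ell_2^2(B,C)\le\varsigma/4$. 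The first term is controlled by nonexpansion of $\Proj$ and the squared contraction, $\ell_2^2(A,B)\le\ell_2^2((\DistBellmanOp\eta_{\mathrm{C}})(x),(\DistBellmanOp\TrueRDF)(x))\le\gamma\,\overline{\ell}_2^2(\eta_{\mathrm{C}},\TrueRDF)$. Maximizing over $x$ gives the self-bounding recursion $\overline{\ell}_2^2(\eta_{\mathrm{C}},\TrueRDF)\le\gamma\,\overline{\ell}_2^2(\eta_{\mathrm{C}},\TrueRDF)+\varsigma/4$, hence $\overline{\ell}_2^2(\eta_{\mathrm{C}},\TrueRDF)\le\varsigma/(4(1-\gamma))$, and on taking square roots $\overline{\ell}_2(\eta_{\mathrm{C}},\TrueRDF)\le \tfrac{1}{2(1-\gamma)\sqrt{m-1}}$, which is even slightly sharper than the stated bound.

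The main obstacle is the approximation bound: routing through the triangle inequality accumulates projection residuals geometrically in $\sqrt{\gamma}$ and inflates the dependence to $(1-\gamma)^{-3/2}$. The crux is recognizing that $\Proj$ is a genuine $L^2$-orthogonal projection in Cramér geometry, so the per-step residual is orthogonal to the grid subspace; this lets Pythagoras hold as an equality and turns the accumulation into a geometric series in $\gamma$ at the level of squared distances, producing the clean $(1-\gamma)^{-1}$ scaling.
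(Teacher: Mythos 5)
The paper does not actually prove this proposition --- it is imported from \citet{rowland2018analysis} --- so there is no in-paper argument to compare against step by step. Your reconstruction is correct and, as it happens, follows the same route as the original source: the $\sqrt{\gamma}$-contraction of $\DistBellmanOp$ in $\overline{\ell}_2$ via translation invariance, the $\gamma$-scaling of $\ell_2^2$ under the pushforward $z \mapsto \gamma z$, and joint convexity under common mixtures; the identification of $\Proj$ (acting on CDFs) with the $L^2$-orthogonal projection onto functions constant on each grid cell, a fact the present paper itself leans on in the proof of Lemma~\ref{lem:proj-anti-concentration} (where $F_{\Proj\nu}(z_i)$ is characterised as the cell average of $F_\nu$); and the Pythagorean identity $\ell_2^2(A,C)=\ell_2^2(A,B)+\ell_2^2(B,C)$, which is indeed the key to getting a geometric series in $\gamma$ at the level of squared distances and hence the $(1-\gamma)^{-1}$ rather than $(1-\gamma)^{-3/2}$ scaling. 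Your per-cell Dirac computation $\varsigma\,\alpha(1-\alpha)\le\varsigma/4$ and the resulting constant $\tfrac{1}{2(1-\gamma)\sqrt{m-1}}$ are consistent with (slightly sharper than) Equation~\eqref{eq:categorical-quality}.

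One point you should not gloss over: your iterate bound is $\overline{\ell}_2(\eta_k,\eta_{\mathrm{C}})\le\gamma^{k/2}(1-\gamma)^{-1/2}$, and this does \emph{not} ``match the stated decay'' $(1-\gamma)^{-1}\gamma^{k}$ --- neither bound dominates the other (their ratio is $\gamma^{k/2}(1-\gamma)^{-1/2}$, which crosses $1$ as $k$ grows). A $\sqrt{\gamma}$-contraction cannot deliver a $\gamma^{k}$ decay of the distance itself; what it delivers is $\overline{\ell}_2^{\,2}(\eta_k,\eta_{\mathrm{C}})\le\gamma^{k}(1-\gamma)^{-1}$, and the displayed rate in the proposition appears to conflate the distance with its square (the paper's own Appendix~\ref{sec:app-cdp} uses the $\gamma^{k/2}(1-\gamma)^{-1/2}$ form). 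Your derivation is the defensible one; just flag explicitly that it establishes the $\gamma^{k/2}$ rate rather than claiming it reproduces the $\gamma^{k}$ expression as written.
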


This establishes CDP as a principled approach to approximating return distributions, and also quantifies the accuracy achievable with CDP using $m$ categories, which will be central in informing our choice of $m$ to obtain a sample-efficient, accurate algorithm below.

\section{Distributional reinforcement learning with a generative model}
\label{sec:drl-generative-model}

The central problem we study in this paper is how to do sample-efficient distributional RL with a generative model. That is, given the samples $((X^x_{i})_{i=1}^N : x \in \mathcal{X})$ described in Section~\ref{sec:background-generative-model}, how accurate of an approximation to the return-distribution function in Equation~\eqref{eq:rdf} can one compute?

This question was raised by \citet{zhang2023estimation}, who proposed to perform distributional dynamic programming updates $\eta \leftarrow \hat{\mathcal{T}} \eta$ as described in Section~\ref{sec:background-drl}, using the \emph{empirical} distributional Bellman operator $\hat{\mathcal{T}}$ derived from the empirical transition probabilities $\hat{P}$, defined by $\hat{P}(y|x) = N^{-1} \sum_{i=1}^N \mathbbm{1}\{ X^x_i = y \}$, producing an estimate $\hat{\eta} \in \mathscr{P}([0, (1-\gamma)^{-1}])$ of the true RDF $\TrueRDF$ such that for any $\varepsilon > 0$ and $\delta \in (0, 1)$, we have $w_1(\hat{\eta}(x), \TrueRDF(x)) \leq \varepsilon$
with probability at least $1-\delta$ for all $x \in \mathcal{X}$, 
whenever $N = \widetilde{\Omega}(\varepsilon^{-2} (1-\gamma)^{-4} \text{polylog}(1/\delta))$. Here, $w_1$ denotes the Wasserstein-1 distance between probability distributions, defined for any $\nu, \nu' \in \mathscr{P}(\mathbb{R})$ with CDFs $F_\nu, F_{\nu'}$ by
$
    w_1(\nu, \nu') = \int_\mathbb{R} | F_\nu(t) - F_{\nu'}(t) | \; \mathrm{d}t \, .
$
We focus on the Wasserstein-1 distance as the main metric of interest in this paper as it is particularly compatible with existing methods for analysing categorical approaches to distributional RL, and it provides upper bounds for differences of many statistical functionals of interest, such as expectations of Lipschitz functions \citep{villani2009optimal,peyre2019computational}; and conditional-value-at-risk  \citep[CVaR]{rockafellar2000optimization,rockafellar2002conditional,bhat2019concentration}.
\citet{zhang2023estimation} also prove a lower bound of $N = \widetilde{\Omega}(\varepsilon^{-2} (1-\gamma)^{-3})$ samples required to obtain such an accurate prediction with high probability, 
which follows from a reduction to the mean-return case \citep{gheshlaghi2013minimax}.

There are two natural questions that the analysis of \citet{zhang2023estimation} leaves open. Firstly, can the gap between the lower bound and upper bound as a function of $(1-\gamma)^{-1}$ described above be closed? \citet{zhang2023estimation} conjecture that their analysis is loose, and that this gap can indeed be closed. Second, we also note that the distributional dynamic programming procedure $\eta \leftarrow \hat{\mathcal{T}} \eta$ proposed by \citet{zhang2023estimation}, without incorporating any restrictions on the representations of distributions, runs into severe space and memory issues, and is not practical to run (and indeed \citet{zhang2023estimation} introduce approximations to the algorithm when running empirically for these reasons). A remaining question is then whether there are tractable algorithms that can achieve the lower bound on sample complexity described above.
Our contributions below provide a new, tractable distributional RL algorithm that attains (up to logarithmic factors) the lower bound on sample complexity provided by \citet{zhang2023estimation},  resolving these questions.

\section{Direct categorical fixed-point computation}
\label{sec:dcfp-outer}

Our approach to obtaining a sample-efficient algorithm that is near-minimax-optimal in the sense described above begins with the categorical approach to distributional dynamic programming described in Section~\ref{sec:categorical}. We begin first by introducing a new algorithm for computing the categorical fixed point $\eta_{\text{C}}$ referred to in Proposition~\ref{prop:cdrl} \emph{directly}, that avoids computing an approximate solution via dynamic programming iterations $\eta \leftarrow \Pi_m \mathcal{T} \eta$. We expect this algorithm to be of independent interest within the field of distributional reinforcement learning.

\subsection{Direct categorical fixed-point computation}\label{sec:DCFP}

Our first contribution is to develop a new computational perspective on the projected categorical Bellman operator $\Proj \mathcal{T}$, which results in a new algorithm for computing the fixed point $\eta_\text{C}$ \emph{exactly}, without requiring the iterative CDP algorithm described in Section~\ref{sec:categorical}.

\textbf{CDF operator and fixed-point equation.} We first formulate the application of the projected categorical Bellman operator $\Proj \mathcal{T}$ as a linear map, and give an explicit expression for the matrix representing this linear map when the input RDFs are represented with cumulative distribution functions (CDFs). We consider the effect of applying $\Proj \mathcal{T}$ to an RDF $\eta = \mathscr{P}(\{z_1, \ldots, z_m\})^{\mathcal{X}}$, with $\eta(x) = \sum_{i=1}^m p_i(x) \delta_{z_i}$.
By \citet[Proposition~6]{rowland2018analysis}, the updated probabilities for $(\Proj \mathcal{T} \eta)(x) = \sum_{i=1}^m p'_i(x)\delta_{z_i}$ can be expressed as
\begin{align}\label{eq:cat-update-pmf2}
    p'_i(x) = \sum_{y \in \mathcal{X}} \sum_{j=1}^m P(y|x) h^x_{i,j} p_j(y) \, ,
\end{align}
where $h^x_{i,j} = h_i(r(x) + \gamma z_j)$. We convert this into an expression for cumulative probabilities, rather than individual probability masses, to obtain a simpler analysis below. To do so, we introduce the encoding of $\eta \in \mathscr{P}(\{z_1,\ldots,z_m\})^\mathcal{X}$ into corresponding CDF values $F \in \mathbb{R}^{\mathcal{X} \times m}$, where $F_i(x) = \eta(x)([z_1,z_i])$ denotes the \emph{cumulative} mass at state $x$ over the set $\{z_1, \ldots, z_i\}$.

\begin{restatable}{propositionx}{propCDFOp}\label{prop:CDFOp}
    If $\eta \in \mathscr{P}(\{z_1,\ldots,z_m\})^\mathcal{X}$ is an RDF with corresponding CDF values $F \in \mathbb{R}^{\mathcal{X} \times m}$, then the corresponding CDF values $F' \in \mathbb{R}^{\mathcal{X} \times m}$ for $\Proj \mathcal{T} \eta$ satisfy
    \begin{align}\label{eq:cat-update-cdf}
        F'_i(x) = \sum_{y \in \mathcal{X}} \sum_{j=1}^m P(y|x) (H^x_{i,j} - H^x_{i,j+1}) F_j(y) \, ,
    \end{align}
    where
    \begin{align}\label{eq:H-matrix}
        \textstyle
        H^x_{i,j} = \sum_{l \leq i} h_l(r(x) + \gamma z_j) 
    \end{align}
    for $j=1,\ldots,m$, and by convention we take $H^x_{i,m+1} = 0$. 
\end{restatable}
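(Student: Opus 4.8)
The plan is to start from the probability-mass update in Equation~\eqref{eq:cat-update-pmf2}, supplied by \citet[Proposition~6]{rowland2018analysis}, and convert it into the claimed cumulative form by first summing over the lowest $i$ gridpoints and then applying summation by parts in the support index.

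First I would write $F'_i(x) = \sum_{l \leq i} p'_l(x)$ and substitute Equation~\eqref{eq:cat-update-pmf2}, interchanging the (finite) orders of summation so that the sum over $l$ acts only on the hat-function values $h_l(r(x) + \gamma z_j)$. By the definition in Equation~\eqref{eq:H-matrix}, collecting this inner sum produces exactly $H^x_{i,j} = \sum_{l \leq i} h_l(r(x) + \gamma z_j)$, yielding the intermediate identity
\[
    F'_i(x) = \sum_{y \in \mathcal{X}} \sum_{j=1}^m P(y|x) \, H^x_{i,j} \, p_j(y) \, .
\]
This step is purely mechanical and requires nothing beyond reindexing finite sums.

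The second step is to replace the probability masses $p_j(y)$ by CDF increments. Since $F$ encodes the cumulative masses of $\eta$, we have $p_j(y) = F_j(y) - F_{j-1}(y)$ with the convention $F_0(y) = 0$. Substituting this into the inner sum and applying Abel summation (summation by parts) in $j$ rewrites $\sum_j H^x_{i,j}(F_j(y) - F_{j-1}(y))$ as $\sum_j (H^x_{i,j} - H^x_{i,j+1}) F_j(y)$, modulo boundary terms. The boundary term at the bottom end vanishes because $F_0(y) = 0$, while the boundary term at the top end is absorbed precisely by adopting the stated convention $H^x_{i,m+1} = 0$, so that the coefficient of $F_m(y)$ is $H^x_{i,m} - H^x_{i,m+1} = H^x_{i,m}$. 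Assembling these pieces gives Equation~\eqref{eq:cat-update-cdf}.

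I do not expect any genuine analytic obstacle; the whole argument is a two-line manipulation once the bookkeeping is in place. The only point demanding care is the summation by parts, where I would verify both boundary conditions explicitly: confirming that $F_0 \equiv 0$ kills the low-end term, and that the convention $H^x_{i,m+1} = 0$ is exactly what is needed to make the high-end term disappear and deliver the clean coefficient $H^x_{i,m}$. Everything else is interchange of finite sums, needing no justification beyond finiteness.
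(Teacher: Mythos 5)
Your proposal is correct and follows essentially the same route as the paper: sum the probability-mass update over $l \leq i$ to collect the $H^x_{i,j}$ terms, substitute $p_j(y) = F_j(y) - F_{j-1}(y)$ with $F_0 \equiv 0$, and rearrange by summation by parts, with the convention $H^x_{i,m+1}=0$ absorbing the top boundary term. The paper compresses the final Abel-summation step into ``reorganising the terms,'' which you spell out explicitly; otherwise the arguments coincide.
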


Under this notation, we can rewrite Equation~\eqref{eq:cat-update-cdf} simply as a matrix-vector multiplication in $\mathbb{R}^{\mathcal{X} \times [m]}$:
\begin{align*}
    F' = \CatOp F \, ,
\end{align*}
where $\CatOp$ is the $(\mathcal{X} \times [m]) \times (\mathcal{X} \times [m])$ square matrix, with entries given by
\begin{align}\label{eq:cat-matrix}
    \CatOp(x,i;y,j) = P(y|x)( H^x_{i,j} - H^x_{i,j+1} ) \, ,
\end{align}
and $F, F' \in \mathbb{R}^{\mathcal{X} \times m}$ above are interpreted in vectorised form. We drop dependence on $m$ from the notation $\CatOp$ for conciseness. 
Thus, CDP can be implemented via simple matrix multiplication on CDF values, and the CDF values $\TrueCatCDF$ for the categorical fixed point $\eta_{\text{C}}$ solve the equation
\begin{align}\label{eq:cdf-fp}
    F = \CatOp F \, , \text{ or equivalently } (I - \CatOp)F = 0 \, .
\end{align}

\textbf{Obtaining a system with unique solution. }
Equation~\eqref{eq:cdf-fp} suggests that we can directly solve a linear system to obtain the exact categorical fixed point, rather than performing the iterative CDP algorithm to obtain an approximation.
Note, however, that $\TrueCatCDF$ is not the \emph{unique} solution of Equation~\eqref{eq:cdf-fp}; for example, $F = 0$ is also a solution. This arises because in Equation~\eqref{eq:cdf-fp}, the \emph{distribution masses} at each state (that is, $F_m(x)$), are unconstrained.
By contrast, Proposition~\ref{prop:cdrl} establishes that $\eta_{\text{C}}$ is the unique solution of $\eta = \Proj \mathcal{T} \eta$ in the space $\mathscr{P}([z_1,z_m])^{\mathcal{X}}$, where each element $\eta(x)$ is constrained to be a probability distribution \emph{a priori}.
Thus, Equation~\eqref{eq:cdf-fp} requires some modification to obtain a linear system with a unique solution.
This is obtained by removing $F_m(x)$ as a variable from the system (for each $x \in \mathcal{X})$), replacing it by the constant 1, and removing redundant rows from the resulting linear system, as the following proposition describes; the ``axis-aligned'' nature of these constraints is the benefit of working with CDF values.

\begin{restatable}{propositionx}{propReducedSystem}\label{prop:ReducedSystem}
    The linear system in Equation~\eqref{eq:cdf-fp}, with the additional linear constraints $F_m(x) = 1$ for all $x \in \mathcal{X}$, is equivalent to the following linear system in $\widetilde{F} \in \mathbb{R}^{\mathcal{X} \times [m-1]}$:
    \begin{align}\label{eq:full-DCFP}
        (I - \widetilde{T}_P) \widetilde{F} = \widetilde{H} \, ,
    \end{align}
    where the $(x,i;y,j)$ coordinate of  $\widetilde{T}_P$ (for $1\leq i,j \leq m-1$) is
    \begin{align*}
        \widetilde{T}_P(x,i;y,j) = P(y|x) (H^x_{i,j} - H^x_{i,j+1}) \, ,
    \end{align*}
    and for each $x \in \mathcal{X}$, $1\leq i \leq m-1$, we have $\widetilde{H}(x, i) = H^x_{i,m}$.
\end{restatable}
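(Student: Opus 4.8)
The plan is to establish the equivalence by directly substituting the constraints $F_m(x) = 1$ into the full fixed-point system $F = \CatOp F$ from Equation~\eqref{eq:cdf-fp}, and then separately checking that the rows indexed by $i = m$ become redundant once these constraints are imposed. First I would write the $i$-th equation of the full system explicitly as
\[
    F_i(x) = \sum_{y \in \StateSpace} \sum_{j=1}^m \TransitionProbs(y|x)\big(H^x_{i,j} - H^x_{i,j+1}\big) F_j(y) \, ,
\]
and split the inner sum into the terms with $j \leq m-1$ and the single term $j = m$. For the latter, the coefficient is $H^x_{i,m} - H^x_{i,m+1} = H^x_{i,m}$ by the convention $H^x_{i,m+1} = 0$. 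Substituting $F_m(y) = 1$ and using that $H^x_{i,m}$ does not depend on $y$ together with stochasticity, $\sum_{y \in \StateSpace} \TransitionProbs(y|x) = 1$, this term collapses to $H^x_{i,m} = \widetilde{H}(x,i)$. Moving it to the right-hand side then yields, for each $i \in \{1,\ldots,m-1\}$, precisely the reduced equation with coefficients $\widetilde{T}_P(x,i;y,j)$ and inhomogeneous term $\widetilde{H}(x,i)$, i.e.\ $(I - \widetilde{T}_P)\widetilde{F} = \widetilde{H}$ upon setting $\widetilde{F}_j(y) = F_j(y)$.

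Next I would argue that the equations indexed by $i = m$ carry no information beyond the constraints themselves. The key observation is that the hat functions $(h_l)_{l=1}^m$ form a partition of unity on $[z_1, z_m] = [0,(1-\gamma)^{-1}]$, so that $H^x_{m,j} = \sum_{l=1}^m h_l(r(x) + \gamma z_j) = 1$ whenever $r(x) + \gamma z_j \in [z_1, z_m]$. This containment holds because $r(x) \in [0,1]$ and $z_j \in [0,(1-\gamma)^{-1}]$ give $r(x) + \gamma z_j \in [0, 1 + \gamma(1-\gamma)^{-1}] = [0,(1-\gamma)^{-1}]$. Consequently $H^x_{m,j} - H^x_{m,j+1} = 0$ for all $j \leq m-1$, while for $j = m$ it equals $1 - 0 = 1$, so the $i = m$ equation reduces to $F_m(x) = \sum_{y \in \StateSpace} \TransitionProbs(y|x) F_m(y)$. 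This is automatically satisfied once $F_m(y) = 1$ for all $y$, confirming that these rows are redundant and may be deleted.

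Combining these two steps gives both directions of the equivalence: any solution $F$ of the constrained full system restricts to a solution $\widetilde{F}$ of the reduced system via $\widetilde{F}_i(x) = F_i(x)$ for $i \leq m-1$, and conversely any solution $\widetilde{F}$ of the reduced system extends to a solution of the constrained full system by appending $F_m(x) \equiv 1$, since the $i \leq m-1$ equations then hold by reversing the substitution and the $i = m$ equations hold by the redundancy just established. I expect the only genuinely delicate point — the main obstacle — to be the verification that the $i = m$ rows are redundant, which relies on the partition-of-unity property of the hat functions together with the containment $r(x) + \gamma z_j \in [z_1, z_m]$ ensuring the full partition of unity is active; the remainder of the argument is a routine rearrangement that exploits the axis-aligned form of the constraints afforded by working with CDF values.
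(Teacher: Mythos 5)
Your proposal is correct and follows essentially the same route as the paper's proof: substitute $F_m(y)\equiv 1$ into the rows with $i\le m-1$ and collapse the $j=m$ term via stochasticity of $P(\cdot|x)$, then discard the $i=m$ rows as redundant using the partition-of-unity property of the hat functions (which gives $H^x_{m,j}\equiv 1$). Your additional checks — the containment $r(x)+\gamma z_j\in[z_1,z_m]$ and the explicit verification of both directions of the equivalence — are welcome details but do not change the argument.
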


Having moved to the inhomogeneous system over $\mathbb{R}^{\mathcal{X} \times [m-1]}$ in Equation~\eqref{eq:full-DCFP}, we can deduce the following via the contraction theory in Proposition~\ref{prop:cdrl}.

\begin{restatable}{propositionx}{propUniqueSoln}\label{prop:uniqueSoln}
    The linear system in Equation~\eqref{eq:full-DCFP} has a unique solution, which is precisely the CDF values $((\TrueCatCDF_i(x))_{i=1}^{m-1} : x \in \mathcal{X})$ of the categorical fixed point.
\end{restatable}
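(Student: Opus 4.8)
The plan is to establish the two claims of the proposition separately: that $\tildeTrueCatCDF := ((\TrueCatCDF_i(x))_{i=1}^{m-1} : x \in \StateSpace)$ solves Equation~\eqref{eq:full-DCFP}, and that it is the only solution. Existence is immediate from the results already in hand. By Proposition~\ref{prop:cdrl} the categorical fixed point $\eta_{\text{C}} \in \mathscr{P}(\{z_1,\ldots,z_m\})^{\StateSpace}$ exists and satisfies $\eta_{\text{C}} = \Proj\mathcal{T}\eta_{\text{C}}$, so by Proposition~\ref{prop:CDFOp} its CDF values $\TrueCatCDF$ solve Equation~\eqref{eq:cdf-fp}; since each $\eta_{\text{C}}(x)$ is a probability distribution we also have $\TrueCatCDF_m(x) = 1$ for all $x$, so $\TrueCatCDF$ satisfies the hypotheses of Proposition~\ref{prop:ReducedSystem}, and hence $\tildeTrueCatCDF$ solves Equation~\eqref{eq:full-DCFP}. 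The substance of the proposition is therefore uniqueness, which I would obtain by showing that $I - \widetilde{T}_P$ is invertible; equivalently, that $\widetilde{T}_P$ has operator norm strictly less than $1$ in a suitable norm, so that any solution $v$ of the homogeneous equation $v = \widetilde{T}_P v$ must vanish.

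The norm I would use is the one induced by the supremum-Cram\'er distance in truncated CDF coordinates. For two grid distributions $\nu, \nu'$ with CDF vectors $F, F'$, the difference of CDFs is a step function equal to $F_i - F'_i$ on $[z_i, z_{i+1})$ and vanishing beyond $z_m$ (as both are probability distributions, so $F_m = F'_m = 1$); integrating gives $\ell_2(\nu,\nu')^2 = \Delta \sum_{i=1}^{m-1}(F_i - F'_i)^2$, where $\Delta = z_{i+1}-z_i$ is the common grid spacing. Defining $\|v\| = \max_{x \in \StateSpace} (\sum_{i=1}^{m-1} v_i(x)^2)^{1/2}$ on $\mathbb{R}^{\StateSpace \times [m-1]}$, this yields the exact identity $\overline{\ell}_2(\eta,\eta') = \sqrt{\Delta}\,\|\widetilde{F}_\eta - \widetilde{F}_{\eta'}\|$ for grid RDFs $\eta, \eta'$ with truncated CDF values $\widetilde{F}_\eta, \widetilde{F}_{\eta'}$.

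With this identity in place, I would transport the contraction property of Proposition~\ref{prop:cdrl} into truncated CDF coordinates. The computation underlying Proposition~\ref{prop:ReducedSystem} shows that, for any grid RDF $\eta$ (whose top CDF value is pinned to $1$), applying $\Proj\mathcal{T}$ and truncating acts as the affine map $\widetilde{F}_\eta \mapsto \widetilde{T}_P \widetilde{F}_\eta + \widetilde{H}$. Taking two grid RDFs, the inhomogeneous term $\widetilde{H}$ cancels in the difference, so combining the $\sqrt{\gamma}$-contraction of $\Proj\mathcal{T}$ in $\overline{\ell}_2$ with the identity above (the $\sqrt{\Delta}$ factors cancel) gives $\|\widetilde{T}_P(u - u')\| \le \sqrt{\gamma}\,\|u - u'\|$ for all truncated CDF vectors $u, u'$ of grid distributions. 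The final step is to upgrade this to $\|\widetilde{T}_P v\| \le \sqrt{\gamma}\|v\|$ for \emph{every} $v \in \mathbb{R}^{\StateSpace \times [m-1]}$: since valid truncated CDF vectors are exactly the monotone sequences in $[0,1]^{m-1}$ at each state, this set has nonempty interior, so the differences $u - u'$ cover a ball around the origin, and the bound extends to all $v$ by positive homogeneity. Then $\|\widetilde{T}_P\| \le \sqrt{\gamma} < 1$, so $I - \widetilde{T}_P$ is invertible (its inverse is the convergent Neumann series $\sum_{k\ge 0}\widetilde{T}_P^k$), forcing any solution of $v = \widetilde{T}_P v$ to be zero, and hence Equation~\eqref{eq:full-DCFP} to have the unique solution $\tildeTrueCatCDF$.

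I expect the main obstacle to be this last extension from differences of genuine CDF vectors to arbitrary vectors: the contraction in Proposition~\ref{prop:cdrl} is a statement about probability distributions, whereas invertibility of $I - \widetilde{T}_P$ is a statement about the whole vector space, and the two are bridged precisely by the fact that the set of valid CDF vectors is full-dimensional together with the linearity of $\widetilde{T}_P$. All remaining steps — the Cram\'er-to-Euclidean identity and the cancellation of $\widetilde{H}$ — are routine given the explicit formulas already derived.
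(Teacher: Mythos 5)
Your proof is correct and follows essentially the same route as the paper: existence via Proposition~\ref{prop:ReducedSystem}, and uniqueness by transporting the $\sqrt{\gamma}$-contraction of $\Proj\mathcal{T}$ (Proposition~\ref{prop:cdrl}) from differences of genuine CDF vectors to the whole truncated coordinate space so that the homogeneous equation forces $v=0$ --- the paper packages this as Proposition~\ref{prop:T-contraction} and derives a contradiction from two distinct fixed points rather than invoking the Neumann series, but the substance is identical. If anything, your extension step (the set of differences of valid CDF vectors contains a ball around the origin in the subspace, then scale by positive homogeneity of the norm) is the more careful justification: the paper's appeal to that set merely ``containing a basis'' does not by itself transfer a norm inequality to arbitrary linear combinations, and your argument is the standard way to close that gap.
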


The \emph{direct categorical fixed-point algorithm} (DCFP) consists of solving the linear system in Equation~\eqref{eq:full-DCFP} to obtain the exact categorical fixed point; see Algorithm~\ref{alg:DCFP} for a summary.

\begin{algorithm}[H]
    \caption{The direct categorical fixed-point algorithm (DCFP).}
    \label{alg:DCFP}
        \ \ \nl Calculate matrices $(H^x : x \in \mathcal{X})$ via Equation~\eqref{eq:H-matrix}.\\
        \ \ \nl Calculate matrix $\widetilde{T}_P$ and vector $\widetilde{H}$ via Equation~\eqref{eq:cat-matrix}.\label{algline:T} \\
        \ \ \nl Call linear system solver on Equation~\eqref{eq:full-DCFP}.\\
        \ \ \nl Obtain resulting solution $\tildeTrueCatCDF \in \mathbb{R}^{\mathcal{X} \times [m-1]}$. \\
        \ \ \nl Return $\TrueCatCDF$, obtained by appending the values $\TrueCatCDF_m(x) = 1$ to the solution $\tildeTrueCatCDF$. 
\end{algorithm}

\textbf{Complexity and implementation details.} Representing $\widetilde{T}_P$ as a dense matrix requires $O(|\mathcal{X}|^2 m^2)$ space, and solving the corresponding linear system in Equation~\eqref{eq:full-DCFP} with a standard linear solver requires $O(|\mathcal{X}|^3 m^3)$ time. However, in many problems there are cases where the DCFP algorithm can be implemented more efficiently. Crucially, $\widetilde{T}_{P}$ often has sparse structure, and so sparse linear solvers may afford an opportunity to make substantial improvements in computational efficiency. We explore this point further empirically in Section~\ref{sec:experiments}, and give a theoretical perspective in Appendix~\ref{sec:app-experiments}.

\subsection{DCFP with a generative model}

We now return to the setting where the Markov reward process in which we are performing evaluation is unknown, and instead we have access to the random next-state samples $((X^x_i)_{i=1}^N : x \in \mathcal{X})$, as described in Section~\ref{sec:drl-generative-model}.
Our model-based DCFP algorithm proceeds by first constructing the corresponding empirical transition probabilities $\hat{P}$, so that $\hat{P}(y|x) = N^{-1} \sum_{i=1}^N \mathbbm{1}\{ X^x_i = y \}$, and then calls the DCFP procedure outlined in Algorithm~\ref{alg:DCFP}, treating $\hat{P}$ as the true transition probabilities of the MRP when constructing the matrix in Line~\ref{algline:T}, which we denote here by $\EmpCatOp$, to reflect the fact that it is built from $\hat{P}$. This produces the output CDF values $\EmpCatCDF$, from which estimated return distributions can be decoded (with the convention $\hat{F}_0(x) = 0$) as
\begin{align}\label{eq:decode}
    \hat{\eta}(x) = \sum_{i=1}^m (\hat{F}_{i}(x) - \hat{F}_{i-1}(x)) \delta_{z_i} \, .
\end{align}

\section{Sample complexity analysis}
\label{sec:analysis}

Our goal now is to analyse the sample complexity of model-based DCFP, as described in the previous section. We first introduce a notational shorthand that will be of extensive use in the statement and proof of this result: when writing distances between distributions, such as $w_1(\eta^*(x), \hat{F}(x))$, we identify CDF vectors  $\hat{F}(x) \in \mathbb{R}^{m}$ with the distributions they represent as in Equation~\eqref{eq:decode}.
The core theoretical result of the paper is as follows.

\begin{restatable}{theorem}{thmWasserstein}\label{thm:wasserstein}
    Let $\varepsilon \in (0, (1-\gamma)^{-1/2})$ and $\delta \in (0, 1)$, 
    and suppose the number of categories satisfies $m \geq 4 (1-\gamma)^{-2} \varepsilon^{-2} + 1$. Then the output $\EmpCatCDF$ of model-based DCFP with $N = \Omega(\varepsilon^{-2} (1-\gamma)^{-3} \mathrm{polylog}(|\mathcal{X}|/\delta))$ samples satisfies,  with probability at least $1-\delta$, 
    \begin{align*}
        \max_{x \in \mathcal{X}} w_1(\TrueRDF(x), \EmpCatCDF(x)) \leq \varepsilon \, .
    \end{align*}
\end{restatable}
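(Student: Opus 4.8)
The plan is to decompose the error into an \emph{approximation} term and an \emph{estimation} term via the triangle inequality:
\begin{align*}
    w_1(\TrueRDF(x), \EmpCatCDF(x)) \leq w_1(\TrueRDF(x), \eta_{\text{C}}(x)) + w_1(\eta_{\text{C}}(x), \EmpCatCDF(x)) \, ,
\end{align*}
where $\eta_{\text{C}}$ is the exact categorical fixed point (computed with the true $P$) from Proposition~\ref{prop:cdrl}, and $\EmpCatCDF$ is the output of DCFP run on $\hat{P}$. The first term measures how well the $m$-category representation can approximate the true RDF; the second measures the statistical error from using empirical transition probabilities. For the approximation term, I would invoke the Cram\'er bound \eqref{eq:categorical-quality}, namely $\overline{\ell}_2(\eta_{\text{C}}, \TrueRDF) \leq \frac{1}{(1-\gamma)\sqrt{m-1}}$, and convert it to a Wasserstein-1 bound. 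Since the returns live in $[0,(1-\gamma)^{-1}]$, a bounded interval of length $(1-\gamma)^{-1}$, the inequality $w_1(\nu,\nu') \leq \sqrt{(1-\gamma)^{-1}}\,\ell_2(\nu,\nu')$ follows from Cauchy--Schwarz applied to $\int |F_\nu - F_{\nu'}|$. Substituting the stated choice $m \geq 4(1-\gamma)^{-2}\varepsilon^{-2}+1$ should drive this term below $\varepsilon/2$.

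The estimation term is where the real work lies, and where I expect the main obstacle. The two CDF vectors $\TrueCatCDF$ and $\EmpCatCDF$ solve the linear systems $(I-\CatOp)F = b$ and $(I-\EmpCatOp)\hat F = \hat b$ respectively (in the reduced form of Proposition~\ref{prop:ReducedSystem}), so I would write a perturbation bound of the form
\begin{align*}
    \TrueCatCDF - \EmpCatCDF = (I-\CatOp)^{-1}\big[(\EmpCatOp - \CatOp)\EmpCatCDF + (\hat b - b)\big] \, ,
\end{align*}
and control each factor. The operator $(I-\CatOp)^{-1}$ is bounded because $\CatOp$ is a $\sqrt{\gamma}$-contraction in an appropriate norm (Proposition~\ref{prop:cdrl}), giving a factor of order $(1-\sqrt{\gamma})^{-1} \approx (1-\gamma)^{-1}$; the perturbation $\EmpCatOp - \CatOp$ depends \emph{only} on $\hat P - P$ since the hat-function matrices $H^x$ are identical for both systems. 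This is the crucial structural feature: the randomness enters linearly and solely through $\hat P - P$, so the categorical projection itself contributes no extra statistical noise.

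To convert this into the sample bound $N = \Omega(\varepsilon^{-2}(1-\gamma)^{-3}\,\mathrm{polylog})$, the naive route—bounding $\|\hat P - P\|$ entrywise by Hoeffding and propagating through the matrix inverse—will lose a factor of $(1-\gamma)^{-1}$ and only recover the suboptimal $(1-\gamma)^{-4}$ rate of \citet{zhang2023estimation}. The key to attaining the optimal $(1-\gamma)^{-3}$ is a \emph{variance-aware} analysis analogous to the total-variance technique in value-function estimation \citep{gheshlaghi2013minimax}: rather than bounding each $(\hat P - P)$ term crudely, I would exploit a distributional Bellman-type recursion for the variance of the error (this is precisely where the newly-introduced stochastic categorical CDF Bellman equation of Section~\ref{sec:sc-cdf-bellman} should be brought to bear), summing per-step fluctuations so that the effective horizon appears as $(1-\gamma)^{-3}$ rather than $(1-\gamma)^{-4}$. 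Concentration (Bernstein or a self-bounded/empirical-Bernstein argument, together with a union bound over the $|\mathcal{X}|$ states to produce the $\mathrm{polylog}(|\mathcal{X}|/\delta)$ factor) then yields that the estimation term is below $\varepsilon/2$ with probability at least $1-\delta$. Combining the two halves gives the claimed $w_1 \leq \varepsilon$ bound. The main obstacle is this variance-aware step: getting the horizon dependence right requires carefully tracking how the error variance propagates through $(I-\CatOp)^{-1}$ rather than using the loose operator-norm bound.
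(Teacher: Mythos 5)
Your overall strategy matches the paper's: split into approximation and estimation error, bound the approximation term via the categorical fixed-point quality bound and a Cram\'er-to-Wasserstein conversion (your Cauchy--Schwarz is the paper's Jensen step, giving exactly $w_1 \leq (1-\gamma)^{-1/2}\ell_2$), and handle the estimation term with a resolvent expansion plus a variance-aware Bernstein analysis driven by the stochastic categorical CDF Bellman equation. Two points need attention, one of which is a genuine gap in the step you flag as ``where the real work lies.''

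First, your perturbation identity puts the randomness in the wrong place. You write $\TrueCatCDF - \EmpCatCDF = (I-\CatOp)^{-1}(\EmpCatOp - \CatOp)\EmpCatCDF$ (the $\hat b - b$ term vanishes since $\widetilde{H}$ depends only on rewards), i.e.\ you invert the \emph{true} operator and apply the perturbation to the \emph{empirical} fixed point. The paper instead adds and subtracts $\EmpCatOp \TrueCatCDF$ to obtain $\EmpCatCDF - \TrueCatCDF = (I-\EmpCatOp)^{-1}(\EmpCatOp - \CatOp)\TrueCatCDF$: the perturbation acts on the \emph{deterministic} $\TrueCatCDF$, so $[(\EmpCatOp-\CatOp)\TrueCatCDF](x)$ is an average of $N$ i.i.d.\ mean-zero terms and Bernstein applies directly. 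With your version, $(\EmpCatOp - \CatOp)\EmpCatCDF$ couples the sampling noise to the sample-dependent $\EmpCatCDF$, and a clean Bernstein bound is no longer available; in the mean-return literature this is exactly the obstruction that forces the leave-one-out or absorbing-MDP machinery of \citet{agarwal2020model}. The fix is just to swap which cross term you add and subtract, but as written the concentration step would fail.

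Second, a coordinatewise Bernstein bound on the $m$ entries of $(\EmpCatOp-\CatOp)\TrueCatCDF(x)$ followed by a union bound over atoms costs a $\log m$ factor; since $m = \Theta(\varepsilon^{-2}(1-\gamma)^{-2})$, this pollutes the sample complexity with $\mathrm{polylog}(1/\varepsilon, 1/(1-\gamma))$ terms beyond the stated $\mathrm{polylog}(|\mathcal{X}|/\delta)$. The paper avoids this by interpreting $\ell_2$ as a maximum mean discrepancy and applying a Bernstein inequality for Hilbert-space-valued random variables \citep{chatalic2022nystrom}, concentrating the whole CDF vector at once; the union bound is then only over $|\mathcal{X}|$. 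Your identification of the total-variance mechanism --- local squared-Cram\'er variation propagated through $(I-\gamma\hat{P})^{-1}$ and bounded via the SC-CDF Bellman fixed point, with Jensen converting $\|(I-\sqrt{\gamma}\hat{P})^{-1}\sqrt{\hat{\sigma}}\|_\infty$ into $\sqrt{\|(I-\gamma\hat{P})^{-1}\hat{\sigma}\|_\infty}$ --- is exactly the paper's route to the $(1-\gamma)^{-3}$ rate, so with these two repairs your outline becomes the paper's proof.
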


This result establishes that model-based DCFP attains the minimax lower-bound (up to logarithmic factors) for high-probability return distribution estimation in Wasserstein distance, resolving an open question raised by \citet{zhang2023estimation}; in a certain sense, \emph{estimation of return distributions is no more statistically difficult than that of mean returns with a generative model}.
Note there is no direct dependence of $N$ on $m$, so there is no \emph{statistical} penalty to using a large number of categories $m$.

\textbf{Extensions.} We note that this core result can be straightforwardly extended in several directions. In particular, similar bounds apply in the case of predicting returns for \emph{learnt near-optimal policies} in MDPs (see Section~\ref{sec:app-optimisation}), for the \emph{iterative categorical DP algorithm} in place of DCFP (when using sufficiently many DP updates; see Section~\ref{sec:app-cdp}), and in the case of \emph{stochastic rewards} (see Section~\ref{sec:app-stoc-rewards}).

\subsection{Structure of the proof of Theorem~\ref{thm:wasserstein}}

The remainder of this section provides a sketch proof of Theorem~\ref{thm:wasserstein}; a complete proof is provided in the appendix.
The proof is broadly motivated by the approaches of \citet{gheshlaghi2013minimax}, \citet{agarwal2020model}, and \citet{pananjady2020instance}, who analyse the mean-return case, and we highlight where key ideas and new mathematical objects are required in this distributional setting. In particular, we highlight the use of the \emph{stochastic categorical CDF Bellman equation}, a new distributional Bellman equation that plays a key role in our analysis, which we expect to be of independent interest.

\textbf{Reduction to Cram\'er distance. }
The first step of the analysis is to reduce Theorem~\ref{thm:wasserstein} to a statement about approximation in Cram\'er distance, which is much better suited to the analysis of DCFP, owing to the results described in Section~\ref{sec:categorical}. This can be done by upper-bounding Wasserstein distance by Cram\'er distance using the following result, which is proven in the appendix via Jensen's inequality.

\begin{restatable}{lemmax}{lemmaWassersteinToCramer}\label{lem:wasserstein-to-cramer}
    For any two distributions $\nu, \nu' \in \mathscr{P}([0, (1 - \gamma)^{-1}])$, we have
    \begin{align*}
        w_1(\nu, \nu') \leq (1-\gamma)^{-1/2} \ell_2(\nu, \nu') \, .
    \end{align*}
\end{restatable}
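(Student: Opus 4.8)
The plan is to exploit the fact that both distributions are supported on the compact interval $[0,(1-\gamma)^{-1}]$ --- which is the only place the finite factor $(1-\gamma)^{-1/2}$ can come from --- and then apply a single convexity step.

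First I would set $g(t) = F_\nu(t) - F_{\nu'}(t)$, so that $w_1(\nu,\nu') = \int_\mathbb{R} |g(t)| \, \mathrm{d}t$ and $\ell_2(\nu,\nu')^2 = \int_\mathbb{R} g(t)^2 \, \mathrm{d}t$. Since $\nu$ and $\nu'$ are both supported on $[0,(1-\gamma)^{-1}]$, their CDFs agree off this interval (both equal $0$ to the left of $0$, and both equal $1$ at and beyond $(1-\gamma)^{-1}$), so $g$ vanishes outside $[0,(1-\gamma)^{-1}]$. Consequently both integrals above may be restricted to $[0,(1-\gamma)^{-1}]$, an interval of finite Lebesgue measure $(1-\gamma)^{-1}$.

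Next I would introduce the uniform probability measure $\mathrm{d}\mu = (1-\gamma)\,\mathrm{d}t$ on $[0,(1-\gamma)^{-1}]$ and apply Jensen's inequality with the convex map $u \mapsto u^2$ to $|g|$ under $\mu$, giving $\bigl(\int |g| \, \mathrm{d}\mu\bigr)^2 \le \int g^2 \, \mathrm{d}\mu$. Unwinding the normalisation via $\int |g| \, \mathrm{d}\mu = (1-\gamma)\, w_1(\nu,\nu')$ and $\int g^2 \, \mathrm{d}\mu = (1-\gamma)\, \ell_2(\nu,\nu')^2$, this rearranges to $(1-\gamma)^2\, w_1(\nu,\nu')^2 \le (1-\gamma)\, \ell_2(\nu,\nu')^2$, and taking square roots yields the claim. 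Equivalently, one may apply Cauchy--Schwarz directly to $\int_0^{(1-\gamma)^{-1}} |g(t)| \cdot 1 \, \mathrm{d}t$, where the factor $(1-\gamma)^{-1/2}$ appears as the $L^2$ norm of the constant function $1$ on the interval.

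There is no substantial obstacle here; the only point requiring care is the first step, namely that the common bounded support is exactly what makes the inequality non-vacuous. On an unbounded domain the $L^1$ norm of $g$ cannot be controlled by its $L^2$ norm, so the finite length $(1-\gamma)^{-1}$ of the support is precisely the source of the stated constant.
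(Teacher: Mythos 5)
Your proof is correct and follows essentially the same route as the paper's: both restrict the integrals to $[0,(1-\gamma)^{-1}]$, renormalise by $(1-\gamma)$ to view the $w_1$ integral as an expectation under the uniform distribution, and apply Jensen's inequality with $z \mapsto z^2$. The Cauchy--Schwarz phrasing you mention is an equivalent repackaging of the same step.
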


Theorem~\ref{thm:wasserstein} is now reducible to the following, stated in terms of the Cram\'er distance.

\begin{restatable}{theorem}{thmCramer}\label{thm:cramer}
    Let $\varepsilon \in (0, 1)$ and $\delta \in (0, 1)$, and suppose the number of categories satisfies $m \geq 4 (1-\gamma)^{-2} \varepsilon^{-2} + 1$. Then the output $\EmpCatCDF$ of model-based DCFP with $N = \Omega(\varepsilon^{-2} (1-\gamma)^{-2} \mathrm{polylog}(|\mathcal{X}|/\delta))$ samples satisfies, with probability at least $1-\delta$, 
    \begin{align}\label{eq:cramer-error}
        \max_{x \in \mathcal{X}} \ell_2(\eta^*(x), \EmpCatCDF(x)) \leq \varepsilon \, .
    \end{align}
\end{restatable}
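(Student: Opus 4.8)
The plan is to split the error into an \emph{approximation} part and an \emph{estimation} part via the triangle inequality for $\overline{\ell}_2$:
$$\overline{\ell}_2(\TrueRDF, \EmpCatCDF) \leq \overline{\ell}_2(\TrueRDF, \eta_{\text{C}}) + \overline{\ell}_2(\eta_{\text{C}}, \EmpCatCDF),$$
where $\eta_{\text{C}}$ is the deterministic categorical fixed point of $\Proj\DistBellmanOp$ built from the true $P$, and $\EmpCatCDF$ is the output of model-based DCFP, i.e. the categorical fixed point built from $\hat{P}$. The first term is controlled immediately by Proposition~\ref{prop:cdrl}: since $m \geq 4(1-\gamma)^{-2}\varepsilon^{-2}+1$ we have $((1-\gamma)\sqrt{m-1})^{-1} \leq \varepsilon/2$, so it suffices to show the estimation term is at most $\varepsilon/2$ with probability $1-\delta$ once $N = \Omega(\varepsilon^{-2}(1-\gamma)^{-2}\mathrm{polylog}(|\mathcal{X}|/\delta))$.

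For the estimation term I would first record that, because $\eta_{\text{C}}$ and the distribution encoded by $\EmpCatCDF$ are both supported on the grid, $\overline{\ell}_2(\eta_{\text{C}}, \EmpCatCDF)$ equals a weighted Euclidean norm $\|\cdot\|_w$ of the CDF difference, namely $\max_x \sqrt{\Delta}\,\|\tildeTrueCatCDF(x) - \EmpCatCDF(x)\|_2$ with $\Delta = (1-\gamma)^{-1}/(m-1)$; here the $m$-th coordinate drops out since both CDFs terminate at $1$, which is exactly why the reduced system of Proposition~\ref{prop:ReducedSystem} is the natural object. Using that $\tildeTrueCatCDF$ and $\EmpCatCDF$ solve $(I - \widetilde{T}_P)\tildeTrueCatCDF = \widetilde{H}$ and $(I - \widetilde{T}_{\hat{P}})\EmpCatCDF = \widetilde{H}$ with the \emph{same} inhomogeneity $\widetilde{H}$ (it depends only on the rewards and grid, not on the transitions), subtracting yields the perturbation identity
$$\tildeTrueCatCDF - \EmpCatCDF = (I - \widetilde{T}_P)^{-1}(\widetilde{T}_P - \widetilde{T}_{\hat{P}})\EmpCatCDF.$$
The $\sqrt{\gamma}$-contraction of $\Proj\DistBellmanOp$ from Proposition~\ref{prop:cdrl} translates into the resolvent bound $\|(I - \widetilde{T}_P)^{-1}\|_w = O((1-\gamma)^{-1})$, the distributional analogue of $\|(I-\gamma P)^{-1}\|_\infty = (1-\gamma)^{-1}$ from the mean-return analyses.

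Following the structure of \citet{gheshlaghi2013minimax}, \citet{agarwal2020model} and \citet{pananjady2020instance}, I would decouple the random $\EmpCatCDF$ from the noise matrix by writing $(\widetilde{T}_P - \widetilde{T}_{\hat{P}})\EmpCatCDF = (\widetilde{T}_P - \widetilde{T}_{\hat{P}})\tildeTrueCatCDF + (\widetilde{T}_P - \widetilde{T}_{\hat{P}})(\EmpCatCDF - \tildeTrueCatCDF)$; the second, higher-order term is absorbed into the left-hand side by a self-bounding argument exploiting the smallness of $\widetilde{T}_P - \widetilde{T}_{\hat{P}}$. The first term is, coordinatewise, a centred empirical average $\mathbb{E}_{Y \sim P(\cdot|x)}[G_i(x,Y)] - N^{-1}\sum_{k} G_i(x, X^x_k)$ of the $[0,1]$-valued CDF values $G_i(x,y)$ of the projected bootstrap of $\eta_{\text{C}}(y)$, to which a Bernstein bound applies with variance proxy $\sigma^2_{\mathrm C}(x) = \Delta\sum_i \mathrm{Var}_{Y}(G_i(x,Y))$, the expected squared Cram\'er distance of the next-state distributions to their mean.

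The crux, and the step I expect to be the main obstacle, is the \emph{distributional total-variance argument}. A naive bound replaces $\sigma^2_{\mathrm C}$ by its worst case $O((1-\gamma)^{-1})$ and multiplies by the resolvent norm, giving $\overline{\ell}_2(\eta_{\text{C}},\EmpCatCDF) = O((1-\gamma)^{-3/2}\sqrt{\log/N})$, which is exactly the loose $(1-\gamma)^{-4}$ Wasserstein rate of \citet{zhang2023estimation}. To save the extra half power and reach $O((1-\gamma)^{-1}\sqrt{\log/N})$, I would instead propagate the local Cram\'er-variances through the resolvent (using an absolute-value domination so that $|(I-\widetilde{T}_P)^{-1}b|$ is controlled by a nonnegative resolvent applied to $|b|$) and prove $\|(I-\widetilde{T}_P)^{-1}\sigma_{\mathrm C}\|_w = O((1-\gamma)^{-1})$ via a Cauchy--Schwarz split and a law-of-total-(Cram\'er)-variance bound showing that the discounted sum of local Cram\'er-variances telescopes into the total Cram\'er-variance of the return, itself $O((1-\gamma)^{-1})$. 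This is precisely where the stochastic categorical CDF Bellman equation enters: it equips $\TrueCatCDF$ (equivalently $\eta_{\text{C}}$) with the recursive stochastic structure needed to carry out this variance decomposition cleanly, in the absence of a classical variance Bellman equation at the level of distributions. Combining the resolvent and variance bounds gives $\overline{\ell}_2(\eta_{\text{C}},\EmpCatCDF) = O((1-\gamma)^{-1}\sqrt{\mathrm{polylog}(|\mathcal{X}|/\delta)/N}) \leq \varepsilon/2$ for $N$ as claimed, which, after a union bound over $x \in \mathcal{X}$, completes the proof.
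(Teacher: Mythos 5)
Your plan is essentially the paper's: the same triangle-inequality split with Proposition~\ref{prop:cdrl} disposing of the approximation term, a resolvent/perturbation identity for the estimation term, a Bernstein bound whose variance proxy is the local squared-Cram\'er variation, and a law-of-total-variance argument (via the stochastic categorical CDF Bellman equation) to show the resolvent applied to those local variances is $O((1-\gamma)^{-1})$. The one structural difference is the direction of the expansion: you write $\tildeTrueCatCDF - \EmpCatCDF = (I-\widetilde{T}_P)^{-1}(\widetilde{T}_P - \widetilde{T}_{\hat{P}})\EmpCatCDF$ and then decouple, whereas the paper writes $\EmpCatCDF - \TrueCatCDF = (I-\EmpCatOp)^{-1}(\EmpCatOp-\CatOp)\TrueCatCDF$, so the noise is applied to the deterministic fixed point from the start and the self-bounding step is deferred to the population-to-empirical variance conversion. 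Both arrangements work.

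Two steps you treat as routine are where the actual difficulty sits. First, concentration: applying scalar Bernstein coordinatewise over the $m$ CDF coordinates and union-bounding costs a $\log m$ factor, which would tie the sample complexity to the representation size; your variance proxy $\Delta\sum_i \mathrm{Var}_Y(G_i(x,Y))$ is the right one, but to obtain it in a single bound on the $\|\cdot\|_{\ell_2}$-norm of the whole CDF deviation you need a vector-valued Bernstein inequality. The paper gets this by identifying the Cram\'er distance with a maximum mean discrepancy and invoking a Bernstein inequality in Hilbert space (Lemma~\ref{lemma:hilbert-bernstein}); something of this kind is not optional. Second, the ``telescoping into the total Cram\'er-variance'' is not an identity: the projection $\Proj$ strictly loses variance, so the Bellman recursion for the squared-Cram\'er variation only holds as an inequality with an additive slack of order $\tfrac{2}{m\sqrt{1-\gamma}}+\tfrac{1}{m^2(1-\gamma)^2}$ per step (Proposition~\ref{prop:anti-contraction}, feeding into Proposition~\ref{prop:SCCIneq}). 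That slack is only harmless because the hypothesis on $m$ makes it $O(1)$, after which it contributes an extra $(1-\gamma)^{-1}$ through the resolvent; without this anti-contraction control the global bound $\Sigma \leq (1-\gamma)^{-1}\mathbf{1}$ does not transfer to $\|(I-\gamma\hat{P})^{-1}\sigma_{\hat{P}}\|_\infty$ and the argument stalls exactly at Equation~\eqref{eq:first-ineq}. Neither point invalidates your outline, but both require dedicated lemmas rather than the one-line justifications given.
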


\textbf{Reduction to categorical fixed-point error.}
Our first step in proving Theorem~\ref{thm:cramer} is to use the triangle inequality to split the Cram\'er distance on the left-hand side of Equation~\eqref{eq:cramer-error} into a representation approximation error, and sample-based error:
\begin{align*}
    \overline{\ell}_2(\TrueRDF, \EmpCatCDF) \leq \overline{\ell}_2(\TrueRDF, \TrueCatCDF) + \overline{\ell}_2(\TrueCatCDF, \EmpCatCDF) \leq \frac{1}{(1-\gamma)\sqrt{m-1}} + \overline{\ell}_2(\TrueCatCDF, \EmpCatCDF) \, ,
\end{align*}
with the second inequality following from the fixed-point quality bound in Equation~\eqref{eq:categorical-quality}. With $m$ as specified in the theorem statement, the first term in the right-hand side above is bounded by $\varepsilon/2$. Thus, it suffices to focus on the second term on the right-hand side, which quantifies the sample-based error in estimating the categorical fixed point $F^*$.

\textbf{Concentration.}
Through a combination of the use of a version of Bernstein's inequality in Hilbert space \citep{chatalic2022nystrom} and propagation of this inequality across time steps in the MRP, we next arrive at the following inequality with probability at least $1-\delta$:
\begin{align}
    \label{eq:bound-with-bernstein}
    \ell_2(\EmpCatCDF(x), \TrueCatCDF(x)) \leq  \widetilde{O}\bigg(\! \frac{1}{\sqrt{N(1-\gamma)}} \sqrt{ \| (I - \gamma \hat{P})^{-1} \sigma_{\hat{P}} \|_\infty } 
     + \frac{1}{(1-\gamma)^{3/2} N^{3/4}} \!\bigg) \, .
\end{align}

Here, $\sigma_{\hat{P}}  \in \mathbb{R}^{\mathcal{X}}$ is an instance of a new class of distributional object, the \emph{local squared-Cram\'er variation}. The general definition is given below, for the case of a general transition matrix $Q \in \mathbb{R}^{\mathcal{X} \times \mathcal{X}}$, to avoid conflation with the specific transition matrices $P$ and $\hat{P}$ that Theorem~\ref{thm:cramer} is concerned with.

\begin{definition}\label{def:single-sample-op}
    For a given transition matrix $\altP$, the \emph{single-sample operator} $\hat{T}_{\altP} : \mathbb{R}^{\mathcal{X} \times m} \rightarrow \mathbb{R}^{\mathcal{X} \times m}$ is the random operator given by:
    (i) constructing a \emph{random transition matrix} $\EmpAltP$ by, for each $x \in \mathcal{X}$, sampling $X' \sim \altP(\cdot|x)$, and setting $\EmpAltP(X'|x) = 1$; 
    (ii) setting $\hat{T}_{\altP} = T_{\EmpAltP}$.
\end{definition}

\begin{definition}\label{def:local-squared-cramer-variation}
    For a given transition matrix $\altP$ with corresponding CDP fixed point $\altPCDF \in \mathbb{R}^{\mathcal{X} \times m}$, the \emph{local squared-Cram\'er variation at $\altP$}, $\sigma_{\altP} \in \mathbb{R}^{\mathcal{X}}$, is defined by
    \begin{align*}
        \sigma_{\altP}(x) = \mathbb{E}[\ell_2^2((\hat{T}_{\altP} \altPCDF)(x), \altPCDF(x)) ] \, .
    \end{align*}
\end{definition}

Intuitively, the local squared-Cram\'er variation $\sigma_{\altP}$ encodes the variability of the fixed point $\altPCDF$ after a sample-based, rather than exact, dynamic programming update. From this point of view, it is a natural quantity to arise in Equation~\eqref{eq:bound-with-bernstein}, and plays a similar role to the variance in the classical Bernstein inequality \citep{bernstein1946theory}.

In Corollary~\ref{corr:scc-bound} below, we will deduce that under the conditions of Theorem~\ref{thm:cramer}, we have
\begin{align}\label{eq:first-ineq}
     \|(I - \gamma \hat{P})^{-1} \sigma_{\hat{P}} \|_\infty \leq \frac{2}{1 - \gamma} \, .
\end{align}
Substituting this into Equation~\eqref{eq:bound-with-bernstein} gives
$
    \ell_2(\EmpCatCDF(x), \TrueCatCDF(x)) \leq 
     \widetilde{O}\bigg( \frac{1}{(1-\gamma)\sqrt{N}} + \frac{1}{(1-\gamma)^{3/2} N^{3/4}} \bigg)
$
with probability at least $1-\delta$, for all $x \in \mathcal{X}$. Now, taking $N = \widetilde{\Omega}( (1-\gamma)^{-2} \varepsilon^{-2})$ yields that this expression is $O(\varepsilon)$, which completes the sketch proof of Theorem~\ref{thm:cramer}.
What remains to be described is how to arrive at the bound in Equation~\eqref{eq:first-ineq};
the section below provides a high-level overview of the technical details involved in obtaining it.

\subsection{The stochastic categorical CDF Bellman equation}
\label{sec:sc-cdf-bellman}

The central idea is to relate the \emph{local} squared-Cram\'er variation to a corresponding \emph{global} notion of variation, in analogy with the variance Bellman equation \citep{sobel1982variance} used by \citet{gheshlaghi2013minimax} in the mean-return case.
To define this corresponding global notion, we begin by defining a new type of distributional Bellman equation, which can be intuitively thought of as encoding the result of repeatedly applying a sequence of independent single-sample operators. Again, we work with a general transition matrix $\altP$.

\begin{definition}
    For a general transition matrix $\altP$, the \emph{stochastic categorical CDF (SC-CDF) Bellman equation} is given by
    \begin{align}\label{eq:scc-equation-random-variable}
        \Phi(x) \overset{\mathcal{D}}{=} (\hat{T}_{\altP} \Phi)(x) \, , 
    \end{align}  
    where $\overset{\mathcal{D}}{=}$ denotes equality in distribution.
    Here, $\hat{T}_{\altP}$ is a \emph{single-sample operator} with respect to $\altP$, as in Definition~\ref{def:single-sample-op}. Each $\Phi(x)$ is a random variable taking values in the space of valid CDF values
    $
        \CDFSpace = \{ F \in \mathbb{R}^{m} : 0 \leq F_1 \leq \cdots \leq F_{m-1} \leq F_m = 1 \}
    $
    for distributions in $\mathscr{P}(\{z_1, \ldots, z_m\})$, 
    and is taken to be independent of the random operator $\hat{T}_{\altP}$.
\end{definition}
The intuition is that ``unravelling'' Equation~\eqref{eq:scc-equation-random-variable} should lead to a solution of the form
\begin{align*}
    \Phi(x) \overset{\mathcal{D}}{=} \lim_{k \rightarrow \infty} \hat{T}^{(k)}_{\altP} \cdots \hat{T}^{(1)}_{\altP} F \, ,    
\end{align*}
where $(\hat{T}^{(i)}_{\altP})_{i=1}^k$ are independent single-sample operators, so that $\Phi(x)$ encodes the fluctuations due to repeated CDP updates with randomly-sampled transitions. To make this intuition precise, we first verify that the SC-CDF Bellman equation has a unique solution.

\begin{restatable}{propositionx}{propSCCDFContraction}\label{prop:SCCDFContraction}
    The SC-CDF Bellman equation in Equation~\eqref{eq:scc-equation-random-variable} has a unique solution, in the sense that there is a unique distribution for each $\Phi(x)$ such that Equation~\eqref{eq:scc-equation-random-variable} holds for each $x \in \mathcal{X}$.
\end{restatable}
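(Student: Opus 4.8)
The plan is to prove uniqueness via the Banach fixed-point theorem, applied not to the random CDF vectors directly but to the induced operator on the tuple of \emph{marginal} laws. The crucial first observation is that, because Equation~\eqref{eq:scc-equation-random-variable} is posed coordinatewise and we only seek uniqueness of each marginal distribution of $\Phi(x)$, the single-sample operator descends to a well-defined map on the space $\mathscr{P}(\CDFSpace)^{\mathcal{X}}$ of tuples of probability measures over the (compact) CDF space $\CDFSpace$. Indeed, writing $A^x$ for the linear map on $\mathbb{R}^m$ with entries $A^x_{i,j} = H^x_{i,j} - H^x_{i,j+1}$, inspection of Equation~\eqref{eq:cat-matrix} shows the single-sample operator acts as $(\hat{T}_{\altP}\Phi)(x) = A^x \Phi(X')$ with $X' \sim \altP(\cdot|x)$ drawn independently of $\Phi$; hence the law of $(\hat{T}_{\altP}\Phi)(x)$ depends on $\Phi$ only through its marginals, and the induced map $\CDFStocBellmanOp$ sends $\rho \mapsto \big( \sum_y \altP(y|x) (A^x)_* \rho_y \big)_{x}$, an $\altP(\cdot|x)$-mixture of pushforwards.

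Next I would equip $\mathscr{P}(\CDFSpace)$ with the Wasserstein-$1$ metric $W_{\ell_2}$ whose ground metric is the Cram\'er distance $\ell_2$, and equip $\mathscr{P}(\CDFSpace)^{\mathcal{X}}$ with the supremum metric $\overline{W}(\rho,\rho') = \max_x W_{\ell_2}(\rho_x, \rho'_x)$. Since $\CDFSpace$ is a compact subset of $\mathbb{R}^m$, the space $(\mathscr{P}(\CDFSpace), W_{\ell_2})$ is complete, and the finite product is therefore complete under $\overline{W}$.

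The heart of the argument is to show $\CDFStocBellmanOp$ is a $\sqrt{\gamma}$-contraction in $\overline{W}$. The key ingredient is that each $A^x$ is itself a $\sqrt{\gamma}$-contraction in Cram\'er distance: $A^x F$ encodes $\Proj$ applied to the scale-by-$\gamma$, shift-by-$r(x)$ transformation of the distribution encoded by $F$, and $\ell_2$ scales by $\sqrt{\gamma}$ under multiplication by $\gamma$, is invariant under translation, and is non-expansive under $\Proj$ (the same three facts underlying the contraction factor in Proposition~\ref{prop:cdrl}). Given this, I would bound $\overline{W}(\CDFStocBellmanOp\rho, \CDFStocBellmanOp\rho')$ statewise by coupling the two output mixtures through a shared draw $y \sim \altP(\cdot|x)$ followed by the optimal coupling of $(A^x)_*\rho_y$ and $(A^x)_*\rho'_y$; using that pushforward by a $\sqrt{\gamma}$-Lipschitz map contracts $W_{\ell_2}$ by $\sqrt{\gamma}$, and that $\sum_y \altP(y|x) = 1$, yields $W_{\ell_2}((\CDFStocBellmanOp\rho)_x,(\CDFStocBellmanOp\rho')_x) \le \sqrt{\gamma}\,\overline{W}(\rho,\rho')$, and taking the maximum over $x$ completes the contraction estimate.

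Banach's fixed-point theorem then gives a unique fixed point $\rho^* \in \mathscr{P}(\CDFSpace)^{\mathcal{X}}$, i.e.\ a unique tuple of marginal laws solving the SC-CDF Bellman equation, which is exactly the claimed uniqueness. The main obstacle I anticipate is the bookkeeping in the first step---verifying cleanly that the per-coordinate equation depends on the input only through its marginals, so that $\CDFStocBellmanOp$ genuinely descends to $\mathscr{P}(\CDFSpace)^{\mathcal{X}}$---together with confirming completeness of the Wasserstein space; the contraction estimate itself is then routine given the mixture and pushforward properties of Wasserstein distance and the $\sqrt{\gamma}$-contractivity of each $A^x$.
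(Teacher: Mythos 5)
Your proposal is correct and follows essentially the same route as the paper: both pass to the induced operator on tuples of laws in $\mathscr{P}(\CDFSpace)^{\mathcal{X}}$, establish a $\sqrt{\gamma}$-contraction in the supremum-Wasserstein metric with ground metric $\ell_2$ via a coupling that shares the transition draw and exploits the $\sqrt{\gamma}$-contractivity of the per-state map (your $A^x$ is the paper's $B_x$, whose contractivity is Proposition~\ref{prop:Bx-contraction}), and conclude by completeness and Banach's fixed-point theorem. The only cosmetic difference is that the paper takes the optimal coupling of $(\rho_y,\rho'_y)$ before pushing forward through $B_x$, whereas you push forward first and then couple optimally; the resulting bound is identical.
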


We write $\Phi^{\altP}$ for the collection of random CDFs that satisfy the SC-CDF Bellman equation in Equation~\eqref{eq:scc-equation-random-variable}, whose existence is guaranteed by Proposition~\ref{prop:SCCDFContraction}. We next relate $\Phi^{\altP}$ to the standard categorical fixed point $F^{\altP}$.

\begin{figure}
    \centering
    \includegraphics[width=.55\textwidth]{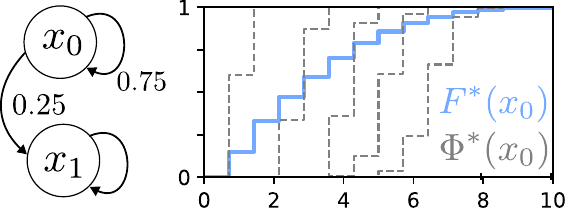}
    \caption{
    Left: Example MRP with $r(x_0) = 1, r(x_1) = 0$, $\gamma = 0.9$.
    Right: Categorical fixed point $F^*(x_0)$ with $m=15$, and 5 independent samples from the random CDF $\Phi^*(x_0)$.}
    \label{fig:random-cdf}
\end{figure}

\begin{restatable}{propositionx}{propExpectation}\label{prop:expectation-scc}
    For all $x \in \mathcal{X}$, we have
    \begin{align*}
        \mathbb{E}[\Phi^{\altP}(x)] = F^{\altP}(x)\, .
    \end{align*}
\end{restatable}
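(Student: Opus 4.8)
The plan is to take expectations on both sides of the SC-CDF Bellman equation and show that the resulting deterministic quantity satisfies the ordinary categorical CDF fixed-point equation, then invoke uniqueness. The crucial preliminary observation is that the single-sample operator is an \emph{unbiased} estimator of the exact categorical operator $T_{\altP}$: that is, $\mathbb{E}[\hat{T}_{\altP}] = T_{\altP}$ as linear maps. This follows directly from the matrix form in Equation~\eqref{eq:cat-matrix}: writing $\hat{T}_{\altP} = T_{\EmpAltP}$ with $\EmpAltP(y|x) = \mathbbm{1}\{X'_x = y\}$ and $X'_x \sim \altP(\cdot \mid x)$, the only randomness lives in the indicator $\EmpAltP(y|x)$, whose expectation is $\altP(y|x)$, while the hat-function factors $H^x_{i,j} - H^x_{i,j+1}$ are deterministic (they depend only on $r(x)$, $\gamma$, and the grid, not on the sampled transition). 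Taking the entrywise expectation of $T_{\EmpAltP}(x,i;y,j) = \EmpAltP(y|x)(H^x_{i,j} - H^x_{i,j+1})$ therefore recovers exactly $T_{\altP}(x,i;y,j)$.

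Write $\bar\Phi(x) = \mathbb{E}[\PhiAltP(x)] \in \mathbb{R}^m$, which is well-defined and finite since $\PhiAltP(x)$ takes values in $\CDFSpace$, hence has entries in $[0,1]$. Taking expectations of the identity $\PhiAltP(x) \overset{\mathcal{D}}{=} (\hat{T}_{\altP} \PhiAltP)(x)$ gives $\bar\Phi(x) = \mathbb{E}[(\hat{T}_{\altP}\PhiAltP)(x)]$. Since, by definition of the SC-CDF equation, $\PhiAltP$ is independent of the single-sample operator $\hat{T}_{\altP}$, I can factor the expectation of each bilinear term $\hat{T}_{\altP}(x,i;y,j)\,\PhiAltP_j(y)$ into the product of expectations; summing over $y$ and $j$ and using $\mathbb{E}[\hat{T}_{\altP}] = T_{\altP}$ from the previous paragraph yields $\mathbb{E}[(\hat{T}_{\altP}\PhiAltP)(x)] = (T_{\altP}\bar\Phi)(x)$. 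Hence $\bar\Phi = T_{\altP}\bar\Phi$; that is, $\bar\Phi$ solves the homogeneous categorical CDF fixed-point equation for the transition matrix $\altP$.

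The remaining, and most delicate, step is to identify $\bar\Phi$ with the categorical fixed point $\FAltP$ rather than some spurious solution (recall from the discussion around Equation~\eqref{eq:cdf-fp} that the homogeneous system $F = T_{\altP}F$ has non-unique solutions, e.g.\ $F = 0$). To pin down the correct solution I use the boundary constraint: since $\PhiAltP(x) \in \CDFSpace$ almost surely, its top CDF value is $\PhiAltP_m(x) = 1$ a.s., so $\bar\Phi_m(x) = 1$ for all $x$. Thus $\bar\Phi$ satisfies not only $\bar\Phi = T_{\altP}\bar\Phi$ but also the normalisation $\bar\Phi_m(x) = 1$, which is precisely the constrained linear system of Equation~\eqref{eq:full-DCFP} (with $P$ replaced by $\altP$). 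By Proposition~\ref{prop:uniqueSoln} that system has a unique solution, namely the CDF values of the categorical fixed point $\FAltP$, and therefore $\bar\Phi = \FAltP$, completing the argument. I expect the only points needing care to be the clean factorisation of the expectation through the independence assumption and this final fixed-point-selection step via the boundary constraint; the rest is routine.
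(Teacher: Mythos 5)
Your proposal is correct and follows essentially the same route as the paper: take expectations of the SC-CDF Bellman equation, use independence of $\hat{T}_{\altP}$ and $\Phi^{\altP}$ together with the unbiasedness $\mathbb{E}[\hat{T}_{\altP}] = T_{\altP}$ to obtain $\bar\Phi = T_{\altP}\bar\Phi$, then conclude by uniqueness of the fixed point. The only (cosmetic) difference is the final step, where the paper invokes Proposition~\ref{prop:cdrl} directly (implicitly using that $\mathbb{E}[\Phi^{\altP}(x)]$ is a valid CDF vector), whereas you pin down the solution via the boundary constraint $\bar\Phi_m(x)=1$ and Proposition~\ref{prop:uniqueSoln}; your version is, if anything, slightly more explicit about why the spurious solutions of the homogeneous system are excluded.
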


Proposition~\ref{prop:expectation-scc} shows that $\Phi^{\altP}$ can indeed thought of as encoding random variation around the usual categorical fixed point $F^{\altP}$; see Figure~\ref{fig:random-cdf}. This motivates the following.

\begin{definition}
    For a given transition matrix $\altP$ with corresponding CDP fixed point $F^{\altP} \in \mathbb{R}^{\mathcal{X} \times m}$, the \emph{global squared-Cram\'er variation at} $\altP$, $\Sigma_{\altP} \in \mathbb{R}^{\mathcal{X}}$, is defined by
    \begin{align*}
        \Sigma_{\altP}(x) = \mathbb{E}[\ell_2^2( \Phi^{\altP}(x), F^{\altP}(x))] \, .
    \end{align*}
\end{definition}

\begin{remark}
    Note that $\Phi^{\altP}(x)$ is a \emph{doubly} distributional object. It represents a probability distribution centred around the object $F^{\altP}(x)$, which itself already provides a summary of the \emph{distribution} of the return. This reveals a dual perspective on distributional RL itself. The distributional predictions made can serve several purposes: (i) modelling the aleatoric uncertainty in the return, as is the case in the work of \citet{bellemare2017distributional} and much subsequent algorithmic work, and/or (ii) serving to model specific types of epistemic uncertainty in the estimation of a non-random object from random data, as used in the analysis of \citet{gheshlaghi2013minimax} and much subsequent work on the sample complexity of reinforcement learning. The object $\Phi^{\altP}$ is motivated by \emph{both} of these concerns simultaneously.
\end{remark}

The following Bellman-like inequality draws a relationship between local and global squared-Cram\'er variation, allowing us to make progress from Equation~\eqref{eq:first-ineq}.

\begin{restatable}{propositionx}{propSCCIneq}\label{prop:SCCIneq}
    We have
    \begin{align*}
        \Sigma_{\altP} \geq \sigma_{\altP} + \gamma \altP \Sigma_{\altP} - \bigg(\frac{2}{m\sqrt{1-\gamma}} + \frac{1}{m^2(1-\gamma)^2}\bigg) \mathbf{1} \, ,
    \end{align*}
    where $\mathbf{1} \in \mathbb{R}^\mathcal{X}$ is a vector of ones, and the inequality above is interpreted component-wise.
\end{restatable}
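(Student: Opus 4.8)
The plan is to convert the defining property of $\PhiAltP$---that it solves the SC-CDF Bellman equation---into a Bellman-type recursion for $\Sigma_\altP$, in direct analogy with the classical variance Bellman equation. Since $\PhiAltP(x) \overset{\mathcal{D}}{=} (\hat{T}_{\altP}\PhiAltP)(x)$ with the single-sample operator $\hat{T}_{\altP}$ independent of the copy of $\PhiAltP$ appearing on the right, I would first write $\Sigma_\altP(x) = \mathbb{E}[\ell_2^2((\hat{T}_{\altP}\PhiAltP)(x), \FAltP(x))]$ and split the CDF error inside into a \emph{propagated} part and a \emph{local} part by inserting $(\hat{T}_{\altP}\FAltP)(x)$:
\[
(\hat{T}_{\altP}\PhiAltP)(x) - \FAltP(x) = \underbrace{(\hat{T}_{\altP}\PhiAltP)(x) - (\hat{T}_{\altP}\FAltP)(x)}_{A} + \underbrace{(\hat{T}_{\altP}\FAltP)(x) - \FAltP(x)}_{B}.
\]
Expanding $\ell_2^2 = \|A\|^2 + 2\langle A,B\rangle + \|B\|^2$ in the Cram\'er inner product, the term $\mathbb{E}[\|B\|^2]$ is exactly $\sigma_\altP(x)$ by definition.

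Two structural facts then drive the argument. First, the cross term vanishes: conditioning on the next-state sample $X'$ drawn at $x$ (which determines $\hat{T}_{\altP}$ at $x$, and hence $B$), the operator acts linearly on the signed CDF $D := \PhiAltP(X') - \FAltP(X')$, so $\mathbb{E}[A \mid X'] = \Proj\, b_{r(x),\gamma}\big(\mathbb{E}[\PhiAltP(X')] - \FAltP(X')\big)$, where $b_{r(x),\gamma}$ is the pushforward induced by $z \mapsto r(x)+\gamma z$; this is zero by Proposition~\ref{prop:expectation-scc} together with the independence of $\PhiAltP$ from the sample, giving $\mathbb{E}[\langle A,B\rangle] = 0$. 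Second, $\hat{T}_{\altP}$ composes $b_{r(x),\gamma}$, which multiplies squared Cram\'er distance by exactly $\gamma$, with $\Proj$, which is the $\ell_2$-orthogonal projection onto grid-supported distributions. The Pythagorean identity then gives $\|A\|^2 = \gamma\|D\|^2 - \|(I-\Proj)b_{r(x),\gamma}D\|^2$, and taking expectations yields $\mathbb{E}[\|A\|^2] = \gamma(\altP\Sigma_\altP)(x) - \mathcal{E}(x)$ with $\mathcal{E}(x) = \mathbb{E}[\|(I-\Proj)b_{r(x),\gamma}D\|^2] \ge 0$ a pure discretisation error. Combining the three pieces gives the \emph{exact} identity $\Sigma_\altP(x) = \sigma_\altP(x) + \gamma(\altP\Sigma_\altP)(x) - \mathcal{E}(x)$, so the proposition reduces to the upper bound $\mathcal{E}(x) \le \tfrac{2}{m\sqrt{1-\gamma}} + \tfrac{1}{m^2(1-\gamma)^2}$.

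Bounding $\mathcal{E}(x)$ is the main obstacle. Using that $\Proj$ acts on CDFs by averaging within each grid cell, the residual norm equals $\varsigma\sum_j \mathrm{Var}_j$, where $\varsigma = \tfrac{1}{(m-1)(1-\gamma)}$ is the grid spacing and $\mathrm{Var}_j$ is the variance over cell $[z_j,z_{j+1}]$ of the scaled, shifted signed CDF of $D$; crucially, because $b_{r(x),\gamma}$ compresses the support onto a grid a factor $\gamma$ finer, each cell resolves this signed CDF at $\approx \gamma^{-1}$ sub-points, making $\mathrm{Var}_j$ genuinely smaller than the naive $\tfrac14(\mathrm{oscillation})^2$ estimate. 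The delicate point is that a deterministic worst-case bound over all $D$ gives only $O(\varsigma) = O(\tfrac{1}{m(1-\gamma)})$ and cannot yield the improved $(1-\gamma)^{-1/2}$ dependence; the sharper estimate must additionally exploit that $D$ is \emph{centred} ($\mathbb{E}[D]=0$ by Proposition~\ref{prop:expectation-scc}) and concentrates around $\FAltP(X')$, so that the expected within-cell fluctuations of its signed CDF are controlled. Carefully combining the fine-grid resolution with a second-moment bound on the cell-wise variation of $\PhiAltP$ about its mean produces the two stated terms---the first the leading grid-misalignment contribution and the second the lower-order quadratic projection term. I expect this final accounting, tracking constants through the interaction of the $\gamma$-refined support grid with the fluctuations of $\PhiAltP$, to be the most technical part of the proof.
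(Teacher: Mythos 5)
Your first two steps reproduce the paper's argument. The paper likewise starts from $\Sigma_{\altP}(x) = \mathbb{E}[\ell_2^2((\hat{T}_{\altP}\PhiAltP)(x), \FAltP(x))]$, establishes the same Pythagorean decomposition (there phrased as the conditional-variance identity $\mathbb{E}[(Y-\mathbb{E}[Y])^2] = \mathbb{E}[(Y-\mathbb{E}[Y\mid\hat{T}])^2] + \mathbb{E}[(\mathbb{E}[Y\mid\hat{T}]-\mathbb{E}[Y])^2]$ applied to $Y = (\hat{T}\PhiAltP)(x)(t)$, using Proposition~\ref{prop:expectation-scc} exactly as you do to kill the cross term), and identifies $\mathbb{E}[\|B\|_{\ell_2}^2]$ with $\sigma_{\altP}(x)$. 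Your further observation that $\Proj$ acts on CDFs as the $L^2$-orthogonal projection, so that $\|A\|_{\ell_2}^2$ admits the \emph{exact} identity $\gamma\|D\|_{\ell_2}^2 - \|(I-\Proj)(b_{r(x),\gamma})_{\#}D\|_{\ell_2}^2$, is a clean (and in principle sharper) reformulation of what the paper obtains as an inequality.

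The gap is in the final step, which is where all of the quantitative content of the proposition lives. The paper closes the argument with a purely \emph{deterministic} anti-contraction bound (Proposition~\ref{prop:anti-contraction}, via Lemma~\ref{lem:proj-anti-concentration}): for arbitrary $F, F' \in \mathscr{F}$, $\| B_x F - B_x F' \|^2_{\ell_2} \geq \gamma \| F - F' \|^2_{\ell_2} - \tfrac{2}{m(1-\gamma)^{1/2}} - \tfrac{1}{m^2(1-\gamma)^2}$, derived from the worst-case per-cell averaging bound $\ell_2(\nu, \Proj\nu) \leq \tfrac{1}{2\sqrt{m(1-\gamma)}}$, two applications of the triangle inequality, squaring, and bounding the resulting cross term by the Cram\'er diameter of $\mathscr{P}([0,(1-\gamma)^{-1}])$; no distributional properties of $\PhiAltP$ enter at this stage. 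You instead assert that any deterministic bound on your residual $\mathcal{E}(x)$ can only give $O(\tfrac{1}{m(1-\gamma)})$, conclude that one \emph{must} exploit the centredness and concentration of $D = \PhiAltP(X') - \FAltP(X')$, and then describe---but do not execute---a cell-wise variance accounting on the $\gamma$-refined grid that is supposed to yield the two stated terms. You are right that the naive route (bound each projection residual, add, square) does not dominate $\tfrac{2}{m\sqrt{1-\gamma}} + \tfrac{1}{m^2(1-\gamma)^2}$ when $\gamma$ is close to $1$ and $m(1-\gamma)$ is large, but the conclusion you draw from this sends you down an unproven probabilistic path rather than to the triangle-inequality-then-square manipulation the paper uses to get exactly these constants. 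As written, your proof stops at ``I expect this final accounting\dots to be the most technical part,'' i.e.\ precisely at the point where the bound $\mathcal{E}(x) \leq \tfrac{2}{m\sqrt{1-\gamma}} + \tfrac{1}{m^2(1-\gamma)^2}$ has to be produced; without it the proposition is not established.
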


Rearrangement and bounding of the quantities in the inequality of Proposition~\ref{prop:SCCIneq}, in the specific case $Q = \hat{P}$, yields the required inequality in Equation~\eqref{eq:first-ineq} that completes the proof of Theorem~\ref{thm:cramer}.

\begin{restatable}{corollaryx}{corrSCCBound}\label{corr:scc-bound}
    We can bound the term $\|(I - \gamma \hat{P})^{-1} \sigma_{\hat{P}} \|_\infty$ appearing in Equation~\eqref{eq:bound-with-bernstein} under the assumptions of Theorem~\ref{thm:cramer} as follows:
    \begin{align*}
        \|(I - \gamma \hat{P})^{-1} \sigma_{\hat{P}} \|_\infty \leq \| \Sigma_{\hat{P}} \|_\infty + \frac{1}{1-\gamma} \leq \frac{2}{1 - \gamma} \, .
    \end{align*}
\end{restatable}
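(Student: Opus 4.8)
The plan is to specialize Proposition~\ref{prop:SCCIneq} to $\altP = \hat{P}$ and convert the resulting Bellman-like inequality into the desired bound by inverting $I - \gamma\hat{P}$. First I would rearrange the inequality of Proposition~\ref{prop:SCCIneq} into the form $(I - \gamma\hat{P})\Sigma_{\hat{P}} \geq \sigma_{\hat{P}} - c\,\mathbf{1}$, writing $c = \tfrac{2}{m\sqrt{1-\gamma}} + \tfrac{1}{m^2(1-\gamma)^2}$ for the constant error term. The crucial observation is that the resolvent $(I - \gamma\hat{P})^{-1} = \sum_{k \geq 0}\gamma^k \hat{P}^k$ has entirely non-negative entries, as a convergent Neumann series of non-negative matrices, and is therefore \emph{monotone} with respect to the componentwise partial order. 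Applying it to both sides preserves the inequality and yields $\Sigma_{\hat{P}} \geq (I - \gamma\hat{P})^{-1}\sigma_{\hat{P}} - c\,(I - \gamma\hat{P})^{-1}\mathbf{1}$, which rearranges to $(I - \gamma\hat{P})^{-1}\sigma_{\hat{P}} \leq \Sigma_{\hat{P}} + c\,(I - \gamma\hat{P})^{-1}\mathbf{1}$.

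The second step is to evaluate $(I - \gamma\hat{P})^{-1}\mathbf{1}$. Since $\hat{P}$ is stochastic we have $\hat{P}\mathbf{1} = \mathbf{1}$, so the Neumann series collapses to $\sum_{k \geq 0}\gamma^k\mathbf{1} = (1-\gamma)^{-1}\mathbf{1}$. Taking the $\infty$-norm of the componentwise inequality (all quantities being non-negative, so $\infty$-norms respect the ordering) gives $\|(I - \gamma\hat{P})^{-1}\sigma_{\hat{P}}\|_\infty \leq \|\Sigma_{\hat{P}}\|_\infty + \tfrac{c}{1-\gamma}$. To reach the first stated inequality it remains only to check that $c \leq 1$: under the hypothesis of Theorem~\ref{thm:cramer}, $m \geq 4(1-\gamma)^{-2}\varepsilon^{-2} + 1 > 4(1-\gamma)^{-2}$ (using $\varepsilon < 1$), whence $m\sqrt{1-\gamma} > 4(1-\gamma)^{-3/2} \geq 4$ and $m^2(1-\gamma)^2 > 16(1-\gamma)^{-2} \geq 16$, so $c < \tfrac12 + \tfrac{1}{16} < 1$ and hence $\tfrac{c}{1-\gamma} \leq \tfrac{1}{1-\gamma}$.

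For the final inequality I would bound the global squared-Cram\'er variation directly. Since $\Phi^{\hat{P}}(x)$ and $F^{\hat{P}}(x)$ are both CDFs of distributions supported on $[0,(1-\gamma)^{-1}]$, their difference vanishes outside this interval and is bounded by $1$ in absolute value, so $\ell_2^2(\Phi^{\hat{P}}(x), F^{\hat{P}}(x)) = \int_0^{(1-\gamma)^{-1}} (F_{\Phi^{\hat{P}}(x)}(t) - F^{\hat{P}}_\cdot(x)(t))^2\,\mathrm{d}t \leq (1-\gamma)^{-1}$ pointwise, giving $\|\Sigma_{\hat{P}}\|_\infty = \max_x \mathbb{E}[\ell_2^2(\Phi^{\hat{P}}(x), F^{\hat{P}}(x))] \leq (1-\gamma)^{-1}$. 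Chaining the two bounds yields $\|(I - \gamma\hat{P})^{-1}\sigma_{\hat{P}}\|_\infty \leq (1-\gamma)^{-1} + (1-\gamma)^{-1} = 2(1-\gamma)^{-1}$.

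I expect the only subtle point to be the monotonicity step, namely justifying that $(I - \gamma\hat{P})^{-1}$ preserves componentwise inequalities; this hinges on non-negativity of the Neumann series rather than on any delicate estimate. Everything else is routine arithmetic controlled by the lower bound on $m$ and the uniform boundedness of Cram\'er distances over the return range $[0,(1-\gamma)^{-1}]$.
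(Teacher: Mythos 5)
Your proposal is correct and follows essentially the same route as the paper's proof: specializing Proposition~\ref{prop:SCCIneq} to $\hat{P}$, using monotonicity of the non-negative Neumann series $(I - \gamma\hat{P})^{-1} = \sum_{k\geq 0}\gamma^k\hat{P}^k$, bounding the projection-error constant by $1$ via the hypothesis on $m$, and bounding $\|\Sigma_{\hat{P}}\|_\infty$ by $(1-\gamma)^{-1}$ as an expected squared Cram\'er distance over an interval of length $(1-\gamma)^{-1}$. The only cosmetic difference is that you carry the constant $c$ through the resolvent before bounding it by $1$, whereas the paper bounds it by $1$ first; the arithmetic is equivalent.
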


\section{Empirical evaluation}
\label{sec:experiments}

To complement our theoretical analysis, which focuses on worst-case sample complexity bounds, we report empirical findings for implementations of several model-based distributional RL algorithms in the generative model setting. We compare the new DCFP algorithm introduced in Section~\ref{sec:DCFP} with \emph{quantile dynamic programming} \citep[QDP]{dabney2018distributional,bdr2023} a distinct approach to distributional RL in which return distributions are approximated via dynamic programming with a finite number of quantiles.

In Figure~\ref{fig:exp1} (left), we report results of running DCFP and QDP on a 5-state environment with transition matrix randomly sampled from Dirichlet distributions, and entries of the immediate reward function $r \in \mathbb{R}^{\mathcal{X}}$ randomly sampled from $\text{Unif}([0,1])$, with varying numbers of atoms $m$, environment samples per state $N$, and discount $\gamma$. We report the maximum $w_1$-error against true return distributions (estimated via Monte Carlo sampling). All runs are repeated 30 times, and error bars are 95\% bootstrapped confidence intervals. Sufficient DP iterations ensure approximate convergence to their fixed points. In Figure~\ref{fig:exp1} (right), we plot estimation error against wallclock time for $m=100,  N=10^6$, including results for CDP (which approximates the solution of DCFP via dynamic programming; Section~\ref{sec:categorical}); line plots indicate the estimation error/wallclock time trade-off as we increase the number of DP iterations. Both DCFP and CDP methods benefit from setting the atom support based on maximal/minimal values of $r$, as described in Appendix~\ref{sec:app-experiments}.

\begin{figure}[t]
    \centering
    \null\hfill
    \includegraphics[width=0.4\textwidth]{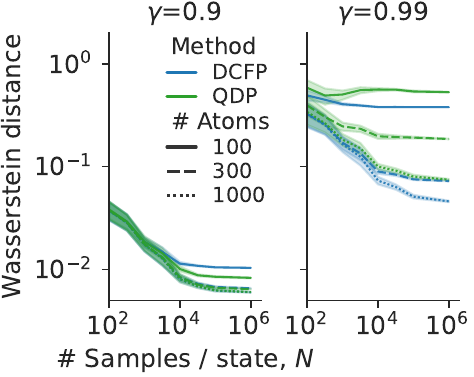}\hfill
    \includegraphics[width=0.4\textwidth]{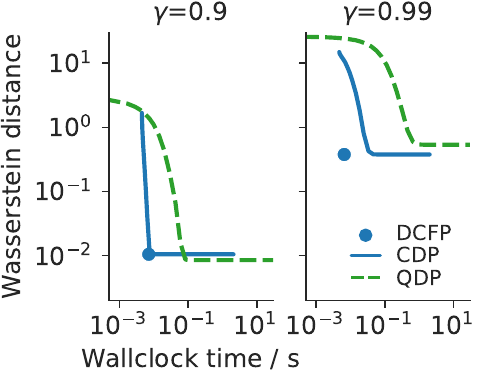}\hfill\null
    \caption{Approximation error/wallclock time for a variety of distributional RL methods, discount factors, numbers of atoms, and numbers of environment samples.}
    \label{fig:exp1}
\end{figure}

For low discount factors and atom counts QDP generally outperforms DCFP in terms of asymptotic estimation error, due to QDP's ability to modify its atom support to regions of the interval $[0, (1-\gamma)^{-1}]$ where mass is concentrated. However, we note that DCFP is generally faster than QDP. Further, DCFP generally outperforms QDP, in terms of both speed and estimation error, under larger discounts and/or larger atom count. We also note that particularly at high discounts, DCFP outperforms CDP in terms of wallclock time, due to DP methods requiring many iterations to converge in these cases \citep{rowland2018analysis}. Full results, including on several further environments, are given in Appendix~\ref{sec:app-experiments}: key findings include that QDP works particularly well in near-deterministic environments, and DCFP works particularly well in settings where there are short high-probability paths from a state to itself.

\section{Conclusion}

We have introduced a new algorithm, DCFP, for directly computing the fixed point of CDP, a widely used distributional reinforcement learning algorithm. We then showed that this algorithm, with an appropriately chosen number of categories $m$, achieves the minimax lower bound (up to logarithmic factors) for sample complexity of return-distribution estimation in Wasserstein distance with a generative model. Thus, this paper closes an open question raised by \citet{zhang2023estimation} by exhibiting an algorithm that obtains this lower bound, and shows that estimation of return distributions via a generative model is essentially no harder statistically than the task of estimating a value function.

Our analysis also casts new light on categorical approaches to distributional reinforcement learning in general. The newly introduced stochastic categorical CDF Bellman equation serves to encode information about statistical fluctuations of categorical approaches to distributional RL, and we expect it to be of further use in theoretical work in distributional RL generally. Our experimental results also highlight salient differences in performance for distributional RL algorithms making use of distinct representations, depending on levels of environment stochasticity and discount factor in particular. We believe further investigation of these phenomena is an interesting direction for future work.

\begin{ack}
 We thank Dave Abel for detailed comments on a draft version of the paper. We also thank Mohammad Gheshlaghi Azar for useful conversations, and Arthur Gretton for advice on Bernstein-like inequalities in Hilbert space.
\end{ack}

\bibliography{main}
\bibliographystyle{plainnat}


\newpage
\appendix

\section*{\centering
APPENDICES: \\
Near-Minimax-Optimal Distributional Reinforcement Learning\\ with a Generative Model}

For convenience, we provide a summary of the contents of the appendices below.
\begin{itemize}
    \item Appendix~\ref{sec:app:related-work} provides a detailed discussion of related work.
    \item Appendix~\ref{sec:additional-notation-results} provides additional convenient notation for the categorical CDF operator $\CatOp$, in particular expressing it as the combination of a scaling/shifting/projecting operation, and a mixing operation over the state to be bootstrapped from. Several additional self-contained contraction results are given that are used within the main proofs of the paper.
    \item Appendix~\ref{sec:sec4proofs} provides proofs for the results in Section~\ref{sec:dcfp-outer} concerning the formation of the linear system that is solved by the DCFP algorithm, and verification that the derived system has the unique desired solution.
    \item Appendix~\ref{sec:sccdf-proofs} provides proofs relating to the stochastic categorical CDF Bellman equation, in particular establishing the desired unique solution to this new distributional Bellman equation, and proving a key bound on the local squared-Cram\'er variation that is used within the proof of the main theorem of the paper.
    \item Appendix~\ref{sec:wasserstein-proof} provides the proof of the main result of the paper, Theorem~\ref{thm:wasserstein}, giving full details for the steps traced through in the sketch proof in the main paper.
    \item Appendix~\ref{sec:app-extensions} provides discussion of several straightforward extensions of the main result, Theorem~\ref{thm:wasserstein}.
    \item Appendix~\ref{sec:app-experiments} provides full details relating to the experiments in the main paper, as well as additional empirical comparisons between methods for distributional RL with a generative model.
\end{itemize}

\section{Related work}
\label{sec:app:related-work}

\textbf{Other families of model-based distributional RL algorithms.} There are many approaches to distributional RL; important choices studied previously include moments \citep{sobel1982variance}, exponential families \citep{morimura2010parameteric}, categorical distributions \citep{bellemare2017distributional}, collections of particles \citep{morimura2010nonparametric,nguyen2020distributional}, quantiles \citep{dabney2018distributional}, and generative models \citep{doan2018gan,dabney2018implicit,freirich2019distributional,yang2019fully,yue2020implicit}. Our choice of categorical representations in this work is motivated by several considerations:
(i) the existence of principled dynamic programming methods for these representations, with corresponding convergence theory \citep{rowland2018analysis};
(ii) the flexibility of these representations to trade-off computational complexity with accuracy (by varying $m$) \citep{rowland2018analysis};
(iii) the mathematical structure of the dynamic programming operator (linear), as described in this work; and
(iv) the availability of an efficient algorithm to exactly compute the DP fixed point (the DCFP algorithm proposed in Section~\ref{sec:DCFP}).

An interesting and important direction, given our empirical findings in Section~\ref{sec:experiments}, is whether analyses can also be carried out for other approaches to distributional dynamic programming, such as quantile dynamic programming \citep[QDP; ][]{dabney2018distributional,rowland2023analysis,bdr2023}, and fitted likelihood estimation \citep[FLE; ][]{wu2023distributional}. We expect challenges in extending the analysis to result due to the fact that, for example, the QDP operator is non-linear, and FLE operator applications typically do not have closed forms. Nevertheless, it would be particularly interesting to understand whether it is possible to obtain instance-dependent bounds for QDP, particularly given its strong empirical performance. Similarly, as described in Section~\ref{sec:analysis}, an interesting question for future work is whether it is possible to improve on the computational complexity of model-based DCFP for high-probability return-direction estimation.

\textbf{Other statistical questions in distributional RL.}
\citet{bock2022speedy} propose a \emph{model-free} algorithm for distributional RL, speedy categorical TD-learning, motivated by categorical TD learning \citep{rowland2018analysis} and speedy Q-learning \citep{azar2011speedy}, and prove a sample complexity bound of $\widetilde{O}(\varepsilon^{-2}(1 - \gamma)^{-3})$ for high probability $\varepsilon$-accurate estimation in Cram\'er distance (which implies a non-minimax sample complexity of $\widetilde{O}(\varepsilon^{-2}(1 - \gamma)^{-4})$ in Wasserstein-1 distance, per Lemma~\ref{lem:wasserstein-to-cramer}).
\citet{wu2023distributional} study the offline evaluation problem, in which state-action pairs are sampled from the stationary distribution of the policy, via fitted likelihood estimation (FLE) and focus on generalisation bounds, allowing for policy evaluation in environments with uncountable state spaces.
\citet{wang2023policy} also study policy evaluation in this context, focusing on the LQR model.
Aside from studying minimax optimality, 
\citet{zhang2023estimation} also make several other contributions on the topic of statistical efficiency of estimation for distributional RL, including analysis of asymptotic fluctuations and limit theorems, and analysis of approximations in more general metrics, including Wasserstein-$p$ , Kolmogorov-Smirnov, and total variation metrics. Their sample complexity results rely on careful analysis of the behaviour of the \emph{unprojected} distributional Bellman operator $\mathcal{T}$ on certain subspaces of probability/signed measures.

\textbf{Sample complexity of mean-return estimation.} 
The sample complexity of estimating mean returns, and the related task of obtaining a near-optimal policy, has been considered by \citet{gheshlaghi2013minimax,sidford2018near,pananjady2020instance,agarwal2020model,li2020breaking}. Interestingly, \citet{agarwal2020model}, while treating the mean-return case, make use of return-binning as a proof technique. \citet{li2020breaking} also obtain bounds for mean-return sample complexity that apply with $\varepsilon > (1-\gamma)^{-1/2}$ for modifications of certainty-equivalent model-based algorithms; an interesting direction for future work would be to check whether the restrictions on $\varepsilon$ in Theorem~\ref{thm:wasserstein} can be lifted by incorporating ideas from this mean-return analysis to the distributional setting. Additionally, \citet{chandak2021high} consider the task of estimating the variance of returns from off-policy data, and \citet{wang2023benefits} study regret minimisation properties (with respect to the expected return criterion) of distributional RL algorithms in the online setting.

\textbf{Risk-sensitive control.} The theory developed in this paper has focused on estimation of return distributions for individual policies. A natural direction for future work is to analyse identification of near-optimal policies for risk-sensitive decision criteria. \citet{bastani2022regret,wang2023near} study efficient algorithms for CVaR optimisation, while \citet{fei2021exponential,fei2021risk,liang2022bridging} study entropic risk maximisation, and \citet{du2022provably,lam2022risk} study iterated CVaR optimisation, all in the \emph{online} setting.

\textbf{Further analysis.} In this paper, we have resolved a conjecture of \citet{zhang2023estimation}, obtaining a near-minimax-optimal algorithm for estimation of return distributions in Wasserstein-1 distance. \citet{zhang2023estimation} make contributions to several other important statistical questions regarding distributional RL, including approximation in stronger metrics such as Kolmogorov-Smirnov and total variation metrics, as well as studying asymptotic fluctuations of estimators; it will be interesting to see whether the analysis presented here can be extended to these other settings as well.

\section{Additional CDF operator notation and contractivity results}
\label{sec:additional-notation-results}

In this section, we introduce finer-grained notation for the CDF operator $\CatOp$ that allows us to straightforwardly refer to the operations that correspond to shifting/scaling/projecting, and to mixing over transition states, separately. We also establish several additional contraction lemmas that will be useful in the sections that follow.

\subsection{Categorical hat function definition}

For convenience, we provide the full mathematical definition of the hat functions $h_i : [0, (1 - \gamma)^{-1}] \rightarrow [0,1]$ used in defining the categorical projection described in the main paper, as given by \citet{rowland2018analysis}. For $i=2,\ldots,m-1$, we have
\begin{align*}
    h_i(z) = 
    \begin{cases}
     \frac{z - z_{i-1}}{z_i - z_{i-1}} & \text{ for } z \in [z_{i-1}, z_{i}] \\
     \frac{z_{i+1} - z}{z_{i+1} - z_i} & \text{ for } z \in [z_i, z_{i+1}] \\
     0 & \text{ otherwise. } 
    \end{cases}
\end{align*}
For the edge case $h_1$, we have
\begin{align*}
    h_1(z) =
    \begin{cases}
        \frac{z_2 - z}{z_2 - z_1} & \text{ for } z \in [z_1, z_2] \\
        0 & \text{ otherwise, }
    \end{cases}
\end{align*}
and similarly for the edge case $h_m$, we have
\begin{align*}
    h_m(z) =
    \begin{cases}
        \frac{z - z_{m-1}}{z_m - z_{m-1}} & \text{ for } z \in [z_{m-1}, z_m] \\
        0 & \text{ otherwise. }
    \end{cases}
\end{align*}

\subsection{Finer-grained operator expression}

Recall that the CDF operator $\CatOp : \mathbb{R}^{\mathcal{X} \times m} \rightarrow \mathbb{R}^{\mathcal{X} \times m}$ is represented by a matrix with elements given by
\begin{align}\label{eq:T-appendix}
    \CatOp(x, i; j, y) = P(y|x) (H^x_{i,j} - H^x_{i,j+1}) \, .
\end{align}
We can therefore conceptualise the application $\CatOp F$ as standard matrix-vector multiplication in the vector space $\mathbb{R}^{\mathcal{X} \times m}$. However, the expression for matrix elements in Equation~\eqref{eq:T-appendix} has additional structure that means we can express the application of $\CatOp$ in a different manner, which will be convenient in several proofs below, particularly as it separates out the influence of rewards (which remain fixed) and transition dynamics (which are estimated via samples).

In particular, we will regard $F \in \mathbb{R}^{\mathcal{X} \times m}$ itself as a matrix, with rows indexed by states in $\mathcal{X}$, and columns indexed by indices $i=1,\ldots,m$. 
For each state $x \in \mathcal{X}$, we then introduce the matrix $B_x \in \mathbb{R}^{m \times m}$, with $(i,j)$ element given by
\begin{align*}
    H^x_{i,j} - H^x_{i,j+1} \, .
\end{align*}
We then have that $(\CatOp F)(x) \in \mathbb{R}^{m}$ can alternatively be expressed in matrix notation as
\begin{align*}
    P_x F B_x^\top \, ,
\end{align*}
where $P_x$ is the row vector given by the row of $P$ corresponding to state $x \in \mathcal{X}$.

\subsection{Additional results}

Below, we provide several additional results regarding contractivity properties of $\CatOp$.
To do so, it is useful to introduce the norm $\| \cdot \|_{\ell_2}$ on $\mathbb{R}^m$, which we define by
\begin{align*}
    \| F \|_{\ell_2} = \bigg[ \frac{1}{m(1-\gamma)} \sum_{i=1}^m F_i(x)^2 \bigg]^{1/2} \, .
\end{align*}
The motivation for this definition is that if we have two distributions $\nu, \nu' \in \mathscr{P}(\{z_1, \ldots, z_m\})$ with corresponding CDF vectors $F, F' \in \mathbb{R}^m$, then $\ell_2(\nu, \nu') = \| F - F' \|_{\ell_2}$, as $\nu, \nu'$ are supported on $[0, (1-\gamma)^{-1}]$. Thus, under the abuse of notation $\ell_2(F, F')$ introduced in the main paper, we have $\ell_2(F, F') = \| F - F' \|_{\ell_2}$. We also introduce a supremum version of this norm on the space $\mathbb{R}^{\mathcal{X} \times m}$, which we denote by $\| \cdot \|_{\ell_2, \infty}$, and define by
\begin{align*}
    \| F \|_{\ell_2, \infty} = \max_{x \in \mathcal{X}} \| F(x) \|_{\ell_2} \, ,
\end{align*}
for all $F \in \mathbb{R}^{\mathcal{X} \times m}$. 
This norm is defined so that if we have RDFs $\eta, \eta' \in \mathscr{P}(\{z_1,\ldots,z_m\})^\mathcal{X}$, and $F, F' \in \mathbb{R}^{\mathcal{X} \times m}$ are the corresponding CDF values, then
\begin{align*}
    \overline{\ell}_2(\eta, \eta') = \| F - F' \|_{\ell_2, \infty} \, .
\end{align*}
With this norm defined, we can now state and prove our first result, which essentially translates the contraction result in Proposition~\ref{prop:cdrl}, expressed purely in terms of probability distributions and the Cram\'er distance $\ell_2$, into a slightly more general result expressed over $\mathbb{R}^{\mathcal{X} \times m}$ and the norm $\| \cdot \|_{\ell_2}$.

\begin{proposition}\label{prop:T-contraction}
    The operator $\CatOp : \mathbb{R}^{\mathcal{X} \times m} \rightarrow \mathbb{R}^{\mathcal{X} \times m}$ is a contraction when restricted to the subspace $\{ F \in \mathbb{R}^{\mathcal{X} \times m} : F_m(x) = 0 \text{ for all } x \in \mathcal{X} \}$ with respect to the norm $\| \cdot \|_{\ell_2, \infty}$, with contraction factor $\sqrt{\gamma}$.
\end{proposition}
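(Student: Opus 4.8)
The plan is to derive the result from Proposition~\ref{prop:cdrl} (the $\sqrt{\gamma}$-contraction of $\Proj\mathcal{T}$ on probability distributions in $\overline{\ell}_2$) together with the norm identity $\overline{\ell}_2(\eta,\eta') = \|F - F'\|_{\ell_2,\infty}$ established just above, and then to bridge from the affine set of genuine CDFs to the full linear subspace $V_0 := \{ F \in \mathbb{R}^{\mathcal{X} \times m} : F_m(x) = 0 \text{ for all } x \}$ using the linearity of $\CatOp$ and the homogeneity of the norm.

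First I would record two structural facts. (i) $V_0$ is invariant under $\CatOp$: since $H^x_{m,j} = \sum_{l \leq m} h_l(r(x) + \gamma z_j) = 1$ for $j \leq m$, while $H^x_{m,m+1} = 0$ by convention, all terms with $j \leq m-1$ cancel in the final CDF coordinate and $(\CatOp F)_m(x) = \sum_y P(y|x) F_m(y)$, which vanishes whenever $F \in V_0$. Hence the restriction is a genuine self-map. (ii) For any two CDF vectors $F, F'$ representing grid distributions $\eta, \eta' \in \mathscr{P}(\{z_1,\ldots,z_m\})^{\mathcal{X}}$, Proposition~\ref{prop:CDFOp} identifies $\CatOp F, \CatOp F'$ as the CDFs of $\Proj\mathcal{T}\eta, \Proj\mathcal{T}\eta'$, so the norm identity and Proposition~\ref{prop:cdrl} give $\|\CatOp F - \CatOp F'\|_{\ell_2,\infty} = \overline{\ell}_2(\Proj\mathcal{T}\eta, \Proj\mathcal{T}\eta') \leq \sqrt{\gamma}\,\overline{\ell}_2(\eta,\eta') = \sqrt{\gamma}\|F - F'\|_{\ell_2,\infty}$. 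Thus the target inequality already holds on all differences of genuine CDFs.

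The key step is to show that every $G \in V_0$ is a scalar multiple of such a difference. Given $G$, I would choose $\lambda > 0$ large enough that $H := G/\lambda$ satisfies $|H_i(x)| \leq 1/(2m)$ for all $i, x$, set $F'_i(x) = i/m$ (a valid CDF with $F'_m = 1$, representing the uniform grid distribution), and put $F := H + F'$. Then $F_m(x) = H_m(x) + 1 = 1$ because $H \in V_0$; the increments satisfy $F_i - F_{i-1} = (H_i - H_{i-1}) + 1/m \geq 0$ by the bound on $H$; and $F_1 = H_1 + 1/m \geq 1/(2m) \geq 0$, so $F$ is a valid CDF and $G = \lambda(F - F')$. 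Combining this with fact (ii) and the homogeneity of $\|\cdot\|_{\ell_2,\infty}$ yields $\|\CatOp G\|_{\ell_2,\infty} = \lambda \|\CatOp F - \CatOp F'\|_{\ell_2,\infty} \leq \lambda\sqrt{\gamma}\|F - F'\|_{\ell_2,\infty} = \sqrt{\gamma}\|G\|_{\ell_2,\infty}$, which is exactly the claim.

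I expect the only real subtlety — and the step most easily overlooked — to be this bridge from CDF-differences to all of $V_0$: the set of genuine differences $F - F'$ of grid CDFs is strictly smaller than $V_0$, since monotonicity and boundedness constrain it, so the contraction cannot be read off directly from Proposition~\ref{prop:cdrl}. The rescaling argument, relying on linearity of $\CatOp$ and homogeneity of the norm, is precisely what makes the extension valid. Everything else is routine bookkeeping with the explicit form of $H^x_{i,j}$ and the already-established norm identity.
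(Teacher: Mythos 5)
Your proposal is correct, and it takes a genuinely different route from the paper at the one step that actually requires care. Both proofs start identically: Proposition~\ref{prop:cdrl} plus the identity $\overline{\ell}_2(\eta,\eta') = \|F-F'\|_{\ell_2,\infty}$ give the inequality $\|\CatOp F - \CatOp F'\|_{\ell_2,\infty} \leq \sqrt{\gamma}\,\|F-F'\|_{\ell_2,\infty}$ on differences of genuine grid CDFs. Where you diverge is in extending this to the whole subspace $\{F : F_m(x)=0\}$. The paper exhibits a basis of this subspace inside the set of CDF-differences (one-hot vectors at $(x,j)$, $j<m$, written as differences of two step-function CDFs) and then invokes linearity of $\CatOp$; as literally written this is a weaker bridge, since knowing $\|\CatOp v\|\leq\sqrt{\gamma}\|v\|$ on a basis does not by itself propagate to arbitrary linear combinations (the triangle inequality goes the wrong way). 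You instead show that \emph{every} $G$ in the subspace is a positive scalar multiple of a difference of two valid CDFs, by rescaling $G$ until its entries are at most $1/(2m)$ and adding the uniform CDF $F'_i = i/m$ so that monotonicity, nonnegativity, and $F_m=1$ all survive; homogeneity of the norm then transfers the inequality exactly. This rescale-and-shift construction is the logically tight way to make the extension, and you also explicitly verify that the subspace is invariant under $\CatOp$ (via $H^x_{m,j}-H^x_{m,j+1}=0$ for $j\leq m-1$), which the paper's proof of this proposition leaves implicit. In short: same skeleton, but your bridging argument is more careful and is the one I would keep.
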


\begin{proof}
    By Proposition~\ref{prop:cdrl}, for any two RDF approximations $\eta, \eta' \in \mathscr{P}(\{z_1,\ldots,z_m\})^\mathcal{X}$ with corresponding CDF values $F, F' \in \mathbb{R}^{\mathcal{X}}$, we have
    \begin{align*}
        \overline{\ell}_2(\Proj \mathcal{T} \eta, \Proj \mathcal{T} \eta') \leq \sqrt{\gamma} \overline{\ell}_2(\eta, \eta') \, ,
    \end{align*}
    and hence we also have
    \begin{align*}
        \| \CatOp F - \CatOp F' \|_{\ell_2, \infty} \leq \sqrt{\gamma} \|F - F' \|_{\ell_2, \infty} \, .
    \end{align*}
    Hence, $\CatOp$ is a contraction map on the set
    \begin{align*}
        \{ F - F' : 0 \leq F_1(x) \leq \cdots \leq F_m(x) = 1 \, , \ 0 \leq F'_1(x) \leq \cdots \leq F'_m(x) = 1  \text{ for all } x \in \mathcal{X}  \} \, .
    \end{align*}
    This set contains a basis for the subspace $\{ F \in \mathbb{R}^{\mathcal{X} \times m} : F_m(x) = 0 \text{ for all } x \in \mathcal{X} \}$. Namely, the one-hot vector at coordinate $(x, j)$ (for $j < m$) can be exhibited as lying in this subspace since it can be expressed as the difference between the vectors $F, F' \in \mathbb{R}^{\mathcal{X} \times m}$ defined by $F_j(y) = F'_j(y) = 1$ for all $y \not= x$, and all $j=1, \ldots, m$, and $F_i(x) = 1$ for $i \geq j$ (and 0 otherwise), and $F'_i(x) = 1$ for $i \geq j+1$ (and 0 otherwise). Since $\CatOp$ is linear, it therefore holds that \begin{align}\label{eq:contraction}
        \| \CatOp F \|_{\ell_2, \infty} \leq \sqrt{\gamma} \| F \|_{\ell_2, \infty} \, ,
    \end{align}
    for any $F$ in the subspace $\{ F  \in \mathbb{R}^{\mathcal{X} \times m} : F_m(x) = 0 \text{ for all } x \in \mathcal{X} \}$, as required.
\end{proof}

\begin{proposition}\label{prop:Bx-contraction}
    For each $x \in \mathcal{X}$, the matrix $B_x$ is a contraction mapping on the space $\{ F \in \mathbb{R}^m : F_m = 0 \}$ with respect to $\| \cdot \|_{\ell_2}$, with contraction factor $\sqrt{\gamma}$.
\end{proposition}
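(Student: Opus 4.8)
The plan is to deduce the single-state contraction for $B_x$ directly from the global contraction for $\CatOp$ already established in Proposition~\ref{prop:T-contraction}, by feeding $\CatOp$ a CDF vector that is constant across states. The key observation is the finer-grained decomposition recorded above: $(\CatOp F)(x) = P_x F B_x^\top$, so the action of $\CatOp$ at state $x$ first mixes the rows of $F$ according to the probability row vector $P_x$, and then applies the local scaling/shifting/projecting matrix $B_x$. If the input is constant across states, the mixing step becomes the identity, which isolates $B_x$.

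Concretely, I would fix $x \in \mathcal{X}$ and an arbitrary $F \in \mathbb{R}^m$ with $F_m = 0$, and define $\bar F \in \mathbb{R}^{\mathcal{X} \times m}$ by setting every row equal to $F$, i.e.\ $\bar F(y) = F$ for all $y \in \mathcal{X}$. Since $P_x$ is a probability distribution and hence sums to one, mixing these identical rows returns $F$ itself ($P_x \bar F = F$), so that $(\CatOp \bar F)(x) = B_x F$ (up to the routine identification of row and column vectors). Moreover $\bar F$ lies in the subspace $\{ G \in \mathbb{R}^{\mathcal{X} \times m} : G_m(y) = 0 \text{ for all } y \in \mathcal{X} \}$, precisely because $F_m = 0$; Proposition~\ref{prop:T-contraction} therefore applies and gives $\| \CatOp \bar F \|_{\ell_2, \infty} \leq \sqrt{\gamma}\, \| \bar F \|_{\ell_2, \infty}$.

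The conclusion then follows by reading off both sides of this inequality. On the one hand $\| \bar F \|_{\ell_2, \infty} = \max_{y \in \mathcal{X}} \| \bar F(y) \|_{\ell_2} = \| F \|_{\ell_2}$, since all rows coincide. On the other hand, by definition of the supremum norm $\| \CatOp \bar F \|_{\ell_2, \infty} \geq \| (\CatOp \bar F)(x) \|_{\ell_2} = \| B_x F \|_{\ell_2}$. Chaining these observations with the contraction estimate yields $\| B_x F \|_{\ell_2} \leq \sqrt{\gamma}\, \| F \|_{\ell_2}$, which is exactly the desired statement for the fixed but arbitrary $x$.

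I do not expect a substantive obstacle here: the argument is a clean reduction to the already-proven global result rather than a fresh estimate. The only points requiring care are notational — verifying the identity $(\CatOp \bar F)(x) = B_x F$ under whichever matrix-vector convention is in force (equivalently, checking that the state-mixing operation acts as the identity on CDF vectors that are constant across states, which is where $\sum_y P(y|x) = 1$ is used), and confirming that $\bar F$ genuinely lands in the subspace on which Proposition~\ref{prop:T-contraction} operates, which it does precisely because the tested vector satisfies $F_m = 0$.
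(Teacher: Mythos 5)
Your argument is correct, and it reaches the same destination as the paper's proof—a reduction to Proposition~\ref{prop:T-contraction}—but by a different mechanism. The paper constructs an auxiliary one-state MRP (reward $r(x)$, deterministic self-loop) and observes that the categorical Bellman operator of \emph{that} MRP is exactly $B_x$, so the claim is literally a special case of Proposition~\ref{prop:T-contraction} applied to the auxiliary chain. You instead stay inside the original MRP and choose a special input: the state-constant lift $\bar F$ with $\bar F(y) = F$ for all $y$, for which the mixing step collapses via $\sum_y P(y|x) = 1$, giving $(\CatOp \bar F)(x) = B_x F$; the bound $\| B_x F \|_{\ell_2} \leq \| \CatOp \bar F \|_{\ell_2,\infty} \leq \sqrt{\gamma}\,\| \bar F \|_{\ell_2,\infty} = \sqrt{\gamma}\,\| F \|_{\ell_2}$ then follows exactly as you say. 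What each buys: the paper's route makes it conceptually transparent that $B_x$ is itself a (one-state) categorical Bellman operator, which is a useful fact elsewhere, but it implicitly asks the reader to re-verify that the auxiliary operator equals $B_x$; your route avoids introducing a new MRP and leans only on the explicit decomposition $(\CatOp F)(x) = P_x F B_x^\top$ already recorded in the appendix, at the cost of the small row/column-vector bookkeeping you flag. One minor point common to both proofs: the proposition asserts $B_x$ is a contraction \emph{on} the subspace $\{F : F_m = 0\}$, which also requires that $B_x$ maps this subspace into itself; this holds because $H^x_{m,j} - H^x_{m,j+1} = 0$ for $j < m$ and equals $1$ for $j = m$, so $(B_x F)_m = F_m = 0$, but neither you nor the paper states it explicitly.
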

\begin{proof}
    Consider a related one-state MRP, for which the reward at the single state is $r(x)$, and the state transitions to itself with probability 1. The categorical Bellman operator associated with this MRP and the support $\{z_1, \ldots, z_m\}$ is precisely $B_x$, and the statement of the result therefore follows as a special case of Proposition~\ref{prop:T-contraction}.
\end{proof}

The next result serves as a counterpoint to Proposition~\ref{prop:Bx-contraction};  it shows that if $m$ is sufficiently large, the map $B_x$ does not contract by too much in $\| \cdot \|_{\ell_2}$.

\begin{proposition}\label{prop:anti-contraction}
    For any $F, F' \in \mathscr{F}$, we have
    \begin{align*}
        \| B_x F - B_x F' \|^2_{\ell_2} \geq \gamma \| F - F' \|^2_{\ell_2}  - \frac{2}{m(1-\gamma)^{1/2}} - \frac{1}{m^2 (1-\gamma)^2} \, .
    \end{align*}
\end{proposition}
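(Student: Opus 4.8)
The plan is to exploit the factorisation of $B_x$ into an affine scale-and-shift followed by the categorical projection, and then to measure exactly how much Cram\'er mass the projection destroys. Writing $\nu,\nu'$ for the grid distributions with CDF vectors $F,F'$, Proposition~\ref{prop:Bx-contraction} identifies $B_x$ as $\Proj$ applied after the push-forward of $\nu,\nu'$ under $z \mapsto r(x)+\gamma z$; call the pushed-forward distributions $\mu,\mu'$, which are supported on $[r(x),\,r(x)+\gamma(1-\gamma)^{-1}] \subseteq [0,(1-\gamma)^{-1}]$. A change of variables in the Cram\'er integral shows that this affine push-forward scales squared Cram\'er distance exactly by $\gamma$, so that $\ell_2(\mu,\mu')^2 = \gamma\|F-F'\|_{\ell_2}^2$. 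Hence the only shortfall relative to the target $\gamma\|F-F'\|_{\ell_2}^2$ comes from $\Proj$, and the proposition reduces entirely to controlling how much the projection can shrink $\ell_2(\mu,\mu')^2$.

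The key structural fact I would use is that, on CDFs, $\Proj$ acts as interval-averaging: the CDF value that $\Proj\mu$ assigns to $z_i$ equals the average $\tfrac{1}{z_{i+1}-z_i}\int_{z_i}^{z_{i+1}} G_\mu(t)\,\mathrm{d}t$ of the pre-projection CDF over $[z_i,z_{i+1}]$, as one sees by integrating the partial sums $\sum_{l\le i} h_l$ of the hat functions against $\mu$. Thus $\Proj$, viewed as a map on CDFs inside $L^2([0,(1-\gamma)^{-1}])$, is precisely the orthogonal projection onto the subspace of functions constant on each inter-grid interval. Applying the Pythagorean identity to $D = G_\mu - G_{\mu'}$ yields the exact decomposition
\begin{align*}
    \|B_x F - B_x F'\|_{\ell_2}^2 = \gamma\|F-F'\|_{\ell_2}^2 - \sum_{i=1}^{m-1}\int_{z_i}^{z_{i+1}}\big(D(t)-\bar{D}_i\big)^2\,\mathrm{d}t \, ,
\end{align*}
where $\bar{D}_i$ is the average of $D$ over the $i$-th interval. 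The residual sum is exactly the total within-interval variation of $D$, so it remains only to bound it above by the stated error terms.

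To bound the residual I would use that $D = G_\mu - G_{\mu'}$ is a difference of two nondecreasing CDFs, so on the $i$-th interval its oscillation is at most $a_i + a_i'$, the combined mass that $\mu,\mu'$ place there; the within-interval $L^2$ deviation from the mean is then at most $\tfrac14(a_i+a_i')^2$ times the interval width $z_{i+1}-z_i = (m-1)^{-1}(1-\gamma)^{-1}$. Summing over $i$ and using $\sum_i a_i = \sum_i a_i' = 1$ bounds $\sum_i (a_i+a_i')^2$ by a constant, producing a residual of order $(1-\gamma)^{-1}/m$, of the same form as the claimed bound. The main obstacle is this last step: the crude oscillation estimate must be refined into the exact two-term error, which requires tracking the actual jump positions and sizes of $D$ within each interval (the atoms of $\mu,\mu'$ sit at spacing $\gamma(z_{i+1}-z_i)$, so typically at most one per interval) and separating the leading contribution from the lower-order $O((z_{i+1}-z_i)^2)$ correction. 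I expect extracting these precise constants, rather than a generic $O((1-\gamma)^{-1}/m)$ estimate, to be where essentially all the technical work lies.
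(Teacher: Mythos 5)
Your first two steps coincide exactly with the paper's proof: the paper also factors $B_x$ as $\Proj$ applied to the push-forward $(b_{r(x),\gamma})_\#$, and also uses homogeneity of the Cram\'er distance to get the exact factor $\gamma$, so the whole statement reduces to lower-bounding $\ell_2^2(\Proj\mu,\Proj\mu')$ by $\ell_2^2(\mu,\mu')$ minus an error. Where you diverge is in that last step. The paper uses the interval-averaging characterisation of $\Proj$ only to prove $\ell_2(\nu,\Proj\nu)\leq \tfrac{1}{2\sqrt{m(1-\gamma)}}$, and then gets the squared-distance comparison by the triangle inequality followed by squaring, so that the two error terms in the statement arise as a cross term and a squared term (Lemma~\ref{lem:proj-anti-concentration}). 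You instead observe that interval-averaging makes $\Proj$ an orthogonal projection on CDFs in $L^2$ and invoke the Pythagorean identity. That observation is correct and gives an \emph{exact} expression for the projection loss, which is genuinely cleaner than the paper's triangle-inequality argument; your oscillation bound on the within-interval variation is also sound and yields a residual of at most roughly $\tfrac{1}{2(m-1)(1-\gamma)}$.

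The gap is in your final sentence. Your residual is of order $\tfrac{1}{m(1-\gamma)}$, and this is \emph{not} ``of the same form as the claimed bound'': the stated leading term $\tfrac{2}{m(1-\gamma)^{1/2}}$ is smaller than $\tfrac{1}{m(1-\gamma)}$ by a factor of order $(1-\gamma)^{1/2}$, so for $\gamma$ close to $1$ your bound does not imply the displayed inequality, and no amount of tracking jump positions inside the intervals will close this, because the displayed two-term error is not a per-interval variance bound at all --- it is the artefact of squaring a triangle inequality. Two honest resolutions: (i) switch to the paper's route (triangle inequality with the $\ell_2(\nu,\Proj\nu)$ bound, then square) to obtain an error of that additive shape; or (ii) keep your Pythagorean route and state the proposition with error term $\tfrac{1}{2(m-1)(1-\gamma)}$, which is all that the downstream uses (Proposition~\ref{prop:SCCIneq} and Corollary~\ref{corr:scc-bound}) actually require, since under $m\geq 4\varepsilon^{-2}(1-\gamma)^{-2}+1$ it is still $O(\varepsilon^2(1-\gamma))$. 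I would also caution against treating the displayed constants as a target to be hit exactly: carried out carefully, the paper's own triangle-inequality derivation produces error terms of the form $\tfrac{2}{\sqrt{m}(1-\gamma)}+\tfrac{1}{m(1-\gamma)}$ (the cross term uses $\ell_2(\Proj\nu,\Proj\nu')\leq(1-\gamma)^{-1/2}$, not $(1-\gamma)^{1/2}$), so the precise form printed in the proposition should not be the thing your proof is organised around.
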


This is proven via the following lemma.

\begin{lemma}\label{lem:proj-anti-concentration}
    For any $\nu \in \mathscr{P}([0, (1-\gamma)^{-1}])$ and the projection $\Proj : \mathscr{P}([0, (1 - \gamma)^{-1}]) \rightarrow \mathscr{P}(\{ z_1, \ldots, z_m \})$, we have
    \begin{align*}
        \ell_2(\nu, \Proj \nu) \leq \frac{1}{2\sqrt{m(1-\gamma)}}\, ,
    \end{align*}
    Further, for any $\nu, \nu' \in \mathscr{P}([0, (1-\gamma)^{-1}])$, we have
    \begin{align*}
        \ell_2(\Proj \nu, \Proj \nu') \geq \ell_2(\nu, \nu') - \frac{1}{m(1-\gamma)} \, ,
    \end{align*}
    and
    \begin{align*}
        \ell^2_2(\Proj \nu, \Proj \nu') \geq \ell^2_2(\nu, \nu') - \frac{2}{m(1-\gamma)^{1/2}} - \frac{1}{m^2(1-\gamma)^2} \, .
    \end{align*}
\end{lemma}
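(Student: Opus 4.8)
The plan is to build everything on the structural fact that the categorical projection $\Pi_m$ is the orthogonal projection onto grid-supported distributions with respect to the Cram\'er inner product. Concretely, viewing each distribution through its CDF and using the hat-function characterization restated above, one checks that $F_{\Pi_m \nu}$ is constant on each cell $(z_i, z_{i+1})$ and equal there to the cell-average $\frac{1}{\Delta}\int_{z_i}^{z_{i+1}} F_\nu(t)\,\mathrm{d}t$, where $\Delta = (1-\gamma)^{-1}/(m-1)$ is the grid spacing; this is exactly the $L^2$ projection of $F_\nu$ onto functions that are piecewise constant on cells (this identity is essentially the content of \citet{rowland2018analysis}). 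I would establish this first, since it converts all three claims into statements about $L^2$ approximation by cellwise constants and, crucially, yields the Pythagorean identity $\ell_2^2(\nu, \nu') = \ell_2^2(\Pi_m \nu, \Pi_m \nu') + \| (I - \Pi_m)(F_\nu - F_{\nu'}) \|^2$ that drives the latter two parts.

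For the first inequality I would write $\ell_2^2(\nu, \Pi_m \nu) = \sum_i \int_{z_i}^{z_{i+1}} (F_\nu - \bar F_{\nu,i})^2 \, \mathrm{d}t$, the sum of the cellwise variances of $F_\nu$ about its cell mean. On cell $i$, $F_\nu$ is monotone with total increase $p_i = \nu((z_i, z_{i+1}])$, so its squared $L^2$ deviation from the cell mean is at most $\frac{\Delta}{4} p_i^2$ (the variance of a monotone function of range $p_i$ over an interval of length $\Delta$). Summing and using $\sum_i p_i \le 1$ together with $\sum_i p_i^2 \le (\sum_i p_i)^2$ gives $\ell_2^2(\nu, \Pi_m \nu) \le \Delta/4$, which is the claimed bound of order $1/(m(1-\gamma))$.

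For the two remaining inequalities I would exploit the Pythagorean identity. Writing $G = F_\nu - F_{\nu'}$ and $r = \| (I - \Pi_m) G \|$, the identity gives $\ell_2(\Pi_m \nu, \Pi_m \nu') = \sqrt{\ell_2^2(\nu,\nu') - r^2} \ge \ell_2(\nu, \nu') - r$, so the linear inequality reduces to bounding $r$ and the squared inequality reduces to bounding $r^2$. One route bounds $r$ via the triangle inequality $r \le \| (I-\Pi_m) F_\nu \| + \| (I-\Pi_m) F_{\nu'} \|$ and applies the first part; alternatively one squares the linear inequality and inserts the Cram\'er diameter bound $\ell_2(\nu,\nu') \le (1-\gamma)^{-1/2}$, producing a cross term and a squared-correction term matching the two summands $\frac{2}{m(1-\gamma)^{1/2}}$ and $\frac{1}{m^2(1-\gamma)^2}$ in the statement.

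The main obstacle is obtaining the stated constants rather than merely the correct orders. The naive triangle and Cauchy--Schwarz bounds on $r$ produce a correction of order $1/\sqrt{m}$, whereas the statement asks for the sharper $1/m$-type dependence; closing this gap requires working cellwise and using that $G$, being a difference of two monotone CDFs, has controlled oscillation and a mean-zero residual on each cell, so that the per-cell error is genuinely $O(\Delta \cdot \mathrm{osc}_i(G)^2)$ and these oscillations telescope against the total-variation budget $\sum_i \mathrm{osc}_i(G) \le 2$. Tracking how the diameter $(1-\gamma)^{-1/2}$ combines with these per-cell estimates is precisely where the mixed-power form of the last bound must be verified with care.
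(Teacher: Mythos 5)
Your treatment of the first inequality is essentially the paper's own argument: the cell-average characterisation of $F_{\Proj \nu}$ from \citet[Proposition~6]{rowland2018analysis}, the worst-case bound $\Delta p_i^2/4$ on the squared $L^2$ deviation of a monotone function of range $p_i$ from its mean over a cell of width $\Delta$, and $\sum_i p_i^2 \leq 1$. For the other two inequalities you genuinely diverge from the paper, which just applies the triangle inequality twice, $\ell_2(\nu,\nu') \leq \ell_2(\nu,\Proj\nu) + \ell_2(\Proj\nu,\Proj\nu') + \ell_2(\Proj\nu',\nu')$, and then squares. Your observation that $\Proj$ acts on CDFs as the $L^2$-orthogonal projection onto cellwise-constant functions, hence satisfies a Pythagorean identity, is correct and buys you something concrete: writing $r$ for the $L^2$ norm of $(F_\nu - F_{\Proj\nu}) - (F_{\nu'} - F_{\Proj\nu'})$, you get $\ell_2^2(\Proj\nu,\Proj\nu') = \ell_2^2(\nu,\nu') - r^2$ exactly, and $r \leq \ell_2(\nu,\Proj\nu) + \ell_2(\nu',\Proj\nu') \leq 1/\sqrt{m(1-\gamma)}$ by the first part, so the squared inequality follows with correction $r^2 \leq 1/(m(1-\gamma))$ and \emph{no cross term at all}. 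That is of the right order for every downstream use of this lemma (Proposition~\ref{prop:anti-contraction}, Proposition~\ref{prop:SCCIneq}, Corollary~\ref{corr:scc-bound} only need the correction to be $O(1)$ under the standing assumption on $m$), and it is cleaner than what the squaring argument yields.

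The ``main obstacle'' in your last paragraph is therefore not something you need to overcome for the squared inequality --- but you should also recognise that it cannot be overcome for the linear one. The first inequality is tight (take $\nu = \delta_{(z_i+z_{i+1})/2}$, for which $\ell_2(\nu,\Proj\nu) = \sqrt{\Delta}/2$), so no cellwise-oscillation or telescoping refinement will improve the additive correction in the linear inequality from order $1/\sqrt{m(1-\gamma)}$ to order $1/(m(1-\gamma))$; indeed the pair $\nu = \delta_{(z_i+z_{i+1})/2}$, $\nu' = \delta_{z_i}$ shows the linear inequality with a $1/(m(1-\gamma))$ correction fails once $m$ is large. The paper's own proof has the same feature: two applications of its first bound give $1/\sqrt{m(1-\gamma)}$, not the $1/(m(1-\gamma))$ printed in the statement, and the subsequent cross-term bound uses $(1-\gamma)^{1/2}$ where the Cram\'er diameter is $(1-\gamma)^{-1/2}$, so the displayed constants do not follow from the displayed steps in the regime $m \gg (1-\gamma)^{-1}$ that the main theorem requires. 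Do not chase the stated constants; prove and propagate the versions your Pythagorean route actually delivers, $\ell_2(\Proj\nu,\Proj\nu') \geq \ell_2(\nu,\nu') - 1/\sqrt{m(1-\gamma)}$ and $\ell_2^2(\Proj\nu,\Proj\nu') \geq \ell_2^2(\nu,\nu') - 1/(m(1-\gamma))$, which are correct and suffice for the rest of the analysis.
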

\begin{proof}
    By \citet[Proposition~6]{rowland2018analysis}, we have that the CDF values $F_{\Proj \nu}(z_i)$ for $i=1,\ldots,m-1$ are equal to the average of the CDF values of $F_\nu$ on the interval $[z_i, z_{i+1}]$. Therefore, in computing the squared Cram\'er distance $\ell_2^2(\nu, \Proj \nu)$, the worst-case contribution to the integral
    \begin{align*}
        \ell^2_2(\nu, \Proj \nu) = \int_0^{(1-\gamma)^{-1}} ( F_\nu(t) - F_{\Proj \nu}(t) )^2 \; \mathrm{d}t
    \end{align*}
    from the interval $[z_i, z_{i+1}]$, holding $F_\nu(z_i)$ and $F_\nu(z_{i+1})$ constant, is
    \begin{align*}
        (z_{i+1} - z_i)
        \bigg( \frac{F(z_{i+1}) - F(z_{i})}{2} \bigg)^2
        = \frac{1}{4m(1-\gamma)}
        ( F(z_{i+1}) - F(z_{i}) )^2 \, .
    \end{align*}
    Thus, the worst-case value for the entire integral is
    \begin{align*}
        \frac{1}{4m(1-\gamma)} \sum_{i=1}^{m-1} ( F(z_i) - F(z_{i+1}) )^2 \, .
    \end{align*}
    The worst-case value for the sum is 1, by interpreting this as a sum of squared probabilities, and we therefore deduce that
    \begin{align*}
        \ell_2^2(\nu, \Proj \nu) \leq \frac{1}{4m(1-\gamma)} \, ,
    \end{align*}
    leading to 
    \begin{align*}
        \ell_2(\nu, \Proj \nu) \leq \frac{1}{2 \sqrt{m(1-\gamma)}} \, ,
    \end{align*}
    as required for the first stated inequality. For the second and third inequalities, we apply the triangle inequality twice to obtain
    \begin{align*}
        \ell_2(\nu, \nu') & \leq \ell_2(\nu, \Proj \nu) + \ell_2(\Proj \nu, \Proj \nu') + \ell_2(\Proj \nu', \nu') 
        \leq \ell_2(\Proj \nu, \Proj \nu') + \frac{1}{m(1-\gamma)} \, ;
    \end{align*}
    rearrangement then gives the second statement. Squaring both sides of the inequality above gives
    \begin{align*}
        \ell_2^2(\nu, \nu') \leq \ell_2^2(\Proj \nu, \Proj \nu') + \frac{2}{m(1-\gamma)}\ell_2(\Proj \nu, \Proj \nu') + \frac{1}{m^2 (1-\gamma)^2} \, .
    \end{align*}
    Bounding the instance of $\ell_2(\Proj \nu, \Proj \nu')$ in the cross-term on the right-hand side by $(1-\gamma)^{1/2}$ and rearranging then yields the result.
\end{proof}

\begin{proof}[Proof of Proposition~\ref{prop:anti-contraction}]
    Let $\nu, \nu' \in \mathscr{P}(\{z_1, \ldots, z_m\})$ be the distributions with CDF values $F, F'$, respectively, and let $G, G'$ be random variables with CDFs $F, F'$ respectively, and $(x, X')$ an independent random transition in the MRP beginning at state $x$.
    Recall from the notation introduced earlier in this section that $B_x F$ and $B_x F'$ are the CDF values of the distributions of $r(x) + \gamma G$ and $r(x) + \gamma G'$ after projection onto the support grid $\{z_1,\ldots,z_m\}$ by the projection map $\Proj$. Following common notation in distributional RL \citep{bdr2023}, we denote the distributions of $r(x) + \gamma G$ and $r(x) + \gamma G'$ by $(b_{r(x), \gamma})_\# \nu$ and $(b_{r(x), \gamma})_\# \nu'$, respectively. Here, $b_{r(x), \gamma} : \mathbb{R} \rightarrow \mathbb{R}$ is the bootstrap function $b_{r(x), \gamma}(z) = r(x) + \gamma z$, and $(b_{r(x), \gamma})_\# \nu$ is the push-forward distribution of $\nu$ through $b_{r(x), \gamma}$ (intuitively, the distribution obtained by transforming the support of $\nu$ according to $b_{r(x), \gamma}$). With this notation introduced, we therefore have
    \begin{align*}
        \| B_x F - B_x F' \|^2_{\ell_2}
        & = \ell^2_2(\Proj (\mathrm{b}_{r(x), \gamma})_\#\nu, \Proj (\mathrm{b}_{r(x), \gamma})_\#\nu') \\
        & \overset{(a)}{\geq} \ell_2^2((\mathrm{b}_{r(x), \gamma})_\#\nu, (\mathrm{b}_{r(x), \gamma})_\#\nu') - \frac{2}{m(1-\gamma)^{1/2}} - \frac{1}{m^2(1-\gamma)^2} \\
        & \overset{(b)}{=} \gamma \ell_2^2(\nu, \nu') - \frac{2}{m(1-\gamma)^{1/2}} - \frac{1}{m^2(1-\gamma)^2}\\
        & = \gamma \| F - F' \|^2_{\ell_2} - \frac{2}{m(1-\gamma)^{1/2}} - \frac{1}{m^2(1-\gamma)^2}\, ,
    \end{align*}
    as required, where (a) follows from Lemma~\ref{lem:proj-anti-concentration}, and (b) follows from homogeneity of Cram\'er distance (see \citet[Proof of Proposition~2]{rowland2018analysis}).
\end{proof}

\section{Proofs of results in Section~\ref{sec:dcfp-outer}}
\label{sec:sec4proofs}

\propCDFOp*

\begin{proof}
    Beginning by restating Equation~\eqref{eq:cat-update-pmf2}, we have that if $\eta \in \mathscr{P}(\{z_1, \ldots, z_m\})^\mathcal{X}$ is an RDF with corresponding probability mass values $p \in \mathbb{R}^{\mathcal{X} \times m}$, then the updated RDF $\eta' = \Proj \mathcal{T} \eta$ has corresponding probability mass values $p' \in \mathbb{R}^{\mathcal{X} \times m}$ given by
    \begin{align*}
        p'_l(x) = \sum_{y \in \mathcal{X}} \sum_{j=1}^m P(y|x) h^x_{l,j} p_j(y) \, .
    \end{align*}
    First, for $i \in \{1,\ldots,m\}$, we sum $l$ from $1$ to $i$ to yield
    \begin{align*}
        F'_i(x) = \sum_{l \leq i} p'_l(x) = \sum_{y \in \mathcal{X}} \sum_{j=1}^m P(y|x) \sum_{l \leq i} h^x_{l,j} p_j(y) & = \sum_{y \in \mathcal{X}} \sum_{j=1}^m P(y|x) H^x_{i,j} p_j(y) \\
        & = \sum_{y \in \mathcal{X}} \sum_{j=1}^m P(y|x) H^x_{i,j} (F_j(y) - F_{j-1}(y)) \, ,
    \end{align*}
    where by convention we take $F_0(y) \equiv 0$. By reorganising the terms on the right-hand side, the claim now follows. 
\end{proof}

\propReducedSystem*

\begin{proof}
    First, we consider a row of $F = \CatOp F$ that corresponds to the index $(x, i)$, with $i \not= m$. Expanding under the definition of $\CatOp$, we have
    \begin{align*}
        F_i(x) = \sum_{y \in \mathcal{X}} \sum_{j=1}^{m-1} P(y|x) (H^x_{i,j} - H^x_{i, j+1}) F_j(y) + \sum_{y \in \mathcal{X}} P(y|x) H^x_{i,m} F_m(y)\, .
    \end{align*}
    Since we assume the additional constraints $F_m(y) \equiv 1$ for all $y \in \mathcal{X}$, the final term on the right-hand side can be simplified to yield
    \begin{align*}
        F_i(x) = \sum_{y \in \mathcal{X}} \sum_{j=1}^{m-1} P(y|x) (H^x_{i,j} - H^x_{i, j+1}) F_j(y) + H^x_{i,m} \, .
    \end{align*}
    This is precisely the row of $\widetilde{F} = \widetilde{T}_P \widetilde{F} + \widetilde{H}$ corresponding to index $(x, i)$.
    
    Now, we consider a row of $F = \CatOp F$ that corresponds to the index $(x, m)$. Again making the substitution $F_m(y) \equiv 1$ for all $y \in \mathcal{X}$, we have
    \begin{align*}
        1 \equiv F_m(x) & = \sum_{y} \sum_{j=1}^{m-1} P(y|x) \overbrace{(H^x_{m,j} - H^x_{m, j+1})}^{=0} F_j(y) + \sum_y P(y|x) H^x_{m,m} F_m(y) \\
        & =  \sum_y P(y|x) H^x_{m,m} F_m(y)  \equiv 1 \, ,
    \end{align*}
    which shows that the equation is redundant, and hence can be removed from the system. The claim $H^x_{m,j} - H^x_{m, j+1} = 0$ follows since in fact $H^x_{m, j} = \sum_{i=1}^m h_i(r(x) + \gamma z_j)$, and the sum over the hat functions for any input argument is 1. In the final equality, we have used the fact that $H^x_{m,m} = 1$ similarly.
    Thus, we have deduced the claim of the proposition.
\end{proof}

\propUniqueSoln*

\begin{proof}
    By Proposition~\ref{prop:ReducedSystem}, we have that the CDF values $\widetilde{F}^*$ of the categorical fixed-point solve Equation~\eqref{eq:full-DCFP}. Now, let us suppose that $\widetilde{F}_1, \widetilde{F}_2$ are distinct solutions to Equation~\eqref{eq:full-DCFP}, aiming to obtain a contradiction. Proposition~\ref{prop:ReducedSystem} also establishes that Equation~\eqref{eq:full-DCFP} is equivalent to Equation~\eqref{eq:cdf-fp} with the additional conditions that $F_m(x) = 1$ for all $x \in \mathcal{X}$, so we will denote the corresponding two solutions to Equation~\eqref{eq:cdf-fp} built from $\widetilde{F}_1, \widetilde{F}_2$ (by setting the unspecified $(x, m)$ coordinates to 1) by $F_1, F_2$, respectively.
    
    The idea is now to use the contractivity of the projected operator $\Pi_m \mathcal{T}$ in $\ell_2$, as established in Proposition~\ref{prop:cdrl}, to obtain a contradiction. Notationally, it is useful to phrase things in terms of contractivity of the CDF operator $T_P$ itself. We use the norms $\| \cdot \|_{\ell_2}$ and $\| \cdot \|_{\ell_2, \infty}$, defined on $\mathbb{R}^{m}$ and $\mathbb{R}^{\mathcal{X} \times m}$ in Appendix~\ref{sec:additional-notation-results}, which we recall here for convenience:
    \begin{align*}
        \| F \|_{\ell_2} = \left[ \frac{1}{m(1-\gamma)} \sum_{i=1}^m F_i^2 \right]^{1/2} \text{ for } F \in \mathbb{R}^m \, ,
        \ \ \text{ and } \ \
        \| F \|_{\ell_2, \infty} = \max_{x \in \mathcal{X}} \| F(x) \|_{\ell_2} \text{ for } F \in \mathbb{R}^{\mathcal{X} \times m}
    \end{align*}
    We therefore have
    \begin{align*}
        \| \CatOp (F_1 - F_2) \|_{\ell_2, \infty} = \| \CatOp F_1 - \CatOp F_2  \|_{\ell_2, \infty} = \| F_1 - F_2 \|_{\ell_2, \infty} \, .
    \end{align*}
    However, Proposition~\ref{prop:T-contraction} establishes that $\CatOp$ is a contraction on $\{ F \in \mathbb{R}^{\mathcal{X} \times m} : F_m(x) = 0 \text{ for all } x \in \mathcal{X} \}$ with respect to $\| \cdot \|_{\ell_2, \infty}$, which contradicts the statement above, as required.
\end{proof}

\section{Proofs relating to the stochastic CDF Bellman equation}
\label{sec:sccdf-proofs}

Before giving the proofs,
we present a version of the SC-CDF Bellman equation in a purely distributional form, which will streamline the arguments.
This mirrors the development of the form of the distributional Bellman equation given purely in terms of distributions \citep{rowland2018analysis}, rather than in random variable form as in \citet{bellemare2017distributional}.
We define the stochastic categorical CDF Bellman operator $\CDFStocBellmanOp : \mathscr{P}(\CDFSpace)^\mathcal{X} \rightarrow \mathscr{P}(\CDFSpace)^\mathcal{X}$ for each $\psi \in \mathscr{P}(\CDFSpace)^\mathcal{X}$ by
\begin{align*}
    (\CDFStocBellmanOp \;\psi)(x) = \mathcal{D}((\SampleCatOp \Phi)(x)) \, ,
\end{align*}
where $\Phi(y) \sim \psi(y)$ independently of $\SampleCatOp$, and $\mathcal{D}$ extracts the distribution of the input random variable. The purely distributional form of the SC-CDF Bellman equation is then written as a fixed point condition on $\mathscr{P}(\CDFSpace)^\mathcal{X}$:
\begin{align}\label{eq:scc-equation-distributional}
    \psi = \CDFStocBellmanOp \; \psi \, .
\end{align}
We also write
\begin{align*}
    \overline{w}_{\| \cdot \|_{\ell_2}}(\psi, \psi') = \max_{x \in \mathcal{X}} w_{\| \cdot \|_{\ell_2}}(\psi(x), \psi(x'))
\end{align*}
for the supremum-Wasserstein distance over $\mathscr{P}(\mathscr{F})^\mathcal{X}$ with base metric $\| \cdot \|_{\ell_2}$ on $\mathscr{F}$.

\propSCCDFContraction*

\begin{proof}
    We first show that the operator $\CDFStocBellmanOp$ is a contraction on $\mathscr{P}(\CDFSpace)^\mathcal{X}$ with respect to the metric $ \overline{w}_{\| \cdot \|_{\ell_2}}$. Suppose $\psi, \psi' \in \mathscr{P}(\CDFSpace)^\mathcal{X}$, and let $(\Phi(y), \Phi'(y))$ be an optimal coupling between $\psi(y)$ and $\psi'(y)$ with respect to $w_{\| \cdot \|_{\ell_2}}$ for each $y \in \mathcal{X}$ (existence of such couplings is guaranteed by \citet[Theorem~4.1]{villani2009optimal}). Then, letting $(x, X')$ be a random transition from $x$, independent of $\Phi, \Phi'$, we have that $(\CDFBootstrapMap_x \Phi(X'), \CDFBootstrapMap_x \Phi'(X'))$ is a valid coupling of $(\CDFStocBellmanOp \; \psi)(x)$ and $(\CDFStocBellmanOp\; \psi')(x)$. Then we have, using the operator notation defined in Appendix~\ref{sec:additional-notation-results},
    \begin{align*}
        w_{\|\cdot\|_{\ell_2}}( (\CDFStocBellmanOp\; \psi )(x), (\CDFStocBellmanOp\; \psi')(x) )
        & \overset{(a)}{\leq} \mathbb{E}\bigg[
            \| \CDFBootstrapMap_x \Phi(X') - \CDFBootstrapMap_x \Phi'(X') \|_{\ell_2}
            \bigg] \\
        & \overset{(b)}\leq \sqrt{\gamma} \mathbb{E}\bigg[
            \| \Phi(X') - \Phi'(X') \|_{\ell_2}
            \bigg] \\
        & = \sqrt{\gamma} \sum_{y \in \mathcal{X}} P(y|x) \mathbb{E}\bigg[
            \| \Phi(y) - \Phi'(y) \|_{\ell_2}
            \bigg] \\
        & \overset{(c)}{=} \sqrt{\gamma} \sum_{y \in \mathcal{X}} P(y|x) w_{\| \cdot \|_{\ell_2}}(\psi(y), \psi'(y)) \\
        & \leq \sqrt{\gamma} \overline{w}_{\| \cdot \|_{\ell_2}}(\psi, \psi') \, ,
    \end{align*}
    where (a) follows since $(\CDFBootstrapMap_x \Phi(X'), \CDFBootstrapMap_x \Phi'(X'))$ is a valid coupling of $(\CDFStocBellmanOp \;\psi)(x)$ and $(\CDFStocBellmanOp\; \psi')(x)$, (b) follows by contractivity of $B_x$ with respect to $\| \cdot \|_{\ell_2}$, as shown in Proposition~\ref{prop:Bx-contraction}, and (c) follows since $(\Phi(y), \Phi'(y))$ was chosen to be an optimal coupling of $\psi(y)$ and $\psi(y')$. 
    Hence we have
    \begin{align*}
        \overline{w}_{\| \cdot \|_{\ell_2}}( \CDFStocBellmanOp\psi, \CDFStocBellmanOp \psi') \leq \sqrt{\gamma} \overline{w}_{\| \cdot \|_{\ell_2}}(\psi, \psi') \, ,
    \end{align*}
    proving contractivity.
    
    The metric space $(\mathscr{P}(\CDFSpace)^\mathcal{X}, \overline{w}_{\| \cdot \|_{\ell_2}})$ is complete, since the base space $(\CDFSpace,\| \cdot \|_{\ell_2})$ is separable and complete \citep[Theorem~6.18]{villani2009optimal}. Hence, by Banach's fixed point theorem, we obtain that there is a unique fixed point $\psi^* \in \mathscr{P}(\CDFSpace)^\mathcal{X}$ of $\CDFStocBellmanOp$.
    Thus, a collection of random CDFs $(\Phi^*(x) : x \in \mathcal{X})$ satisfy the SC-CDF Bellman equation in Equation~\eqref{eq:scc-equation-random-variable} if and only if we have $\Phi^*(x) \sim \psi^*(x)$ for all $x \in \mathcal{X}$.
\end{proof}

\propExpectation*

\begin{proof}
    We take expectations on both sides of the random-variable stochastic categorical CDF Bellman equation in Equation~\eqref{eq:scc-equation-random-variable}, yielding:
    \begin{align*}
        \mathbb{E}[\Phi^{\altP}(x)] = \mathbb{E}[ (\hat{T}_{\altP} \Phi^{\altP})(x) ] \, .
    \end{align*}
    Since $\hat{T}_{\altP}$ is a random linear map, independent of $\Phi^{\altP}$, we have
    \begin{align*}
        \mathbb{E}[\Phi^{\altP}(x)] & = (\mathbb{E}[ \hat{T}_{\altP} ] \mathbb{E}[ \Phi^{\altP} ])(x) \\
        & = T_{\altP} \mathbb{E}[ \Phi^{\altP} ] (x) \, .
    \end{align*}
    This states that $\mathbb{E}[\Phi^{\altP}] \in \mathbb{R}^{\mathcal{X} \times m}$ satisfies the standard categorical Bellman equation in Equation~\eqref{eq:cdf-fp}, and hence $\mathbb{E}[\Phi^{\altP}] = F^{\altP}$, by Proposition~\ref{prop:cdrl}, as required.
\end{proof}

\propSCCIneq*

\begin{proof}
    We calculate, writing $\Phi^{\altP}$ as $\Phi$, $\FAltP$ as $F$, and $\hat{T}_{\altP}$ as $\hat{T}$ to lighten notation:
    \begin{align}\label{eq:post-pythag}
        \Sigma_{\altP}(x) & = \mathbb{E}[\ell_2^2(\Phi(x), F(x))] \nonumber \\
        & \overset{(a)}{=} \mathbb{E}[\ell_2^2((\hat{T} \Phi)(x), F(x))] \nonumber \\
        & \overset{(b)}{=} \mathbb{E}[\ell_2^2((\hat{T} F)(x), F(x)) ] + \mathbb{E}[ \ell_2^2((\hat{T} \Phi)(x), (\hat{T} F)(x))] \, .
    \end{align}
    Here, (a) follows since $\Phi$ satisfies the random-variable version of the stochastic categorical CDF Bellman equation, (b) is a result of a Pythagorean identity for squared Cram\'er distance, which we derive below:
    \begin{align*}
        \mathbb{E}[ \ell_2^2((\hat{T} \Phi)(x), F(x)) ] & = \mathbb{E} \bigg[ \int_0^{(1-\gamma)^{-1}} ( (\hat{T} \Phi)(x)(t) - F(x)(t))^2 \; \mathrm{d}t  \bigg] \\
        & = \int_{0}^{(1-\gamma)^{-1}} \mathbb{E}[ ( (\hat{T} \Phi)(x)(t) - F(x)(t))^2 ] \; \mathrm{d}t \, ,
    \end{align*}
    where the integral switch follows from Fubini's theorem.
    Now, focusing on the integrand above, it can be written as
    \begin{align*}
        \mathbb{E}[(Y  - \mathbb{E}[Y])^2]
    \end{align*}
    with $Y = (\hat{T} \Phi)(x)(t)$, 
    since $F(x) = \mathbb{E}[(\hat{T} \Phi)(x)] = \mathbb{E}[\Phi(x)]$, by Lemma~\ref{prop:expectation-scc}. We have
    \begin{align*}
        \mathbb{E}[(Y  - \mathbb{E}[Y])^2] & = \mathbb{E}[(Y - \mathbb{E}[Y | \hat{T}] + \mathbb{E}[Y | \hat{T}] - \mathbb{E}[Y])^2] \\
        & = \mathbb{E}[(Y - \mathbb{E}[Y | \hat{T}])^2] + \mathbb{E}[(\mathbb{E}[Y | \hat{T}] - \mathbb{E}[Y])^2] \, .
    \end{align*}
    Now, $\mathbb{E}[(\hat{T} \Phi)(x)(t) \mid \hat{T} ] = (\hat{T} \mathbb{E}[\Phi])(x)(t) = (\hat{T} F)(x)(t)$, by linearity, which concludes the validation of step (b) above. We recognise the first term in Equation~\eqref{eq:post-pythag} as the local squared-Cram\'er variation, and hence have
    \begin{align*}
        \Sigma_{\altP}(x) = \sigma_{\altP}(x) + \mathbb{E}[ \ell_2^2((\hat{T} \Phi)(x), (\hat{T} F)(x)) ] \, .
    \end{align*}
    We now explicitly write the evaluation of the application of $\hat{T}$ at coordinate $x$ in terms of the random transition $(x, X')$ used to construct the single-sample random transition matrix $\hat{Q}$ described in Definition~\ref{def:single-sample-op}, so that we obtain, with the operator notation of Appendix~\ref{sec:additional-notation-results},
    \begin{align*}
        \mathbb{E}[ \ell_2^2((\hat{T} \Phi)(x), (\hat{T} F)(x)) ] & = \mathbb{E}[ \ell_2^2( B_x \Phi(X'), B_x F(X') ) ] \\
        & = \mathbb{E}[ \mathbb{E}[ \ell_2^2( B_x \Phi(X'), B_x F(X')) \mid \Phi ]] \\
        & = \mathbb{E}\bigg[ \sum_{y \in \mathcal{X}} \altP(y|x) \ell_2^2(B_x \Phi(y), B_x F(y)) \bigg] \\
        & \overset{(a)}{\geq} \sum_{y \in \mathcal{X}} \altP(y|x) \mathbb{E}[ \gamma \ell_2^2( \Phi(y), F(y)) - \alpha ] \\
        & = \gamma  \sum_{y \in \mathcal{X}} \altP(y|x) \mathbb{E}[  \ell_2^2( \Phi(y), F(y))] - \alpha \\
        & = \gamma \sum_{y \in \mathcal{X}} \altP(y|x) \mathbb{E}[ \ell_2^2( \Phi(y), F(y))] - \alpha \\
        & = \gamma (\altP \Sigma_{\altP})(x) - \alpha \, ,
    \end{align*}
    where (a) follows from Proposition~\ref{prop:anti-contraction}, with $\alpha = \frac{2}{m(1-\gamma)^{1/2}} + \frac{1}{m^2(1-\gamma)^2}$, 
    meaning that we deduce
    \begin{align*}
        \Sigma_{\altP}(x) = \sigma_{\altP}(x) + \gamma (\altP \Sigma_\altP)(x) - \alpha \, ,
    \end{align*}
    as required.
\end{proof}

\corrSCCBound*

\begin{proof}
    By Proposition~\ref{prop:SCCIneq} applied with $Q = \hat{P}$,
    \begin{align*}
        \Sigma_{\hat{P}} \geq \sigma_{\hat{P}} + \gamma \hat{P} \Sigma_{\hat{P}} - \bigg(\frac{2}{m(1-\gamma)^{1/2}} + \frac{1}{m^2(1-\gamma)^2}\bigg) \mathbf{1} \, ,
    \end{align*}
    where $\mathbf{1} \in \mathbb{R}^{\mathcal{X}}$ is the vector of ones. We first note that from the condition $m \geq 4 \varepsilon^{-2}(1-\gamma)^{-2} + 1$ from the statement of Theorem~\ref{thm:cramer}, we have
    \begin{align*}
        \frac{2}{m(1-\gamma)^{1/2}} + \frac{1}{m^2(1-\gamma)^2} \leq \frac{2}{4 \varepsilon^{-2}(1-\gamma)^{-2} (1-\gamma)^{1/2}} + \frac{1}{4 \varepsilon^{-2}} \leq \frac{\varepsilon^2(1-\gamma)^{3/2}}{2} + \frac{\varepsilon^2}{4} < 1 \, ,
    \end{align*}
    since $\varepsilon \in (0,1)$, from the statement of Theorem~\ref{thm:cramer}.
    
    We therefore have
    \begin{align*}
        ( I - \gamma \hat{P})\Sigma_{\hat{P}} \geq \sigma_{\hat{P}} - \mathbf{1} \, .
    \end{align*}
    Now, $(I - \gamma \hat{P})^{-1}$ is a monotone operator (in the sense that $v_1 \geq v_2$ coordinatewise implies $(I - \gamma \hat{P})^{-1} v_1 \geq (I - \gamma \hat{P})^{-1} v_2$; this claim follows as all elements of $(I - \gamma \hat{P})^{-1}$ are non-negative, since it can also be written $\sum_{k \geq 0} \gamma^k \hat{P}^k$), we can apply it to both sides of the inequality above to obtain
    \begin{align*}
        \Sigma_{\hat{P}} + (1 - \gamma)^{-1}\mathbf{1} \geq (I - \gamma \hat{P})^{-1} \sigma_{\hat{P}}  \, .
    \end{align*}
    Finally, we note that since $\Sigma_{\hat{P}}(x)$ is an expected squared-Cram\'er distance between two CDFs supported on an interval of length $(1-\gamma)^{-1}$, we have
    \begin{align*}
        \Sigma_{\hat{P}}(x)  \leq (1-\gamma)^{-1} \, ,
    \end{align*}
    from which the claim follows.
\end{proof}

\section{Proof of Theorem~\ref{thm:wasserstein}}
\label{sec:wasserstein-proof}

We begin by restating the result we seek to prove.

\thmWasserstein*

We arrange the proof into a sequence of smaller results, in analogy with the presentation of the sketch proof in the main paper.

\subsection{Reduction to high-probability bounds in Cram\'er distance}

Following the sketch provided in the main paper, we first restate and prove Lemma~\ref{lem:wasserstein-to-cramer}.

\lemmaWassersteinToCramer*

\begin{proof}
    We begin by writing
    \begin{align*}
        w_1(\nu, \nu') = \int_0^{(1-\gamma)^{-1}} |F_\nu(t) - F_{\nu'}(t)| \; \mathrm{d} t = (1-\gamma)^{-1} \left\lbrack (1-\gamma) \int_0^{(1-\gamma)^{-1}} |F_\nu(t) - F_{\nu'}(t)| \; \mathrm{d} t \right\rbrack \, ,
    \end{align*}
    where $F_\nu, F_{\nu'}$ are the CDFs of $\nu, \nu'$, respectively. The quantity inside the squared brackets can be interpreted as an expectation (with $t$ ranging over the values of a uniform variate on $\mathbb{E}_{T \sim \text{Unif}([0,(1-\gamma)^{-1}])}[|F_\nu(T) - F_{\nu'}(T)|]$, and we can therefore apply Jensen's inequality with the map $z \mapsto z^2$. This then yields
    \begin{align*}
        w_1(\nu, \nu') \leq & (1-\gamma)^{-1} \left\lbrack (1-\gamma) \int_0^{(1-\gamma)^{-1}} (F_\nu(t) - F_{\nu'}(t))^2 \; \mathrm{d} t \right\rbrack^{1/2} \\
        = & (1-\gamma)^{-1/2} \ell_2(\nu, \nu') \, ,
    \end{align*}
    as required.
\end{proof}

The first main step of the proof of Theorem~\ref{thm:wasserstein} is a reduction to Theorem~\ref{thm:cramer} via Lemma~\ref{lem:wasserstein-to-cramer}. To see this, we first restate Theorem~\ref{thm:cramer} here for convenience.

\thmCramer*

For the proof of the reduction, suppose the statement of Theorem~\ref{thm:cramer} holds. Now, let us take $\varepsilon \in (0, (1-\gamma)^{-1/2})$, and $m \geq 4(1-\gamma)^{-2}\varepsilon^{-2} + 1$, as in the assumptions of Theorem~\ref{thm:wasserstein}. We then define $\widetilde{\varepsilon} = (1-\gamma)^{1/2} \varepsilon$; note that from the assumption on $\varepsilon$, we have $\widetilde{\varepsilon} \in (0, 1)$. Applying the result of Theorem~\ref{thm:cramer}, we therefore obtain that with $N = \widetilde{\Omega}( \widetilde{\varepsilon}^{-2} (1-\gamma)^{-2} \text{polylog}(|\mathcal{X}|/\delta))$, we have (with probability at least $1-\delta$)
\begin{align*}
    \max_{x \in \mathcal{X}} \ell_2(\eta^*(x), \EmpCatCDF(x)) \leq \widetilde{\varepsilon} \, .
\end{align*}
By Lemma~\ref{lem:wasserstein-to-cramer}, we therefore have (with probability at least $1-\delta$, for all $x \in \mathcal{X}$)
\begin{align*}
    \overline{w}_1(\eta^*(x), \EmpCatCDF(x)) \leq (1-\gamma)^{-1/2} \ell_2(\eta^*(x), \EmpCatCDF(x)) \leq (1-\gamma)^{-1/2} (1-\gamma)^{1/2} \varepsilon = \varepsilon \, ,
\end{align*}
which is the desired inequality in Theorem~\ref{thm:wasserstein}. Finally, we note that the sample complexity term $\widetilde{\varepsilon}^{-2} (1-\gamma)^{-2}$ can be rewritten as
\begin{align*}
    \widetilde{\varepsilon}^{-2} (1-\gamma)^{-2} = ((1-\gamma)^{1/2} \varepsilon)^{-2} (1-\gamma)^{-2} = \varepsilon^{-2} (1 - \gamma)^{-3} \, .
\end{align*}
Thus, we obtain the stated sample complexity in Theorem~\ref{thm:wasserstein}, and we have established that to prove Theorem~\ref{thm:wasserstein}, it is sufficient to prove Theorem~\ref{thm:cramer}.

\subsection{Reduction to categorical fixed-point error}
\label{sec:reduction-fp-error}

The first step of the proof of Theorem~\ref{thm:cramer} is to show that with $m$ taken sufficiently large (as described in the statement of  Theorem~\ref{thm:cramer}), the Cram\'er distance between the true return distributions and the categorical fixed points is small, and it is therefore sufficient to focus solely on the sample-based error in estimating the categorical fixed point.

By applying the triangle inequality, we have
\begin{align*}
    \overline{\ell}_2(\TrueRDF, \EmpCatCDF) & \leq \overline{\ell}_2(\TrueRDF, \TrueCatCDF) + \overline{\ell}_2(\TrueCatCDF, \EmpCatCDF) \\ 
    & \overset{(a)}{\leq} \frac{1}{(1-\gamma)\sqrt{m-1}} + \overline{\ell}_2(\TrueCatCDF, \EmpCatCDF) \\
    & \overset{(b)}{\leq} \frac{\varepsilon}{2} + \overline{\ell}_2(\TrueCatCDF, \EmpCatCDF) \, ,
\end{align*}
where (a) follows from the fixed-point approximation bound in Equation~\eqref{eq:categorical-quality}, which itself is the result of Proposition~2 of \citet{rowland2018analysis}, and (b) follows from substituting the assumed inequality for $m$ in the statement of Theorem~\ref{thm:cramer}. Thus, to establish that $\overline{\ell}_2(\TrueRDF, \EmpCatCDF)$ is bounded by $\varepsilon$ with probability at least $1-\delta$, it suffices to show that
\begin{align*}
    \overline{\ell}_2(\TrueCatCDF, \EmpCatCDF) < \varepsilon / 2  \, ,
\end{align*}
with probability at least $1-\delta$, as claimed.

\subsection{Propagation of local errors}
\label{sec:prop-errors}

To begin analysing $\overline{\ell}_2(\TrueCatCDF, \EmpCatCDF)$, we analyse the difference of vectors $\EmpCatCDF - \TrueCatCDF$ directly. We proceed in an analogous manner to \citet{gheshlaghi2013minimax} in the mean-return case, rearranging as follows:
\begin{align}\label{eq:rearrange}
    \EmpCatCDF - \TrueCatCDF & \overset{(a)}{=} \EmpCatOp \EmpCatCDF - \CatOp \TrueCatCDF \nonumber\\
    & \overset{(b)}{=} \EmpCatOp \EmpCatCDF - \EmpCatOp \TrueCatCDF + \EmpCatOp \TrueCatCDF - \CatOp \TrueCatCDF  \nonumber \\
    \implies (I - \EmpCatOp)(\EmpCatCDF - \TrueCatCDF) & = (\EmpCatOp - \CatOp) \TrueCatCDF \, .
\end{align}
Here, (a) follows since $\EmpCatCDF, \TrueCatCDF$ are fixed points of $\EmpCatOp, \CatOp$, respectively, (b) follows by adding and subtracting $\EmpCatOp \TrueCatCDF$, and the implication follows from straightforward rearrangement.

We would next like to rearrange Equation~\eqref{eq:rearrange} to leave the term $\EmpCatCDF - \TrueCatCDF$ on its own. This requires some care, in checking that the operator $(I - \EmpCatOp)$ is invertible in an appropriate sense.

\begin{lemma}\label{lem:subspace-invertible}
    The operator $I - \EmpCatOp : \mathbb{R}^{\mathcal{X} \times m} \rightarrow \mathbb{R}^{\mathcal{X} \times m}$ is invertible on the subspace $\{ F \in \mathbb{R}^{\mathcal{X} \times m} : F_m(x) = 0 \text{ for all } x \in \mathcal{X} \}$, with inverse $\sum_{k \geq 0} \EmpCatOp^k$.
\end{lemma}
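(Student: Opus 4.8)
The goal is to show that $I - \EmpCatOp$ is invertible on the subspace $\CDFSpaceZero = \{F \in \mathbb{R}^{\mathcal{X}\times m} : F_m(x) = 0 \text{ for all } x \in \mathcal{X}\}$, with inverse given by the Neumann series $\sum_{k\geq 0}\EmpCatOp^k$. My plan is to exploit the contraction result already established in Proposition~\ref{prop:T-contraction}, which states that $\CatOp$ (and by the same token its empirical version $\EmpCatOp$, since the proof is agnostic to which transition matrix is used) is a contraction on exactly this subspace with respect to $\|\cdot\|_{\ell_2,\infty}$, with factor $\sqrt{\gamma} < 1$. The key enabling observation is that $\CDFSpaceZero$ is an \emph{invariant} subspace of $\EmpCatOp$: I would first verify that if $F_m(x) = 0$ for all $x$, then $(\EmpCatOp F)_m(x) = 0$ for all $x$ as well. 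This follows from the matrix entries in Equation~\eqref{eq:cat-matrix}, since for the top row $i=m$ we have $H^x_{m,j} - H^x_{m,j+1} = 0$ for all $j < m$ (as used in the proof of Proposition~\ref{prop:ReducedSystem}, because $H^x_{m,j} = \sum_{l\leq m} h_l(r(x)+\gamma z_j) = 1$ is constant in $j$), so $(\EmpCatOp F)_m(x)$ depends only on $F_m(\cdot)$, which vanishes on the subspace.

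With invariance established, I would restrict $\EmpCatOp$ to $\CDFSpaceZero$ and treat it as a linear operator on this finite-dimensional normed space equipped with $\|\cdot\|_{\ell_2,\infty}$. By Proposition~\ref{prop:T-contraction}, the restricted operator has operator norm at most $\sqrt{\gamma} < 1$. The standard Neumann series argument then applies directly: since $\|\EmpCatOp^k F\|_{\ell_2,\infty} \leq \gamma^{k/2}\|F\|_{\ell_2,\infty}$ for $F \in \CDFSpaceZero$, the partial sums $\sum_{k=0}^{K}\EmpCatOp^k$ form a Cauchy sequence in operator norm (the tail is bounded by a geometric series $\sum_{k > K}\gamma^{k/2}$), hence converge to a bounded operator $S = \sum_{k\geq 0}\EmpCatOp^k$ on the complete space $\CDFSpaceZero$. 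A short telescoping computation shows $(I - \EmpCatOp)S = S(I - \EmpCatOp) = I$ on $\CDFSpaceZero$, since $(I - \EmpCatOp)\sum_{k=0}^{K}\EmpCatOp^k = I - \EmpCatOp^{K+1}$ and the remainder term $\EmpCatOp^{K+1}$ tends to zero in operator norm. This identifies $S$ as the two-sided inverse of $I - \EmpCatOp$ on the subspace.

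I expect the only genuine subtlety, and hence the main obstacle, to be the \emph{invariance} verification in the first step: one must confirm that $\EmpCatOp$ genuinely maps $\CDFSpaceZero$ into itself, as otherwise the contraction bound of Proposition~\ref{prop:T-contraction} cannot be applied iteratively to the powers $\EmpCatOp^k$ and the Neumann series argument breaks down. Everything downstream (convergence of the series, the telescoping identity) is routine once the restricted operator is known to be a strict contraction on a complete space. I would also take care to note that the contraction result, though stated for $\CatOp$ built from $P$, applies verbatim to $\EmpCatOp$ built from $\hat{P}$, since the proof of Proposition~\ref{prop:T-contraction} only uses the generic structure of a categorical CDF operator associated with some valid transition matrix and never any specific property of $P$; alternatively one invokes the same contraction reasoning directly with $\hat{P}$ in place of $P$.
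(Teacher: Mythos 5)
Your proposal is correct and follows essentially the same route as the paper: invoke the contraction property of $\EmpCatOp$ on the subspace (Proposition~\ref{prop:T-contraction}) and conclude via the Neumann series. You are in fact somewhat more careful than the paper's own two-line argument, in that you explicitly verify invariance of the subspace from the matrix entries ($H^x_{m,j}-H^x_{m,j+1}=0$) rather than attributing it wholesale to Proposition~\ref{prop:T-contraction}, and you spell out the telescoping identity; both are welcome but not a different method.
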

\begin{proof}
    By Proposition~\ref{prop:T-contraction}, $\EmpCatOp$ maps $\{ F \in \mathbb{R}^{\mathcal{X} \times m} : F_m(x) = 0 \text{ for all } x \in \mathcal{X} \}$ to itself, and is a contraction on this subspace with respect to $\ell_2$, with contraction factor $\sqrt{\gamma}$. It therefore follows that $I - \EmpCatOp$ maps this subspace to itself, and is invertible on this subspace. Since $\sum_{k \geq 0} \EmpCatOp^k$ also maps this subspace to itself, it follows that on this subspace, we have $(I - \EmpCatOp)^{-1} = \sum_{k \geq 0} \EmpCatOp^k$, as required.
\end{proof}

Now, $(\EmpCatOp - \CatOp) \TrueCatCDF \in \{ F \in \mathbb{R}^{\mathcal{X} \times m} : F_m(x) = 0 \text{ for all } x \in \mathcal{X} \} $, and hence from Equation~\eqref{eq:rearrange} and Lemma~\ref{lem:subspace-invertible}, it follows that
\begin{align}\label{eq:app:local-to-global}
    \EmpCatCDF - \TrueCatCDF = \sum_{k \geq 0} \EmpCatOp^k (\EmpCatOp - \CatOp) \TrueCatCDF \, .
\end{align}
The result is that we have expressed the difference in CDFs in terms of local errors $(\EmpCatOp - \CatOp)\TrueCatCDF$, which are then propagated via the operator $\sum_{k \geq 0} \EmpCatOp^k$. 

\subsection{Bernstein concentration bounds}

The next step in the proof is to establish a concentration bound for the local errors $(\EmpCatOp - \CatOp)\TrueCatCDF$.

One potential approach is to use a concentration inequality for each individual coordinate of $(\EmpCatOp - \CatOp)\TrueCatCDF$, indexed by $(x, i)$. \citet{gheshlaghi2013minimax} note that in the mean-return case, using a Hoeffding-style bound is insufficient to obtain optimal dependence of the sample complexity on $(1-\gamma)^{-1}$, and \citet{zhang2023estimation} also note this in their (non-categorical) distributional analysis. Using a Bernstein concentration inequality on each coordinate and then using a union bound can be made to work, although this then incurs a $\log(m)$ factor in the sample complexity. If such a dependence on $m$ were tight, this would suggest that the sample complexity depends on $m$ (albeit only logarithmically), and hence there would be a trade-off between picking $m$ sufficiently large so as to obtain a low representation approximation error, as in Section~\ref{sec:reduction-fp-error}, and taking $m$ low so as to not unduly increase the sample complexity. However, such a dependence on $m$ can be avoided by working with concentration inequalities at the level of CDFs themselves. The Dvorestsky-Kiefer-Wolfowitz (DKW) inequality \citep{dvoretzky1956asymptotic,massart1990tight} behaves analogously to Hoeffding's bound, and is insufficient for obtaining a sample complexity bound with optimal $(1-\gamma)^{-1}$ dependence. Instead, we make use of a Hilbert space Bernstein-style inequality \citep{yurinsky2006sums,chatalic2022nystrom}, by interpreting the Cram\'er distance $\ell_2$ as a maximum mean discrepancy \citep{gretton2012kernel}, which measures distances between distributions via embeddings in a (reproducing kernel) Hilbert space, allowing the Bernstein result to be applied.

First, we precisely describe the connection between Cram\'er distance and Hilbert space. \citet{szekely2003statistics} shows that for any distributions $\nu, \nu' \in \mathscr{P}([0, (1-\gamma)^{-1}])$, we have
\begin{align}\label{eq:energy-distance}
    \ell^2_2(\nu, \nu') = \mathbb{E}_{X\sim \nu, Y \sim \nu'}[| X - Y|] - \frac{1}{2}\mathbb{E}_{X, X'\sim \nu}[|X-X'|] - \frac{1}{2}\mathbb{E}_{Y ,Y' \sim \nu'}[|Y - Y'|] \, .
\end{align}
\citet{sejdinovic2013equivalence} show that there exists an affine embedding $\phi : \mathscr{P}([0, (1-\gamma)^{-1}]) \rightarrow \mathcal{H}$ of distributions into a Hilbert space $\mathcal{H}$, such that the right-hand side of the equation can also be written as the squared norm $\| \phi(\nu) - \phi(\nu') \|_{\mathcal{H}}^2$; in other words, they show that the Cram\'er distance is equal to a squared maximum mean discrepancy \citep{gretton2012kernel}. We can then appeal to the following Bernstein-style inequality for Hilbert-space-valued random variables, which is due to \citet{chatalic2022nystrom}, and based largely on \citet[Theorem~3.3.4]{yurinsky2006sums}. We state the result here in the usual Bernstein style of assuming almost-sure boundedness of the the random variables concerned, while \citet{chatalic2022nystrom} use the more general formulation of assuming particular bounds on all moments.

\begin{lemma}\citep[Lemma~E.3]{chatalic2022nystrom}\label{lemma:hilbert-bernstein}
    Let $Z_1, \ldots, Z_N$ be i.i.d.\ mean-zero random variables taking values in a Hilbert space $(\mathcal{H}, \| \cdot \|_{\mathcal{H}})$ such that $\| Z_1 \|_{\mathcal{H}} \leq M$ almost surely. Then for any $\delta \in (0,1)$, we have
    \begin{align*}
        \bigg\|\frac{1}{N} \sum_{i=1}^N Z_i \bigg\|_\mathcal{H} \leq \frac{2 M \log(2/\delta)}{N} + \sqrt{\frac{2 \mathbb{E}[\| Z_1 \|_\mathcal{H}^2] \log(2/\delta)}{N}} \, ,
    \end{align*}
    with probability at least $1-\delta$.
\end{lemma}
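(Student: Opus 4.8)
The statement is a known Bernstein inequality for Hilbert-space-valued sums, so the plan is not to reinvent it but to derive it from the martingale/smoothness machinery underlying \citet{yurinsky2006sums}: first record the standard exponential-tail form, then invert it into the high-probability form stated. First I would write $S_k = \sum_{i=1}^k Z_i$ for the partial sums, which form a martingale with respect to the natural filtration $\mathcal{F}_k = \sigma(Z_1,\ldots,Z_k)$, since the $Z_i$ are independent and mean-zero. The increments satisfy $\|Z_i\|_{\mathcal{H}} \le M$ almost surely, and the predictable quadratic variation is $\sum_{i=1}^N \mathbb{E}[\|Z_i\|_{\mathcal{H}}^2] = N\,\mathbb{E}[\|Z_1\|_{\mathcal{H}}^2]$ by the i.i.d.\ assumption. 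The target intermediate object is a Bernstein tail of the form $\mathbb{P}(\|S_N\|_{\mathcal{H}} \ge t) \le 2\exp(-t^2/(2(v + Mt/3)))$, with $v = N\,\mathbb{E}[\|Z_1\|_{\mathcal{H}}^2]$.

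The core of the argument exploits that a Hilbert space is $2$-uniformly smooth with constant $1$: the parallelogram identity gives exact control $\|u + w\|_{\mathcal{H}}^2 = \|u\|_{\mathcal{H}}^2 + 2\langle u, w\rangle + \|w\|_{\mathcal{H}}^2$ for all $u,w \in \mathcal{H}$. I would use this to bound a conditional exponential moment of the norm one increment at a time. Concretely, the plan is to show that $M_k^{(\lambda)} = \exp(\lambda \|S_k\|_{\mathcal{H}} - g(\lambda)\,k\,\mathbb{E}[\|Z_1\|_{\mathcal{H}}^2])$ is a supermartingale for a suitable Bernstein-type function $g(\lambda)$ (behaving like $\lambda^2/2$ for small $\lambda$ and tempered by the boundedness $M$ for larger $\lambda$); this is the vector analogue of the scalar Bernstein moment-generating-function estimate, and it is exactly where the smoothness is essential, since there is no scalar linear functional whose MGF one can take directly. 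Telescoping the supermartingale property from $k=0$ to $k=N$ and applying Markov's inequality to $\exp(\lambda\|S_N\|_{\mathcal{H}})$ then yields the Chernoff bound $\mathbb{P}(\|S_N\|_{\mathcal{H}} \ge t) \le \exp(-\lambda t + N\,g(\lambda)\,\mathbb{E}[\|Z_1\|_{\mathcal{H}}^2])$, and optimizing over $\lambda > 0$ produces the Bernstein tail above.

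Finally I would invert the tail bound: setting its right-hand side equal to $\delta$ and solving the resulting quadratic in $t$ gives, with probability at least $1-\delta$, $\|S_N\|_{\mathcal{H}} \le \sqrt{2v\log(2/\delta)} + \tfrac{2}{3}M\log(2/\delta)$. Dividing through by $N$ and substituting $v = N\,\mathbb{E}[\|Z_1\|_{\mathcal{H}}^2]$ yields $\|N^{-1}S_N\|_{\mathcal{H}} \le \sqrt{2\,\mathbb{E}[\|Z_1\|_{\mathcal{H}}^2]\log(2/\delta)/N} + \tfrac{2}{3}M\log(2/\delta)/N$, which is dominated by the stated inequality, since the claimed coefficient $2$ on the $M$-term is looser than the $2/3$ obtained here.

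The main obstacle is the second step: establishing the supermartingale / recursive exponential-moment estimate in the Hilbert-space setting. In the scalar case this is the classical Bernstein MGF computation, but here one cannot take the MGF of a signed scalar, so the norm must be handled directly and the variance must be extracted from the cross term $\langle S_{k-1}, Z_k\rangle$ and the quadratic term $\|Z_k\|_{\mathcal{H}}^2$ using $2$-smoothness together with the conditional-mean-zero property $\mathbb{E}[Z_k \mid \mathcal{F}_{k-1}] = 0$. Getting $M$ to enter only at order $1/N$ (rather than $1/\sqrt{N}$, as a cruder bounded-differences argument via McDiarmid would give) is precisely what requires this genuine Bernstein treatment rather than a Hoeffding- or DKW-type estimate.
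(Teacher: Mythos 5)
The paper does not actually prove this lemma: it is imported by citation from \citet[Lemma~E.3]{chatalic2022nystrom}, itself based on \citet[Theorem~3.3.4]{yurinsky2006sums}, so there is no in-paper argument to compare yours against. Your reconstruction follows the standard route for Bernstein inequalities for Hilbert-space-valued sums (the martingale/$2$-smoothness method of Yurinsky and Pinelis), and your final inversion is correct: solving $t^2 = 2\log(2/\delta)\,(v + Mt/3)$ gives $t \leq \sqrt{2v\log(2/\delta)} + \tfrac{2}{3}M\log(2/\delta)$, which after dividing by $N$ dominates the stated bound --- your coefficient $2/3$ on the $M$-term is sharper than the quoted $2$, so your conclusion implies the lemma as stated.

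The one place where your sketch is materially thinner than an actual proof is the step you yourself flag as the obstacle: the claim that $\exp(\lambda\|S_k\|_{\mathcal{H}} - g(\lambda)\,k\,\mathbb{E}[\|Z_1\|_{\mathcal{H}}^2])$ is a supermartingale. Taken literally this is delicate, because $\exp(\lambda\|\cdot\|_{\mathcal{H}})$ does not interact cleanly with the parallelogram expansion; the established arguments (e.g.\ Pinelis's) instead control $\mathbb{E}[\cosh(\lambda\|S_k\|_{\mathcal{H}})\mid\mathcal{F}_{k-1}]$, exploiting that $\cosh(\lambda\|x\|_{\mathcal{H}})$ is a power series in $\|x\|_{\mathcal{H}}^2$, so that the identity $\|S_{k-1}+Z_k\|_{\mathcal{H}}^2 = \|S_{k-1}\|_{\mathcal{H}}^2 + 2\langle S_{k-1},Z_k\rangle + \|Z_k\|_{\mathcal{H}}^2$ and the conditional-mean-zero property of $Z_k$ can be applied term by term. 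With that substitution your plan goes through and recovers the tail $2\exp(-t^2/(2(v+Mt/3)))$. As written, the supermartingale claim is asserted rather than established --- acceptable for a result the paper itself only cites, but it is the step you would need to fill in to make the argument self-contained.
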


To apply this result in our setting, first note that, using the notation introduced in Appendix~\ref{sec:additional-notation-results}, we have
\begin{align*}
    \bigg\| [(\EmpCatOp - \CatOp) \TrueCatCDF](x) \bigg\|_{\ell_2} & = \ell_2((\EmpCatOp \TrueCatCDF)(x), \TrueCatCDF(x))\\
    & =  \| \phi((\EmpCatOp \TrueCatCDF)(x)) - \phi(\TrueCatCDF(x)) \|_{\mathcal{H}}  \\
    & \overset{(a)}{=}  \bigg\| \phi\left( \frac{1}{N} \sum_{i=1}^N \TrueCatCDF(X^x_i) B^\top_x \right) - \phi(\TrueCatCDF(x)) \bigg\|_{\mathcal{H}} \\
    & \overset{(b)}{=}  \bigg\| \frac{1}{N} \sum_{i=1}^N \Big( \phi\left( \TrueCatCDF(X^x_i) B^\top_x \right) - \phi(\TrueCatCDF(x)) \Big) \bigg\|_{\mathcal{H}} \, ,
\end{align*}
where (a) follows from the expressions for $\EmpCatOp$ described in  Appendix~\ref{sec:additional-notation-results}, and (b) follows from the affineness of the embedding $\phi$. Here again, we make use of a slight abuse of notation by writing $\phi(\TrueCatCDF(x))$ for the embedding of the distribution supported on $\{z_1,\ldots,z_m\}$, with CDF values $\TrueCatCDF(x)$, and similarly for embeddings of other CDF vectors.

We can now apply the result above, noting that
\begin{align*}
    \| \phi\left( \TrueCatCDF(X^x_i) B^\top_x \right) - \phi(\TrueCatCDF(x))  \|_{\mathcal{H}} = \ell_2(\TrueCatCDF(X^x_i) B^\top_x, \TrueCatCDF(x)) \leq (1-\gamma)^{-1/2}
\end{align*}
almost surely, and
\begin{align*}
    \mathbb{E}\Big[ \| \phi\left( \TrueCatCDF(X^x_i) B^\top_x \right) - \phi(\TrueCatCDF(x))  \|_{\mathcal{H}}^2 \Big]  = \mathbb{E}\Big[ \ell_2^2(\TrueCatCDF(X^x_i) B^\top_x , \TrueCatCDF(x)) \Big] = \sigma(x) \, ,
\end{align*}
where we write $\sigma$ as shorthand for the local squared-Cram\'er variation at $P$, introduced in Definition~\ref{def:local-squared-cramer-variation} with the notation $\sigma_P$.
Hence, from Lemma~\ref{lemma:hilbert-bernstein} and a union bound over the state space $\mathcal{X}$, we obtain that with probability $1-\delta/2$, we have for all $x \in \mathcal{X}$ that
\begin{align}\label{eq:pop-var}
     \bigg\| [(\EmpCatOp - \CatOp) \TrueCatCDF](x) \bigg\|_{\ell_2} \leq 
     C_{\mathrm{B}1} \frac{\sqrt{\sigma(x)}}{\sqrt{N}} + 
     C_{\mathrm{B}2} \frac{1}{(1-\gamma)^{1/2} N} \, ,
\end{align}
where
\begin{align*}
    C_{\mathrm{B}1} = \sqrt{2\log(4|\mathcal{X}|/\delta)} \, , \ \ \ \text{ and } C_{\mathrm{B}2} = 2 \log(4|\mathcal{X}|/\delta) \, .
\end{align*}

\subsection{From population to empirical squared-Cram\'er variation}

Next, we show that the population local squared-Cram\'er variation in Equation~\eqref{eq:pop-var} can be replaced with the corresponding \emph{empirical} quantity; that is, the local-squared Cram\'er variation at $\hat{P}$, $\sigma_{\hat{P}}$, incurring some additional terms in the bound. We start by calculating as follows:
\begin{align*}
    \sigma(x) & = \mathbb{E}\bigg[
        \| (\SampleCatOp \TrueCatCDF)(x) - (\CatOp \TrueCatCDF)(x) \|_{\ell_2}^2
    \bigg] \\
    & = \mathbb{E}\bigg[ \| \TrueCatCDF(X') B_x^\top \|_{\ell_2}^2 \bigg] - \| P_x \TrueCatCDF B_x^\top \|_{\ell_2}^2 \\
    & = P_x \| \TrueCatCDF B_x^\top \|_{\ell_2}^2 - \| P_x \TrueCatCDF B_x^\top \|_{\ell_2}^2  \, ,
\end{align*}
where above we use the notation $\| \TrueCatCDF B_x^\top \|^2_{\ell_2} \in \mathbb{R}^\mathcal{X}$ for the vector indexed by state, so that the element corresponding to state $y \in \mathcal{X}$ is $\| \TrueCatCDF(y) B_x^\top \|^2_{\ell_2}$. To relate this quantity to $\hat{\sigma}(x)$, we add and subtract a term, motivated by the derivation applied to standard return variance by \citet[Lemma~5]{gheshlaghi2013minimax}, and rearrange as follows:
\begin{align}\label{eq:emp-var-1}
    & \phantom{=} \sigma(x) \nonumber \\
    & = P_x \| \TrueCatCDF B_x^\top \|_{\ell_2}^2 - \| P_x \TrueCatCDF B_x^\top \|_{\ell_2}^2 \nonumber \\
    & = P_x \| \TrueCatCDF B_x^\top \|_{\ell_2}^2  - \| P_x \TrueCatCDF B_x^\top \|_{\ell_2}^2 -
    \bigg(\hat{P}_x \| \TrueCatCDF B_x^\top \|_{\ell_2}^2  - \| \hat{P}_x \TrueCatCDF B_x^\top \|_{\ell_2}^2 \bigg) \nonumber \\
    & \qquad\qquad + \bigg( \hat{P}_x \| \TrueCatCDF B_x^\top \|_{\ell_2}^2 - \| \hat{P}_x \TrueCatCDF B_x^\top \|_{\ell_2}^2 \bigg) \nonumber \\
    & = (P_x - \hat{P}_x) \| \TrueCatCDF B_x^\top \|_{\ell_2}^2 + 
    \bigg( \| \hat{P}_x \TrueCatCDF B_x^\top \|_{\ell_2}^2 - \| P_x \TrueCatCDF B_x^\top \|_{\ell_2}^2 \bigg) \nonumber \\
    & \qquad\qquad + \mathbb{E}\bigg[ \| (\SampleEmpCatOp \TrueCatCDF)(x) - (\EmpCatOp \TrueCatCDF)(x) \|_{\ell_2}^2 \bigg| \hat{P} \bigg] \nonumber \\
    & = (P_x - \hat{P}_x) \| \TrueCatCDF B_x^\top \|_{\ell_2}^2 + 
    \langle \hat{P}_x \TrueCatCDF B_x^\top - P_x \TrueCatCDF B_x^\top, \hat{P}_x \TrueCatCDF B_x^\top + P_x \TrueCatCDF B_x^\top \rangle_{\ell_2} \nonumber \\
    & \qquad\qquad + \mathbb{E}\bigg[ \| (\SampleEmpCatOp \TrueCatCDF)(x) - (\EmpCatOp \TrueCatCDF)(x) \|_{\ell_2}^2 \bigg| \hat{P} \bigg]  \nonumber \\
    & = (P_x - \hat{P}_x) \| \TrueCatCDF B_x^\top \|_{\ell_2}^2 + 
    \langle (\hat{P}_x  - P_x) \TrueCatCDF B_x^\top, \hat{P}_x \TrueCatCDF B_x^\top + P_x \TrueCatCDF B_x^\top \rangle_{\ell_2} \nonumber \\
    & \qquad\qquad + \mathbb{E}\bigg[ \| (\SampleEmpCatOp \TrueCatCDF)(x) - (\EmpCatOp \TrueCatCDF)(x) \|_{\ell_2}^2 \bigg| \hat{P} \bigg] \, ,
\end{align}
where $\langle\;, \;\rangle_{\ell_2}$ is the inner product on $\mathbb{R}^{m}$ corresponding to $\| \cdot \|_{\ell_2}$, defined by
\begin{align*}
    \langle F, F' \rangle_{\ell_2} = \frac{1}{m(1-\gamma)} \sum_{i=1}^m F_i F'_i \, .
\end{align*}
Focusing on the final term on the right-hand side of Equation~\eqref{eq:emp-var-1}, we have
\begin{align*}
   & \mathbb{E} \left\lbrack \| (\SampleEmpCatOp \TrueCatCDF)(x) - (\EmpCatOp \TrueCatCDF)(x) \|_{\ell_2}^2 \; \Big| \; \hat{P} \right\rbrack \\
   = & \mathbb{E} \left\lbrack \| (\SampleEmpCatOp \TrueCatCDF)(x) - (\SampleEmpCatOp\EmpCatCDF)(x) + (\SampleEmpCatOp\EmpCatCDF)(x) - (\EmpCatOp \TrueCatCDF)(x) + (\EmpCatOp\EmpCatCDF)(x) - (\EmpCatOp\EmpCatCDF)(x) \|_{\ell_2}^2 \; \Big| \; \hat{P} \right\rbrack \\
    = & \mathbb{E} \left\lbrack \|
        (\SampleEmpCatOp \TrueCatCDF - \SampleEmpCatOp\EmpCatCDF)(x) - (\EmpCatOp \TrueCatCDF - \EmpCatOp\EmpCatCDF)(x) + (\SampleEmpCatOp\EmpCatCDF)(x) - (\EmpCatOp\EmpCatCDF)(x) \|^2_{\ell_2} \; \Big| \; \hat{P} \right\rbrack \\
    = &\mathbb{E} \Big\lbrack \|
        (\SampleEmpCatOp \TrueCatCDF - \SampleEmpCatOp\EmpCatCDF)(x) - (\EmpCatOp \TrueCatCDF - \EmpCatOp\EmpCatCDF)(x)  \|^2_{\ell_2} \\
        & \qquad + 2\langle (\SampleEmpCatOp \TrueCatCDF - \SampleEmpCatOp\EmpCatCDF)(x) - (\EmpCatOp \TrueCatCDF - \EmpCatOp\EmpCatCDF)(x),
                    (\SampleEmpCatOp\EmpCatCDF - \EmpCatOp\EmpCatCDF)(x) \rangle_{\ell_2} \\
        & \qquad\qquad + \| (\SampleEmpCatOp\EmpCatCDF)(x) - (\EmpCatOp\EmpCatCDF)(x) \|^2_{\ell_2} \; \Big| \; \hat{P} \Big\rbrack \\
    \overset{(a)}{\leq} & \bigg(
        \mathbb{E}[\| (\SampleEmpCatOp \TrueCatCDF - \SampleEmpCatOp\EmpCatCDF)(x) - (\EmpCatOp \TrueCatCDF - \EmpCatOp\EmpCatCDF)(x) \|^2_{\ell_2} \; | \; \hat{P} ]^{1/2} \\
        & \qquad+
        \mathbb{E}[\| (\SampleEmpCatOp\EmpCatCDF)(X) - (\EmpCatOp\EmpCatCDF)(x) \|^2_{\ell_2} \; | \; \hat{P} ]^{1/2}
    \bigg)^2 \\
    = & \bigg( \mathbb{E}[\| (\SampleEmpCatOp \TrueCatCDF - \SampleEmpCatOp\EmpCatCDF)(x) - (\EmpCatOp \TrueCatCDF - \EmpCatOp\EmpCatCDF)(x) \|^2_{\ell_2} \; | \; \hat{P} ]^{1/2} + \sqrt{\hat{\sigma}(x)} \bigg)^2 \, .
\end{align*}
where (a) follows from the Cauchy-Schwarz inequality, and we write $\hat{\sigma}$ as a shorthand for $\sigma_{\hat{P}}$. Finally, we note that
\begin{align*}
    \mathbb{E}[\| (\SampleEmpCatOp \TrueCatCDF - \SampleEmpCatOp\EmpCatCDF)(x) - (\EmpCatOp \TrueCatCDF - \EmpCatOp\EmpCatCDF)(x) \|^2_{\ell_2} \; | \; \hat{P}]
    & \leq \mathbb{E}[\| (\SampleEmpCatOp \TrueCatCDF - \SampleEmpCatOp\EmpCatCDF)(x) \|^2_{\ell_2} \; | \; \hat{P} ] \\
    & \overset{(a)}{\leq} \gamma \mathbb{E}_{X' \sim \hat{P}_x}[\| \TrueCatCDF(X') - \EmpCatCDF(X') \|^2_{\ell_2} \; | \; \hat{P} ] \\
    & \leq \gamma \| \TrueCatCDF - \EmpCatCDF \|^2_{\ell_2, \infty}  \, ,
\end{align*}
with (a) following from contractivity of $B_x$ in $\| \cdot \|_{\ell_2}$, as established in Proposition~\ref{prop:Bx-contraction}.
Bringing these bounds together with Equation~\eqref{eq:emp-var-1}, we have
\begin{align}\label{eq:pre-sqrt}
    \sigma(x) \leq (P_x - \hat{P}_x) \| \TrueCatCDF B_x^\top \|_{\ell_2}^2 + 
    \langle (&\hat{P}_x  - P_x) \TrueCatCDF B_x^\top, \hat{P}_x \TrueCatCDF B_x^\top + P_x \TrueCatCDF B_x^\top \rangle_{\ell_2} \\
    &+ \bigg( \gamma \| \TrueCatCDF - \EmpCatCDF \|_{\ell_2, \infty}  + \sqrt{\hat{\sigma}(x)} \bigg)^2 \, . \nonumber
\end{align}

We now apply concentration bounds to each of the first two terms on the right-hand side of Equation~\eqref{eq:pre-sqrt}.

\begin{lemma}
    With probability at least $1-\delta/2$, we have for all $x \in \mathcal{X}$, 
    \begin{align*}
        \bigg| P_x \| \TrueCatCDF B_x^\top \|_{\ell_2}^2 - \hat{P}_x \| \TrueCatCDF B_x^\top \|_{\ell_2}^2 \bigg| \leq C_\text{H} \frac{1}{(1-\gamma)\sqrt{N}} \, ,
    \end{align*}
    where
    \begin{align*}
        C_{\mathrm{H}} = \sqrt{\frac{\log(4|\mathcal{X}|/\delta)}{2}} \, .
    \end{align*}
\end{lemma}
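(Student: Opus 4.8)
The plan is to recognise the left-hand side as the deviation between the population mean and the empirical mean of a \emph{bounded scalar} random variable, and then to apply Hoeffding's inequality together with a union bound over states. Fix $x \in \mathcal{X}$ and define the scalar random variables $Y_i = \| \TrueCatCDF(X^x_i) B_x^\top \|_{\ell_2}^2$ for $i = 1, \ldots, N$, where $(X^x_i)_{i=1}^N$ are the i.i.d.\ next-state samples from $P(\cdot\mid x)$. By the definition of $\hat{P}$, the empirical average $\tfrac{1}{N}\sum_{i=1}^N Y_i$ equals $\hat{P}_x \| \TrueCatCDF B_x^\top \|_{\ell_2}^2$, while $\mathbb{E}[Y_1] = P_x \| \TrueCatCDF B_x^\top \|_{\ell_2}^2$, so the quantity to be controlled is precisely $\big|\tfrac{1}{N}\sum_{i=1}^N Y_i - \mathbb{E}[Y_1]\big|$.

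The first substantive step is to establish the almost-sure boundedness of $Y_i$. For any $y \in \mathcal{X}$, the vector $\TrueCatCDF(y) B_x^\top$ is the CDF vector of a probability distribution supported on $\{z_1, \ldots, z_m\} \subseteq [0, (1-\gamma)^{-1}]$: indeed, as noted in the proof of Proposition~\ref{prop:Bx-contraction}, $B_x$ is the categorical Bellman operator of a one-state MRP, hence maps grid-supported CDF vectors to grid-supported CDF vectors, so each component of $\TrueCatCDF(y) B_x^\top$ lies in $[0,1]$. Using the definition of $\| \cdot \|_{\ell_2}$ from Appendix~\ref{sec:additional-notation-results}, we then obtain $\| \TrueCatCDF(y) B_x^\top \|_{\ell_2}^2 = \tfrac{1}{m(1-\gamma)} \sum_{i=1}^m (\TrueCatCDF(y) B_x^\top)_i^2 \leq \tfrac{1}{m(1-\gamma)} \cdot m = (1-\gamma)^{-1}$, so that $Y_i \in [0, (1-\gamma)^{-1}]$ almost surely.

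With the range of $Y_i$ bounded by $(1-\gamma)^{-1}$, Hoeffding's inequality gives, for each fixed $x$ and any $t > 0$, that $\mathbb{P}\big(\big|\tfrac{1}{N}\sum_{i=1}^N Y_i - \mathbb{E}[Y_1]\big| \geq t\big) \leq 2\exp(-2Nt^2(1-\gamma)^2)$. Setting the right-hand side equal to $\delta/(2|\mathcal{X}|)$ and solving for $t$ yields $t = \tfrac{1}{(1-\gamma)\sqrt{N}}\sqrt{\tfrac{1}{2}\log(4|\mathcal{X}|/\delta)} = C_\text{H}\,\tfrac{1}{(1-\gamma)\sqrt{N}}$, with $C_\text{H}$ exactly as in the statement. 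Taking a union bound over the $|\mathcal{X}|$ states then ensures that the bound holds simultaneously for all $x \in \mathcal{X}$ with total failure probability at most $\delta/2$, as claimed.

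I do not expect a serious obstacle here; the single point requiring care is the boundedness argument, namely verifying that $\TrueCatCDF(y) B_x^\top$ is a genuine CDF vector with entries in $[0,1]$ so that the $\| \cdot \|_{\ell_2}$ norm is controlled by the length $(1-\gamma)^{-1}$ of the support interval. This is supplied directly by the interpretation of $B_x$ as a categorical Bellman operator, and everything else is a routine application of Hoeffding's inequality and a union bound.
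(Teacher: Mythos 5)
Your proposal is correct and follows essentially the same route as the paper: identify the quantity as the deviation of an empirical mean of the bounded scalar $\| \TrueCatCDF(X') B_x^\top \|_{\ell_2}^2 \in [0,(1-\gamma)^{-1}]$ from its expectation, apply Hoeffding's inequality per state, and take a union bound over $\mathcal{X}$. Your explicit verification that $\TrueCatCDF(y) B_x^\top$ is a CDF vector with entries in $[0,1]$ (via the one-state-MRP interpretation of $B_x$) is a welcome elaboration of the boundedness claim the paper states without detail.
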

\begin{proof}
    The expression $P_x \| \TrueCatCDF B_x^\top \|_{\ell_2}^2 - \hat{P}_x \| \TrueCatCDF B_x^\top \|_{\ell_2}$ is equal to the negative of the average of $N$ i.i.d.\ copies of the random variable $\| \TrueCatCDF(X') B_x^\top \|_{\ell_2}^2$, where $X' \sim P_x$, minus its expectation. Since $\| \TrueCatCDF(X') B_x^\top \|_{\ell_2}^2$ is bounded in $[0, (1-\gamma)^{-1}]$, we apply Hoeffding's inequality (for a given $x \in \mathcal{X}$) to obtain
    \begin{align*}
        \bigg| P_x \| \TrueCatCDF B_x^\top \|_{\ell_2}^2 - \hat{P}_x \| \TrueCatCDF B_x^\top \|_{\ell_2}^2 \bigg| \leq \frac{1}{(1-\gamma)\sqrt{N}}\sqrt{\frac{\log(4|\mathcal{X}|/\delta)}{2}} \, ,
    \end{align*}
    with probability at least $1 - \delta/(2|\mathcal{X}|)$. Taking a union bound over $x \in \mathcal{X}$ then yields the statement.
\end{proof}

For the second term, we may re-use the Bernstein concentration bound derived above to deduce the following.

\begin{lemma}
    Suppose Equation~\eqref{eq:pop-var} holds. Then we have, for all $x \in \mathcal{X}$, 
    \begin{align*}
         \bigg| \langle (\hat{P}_x - P_x) \TrueCatCDF B_x^\top , (\hat{P}_x + P_x) \TrueCatCDF B_x^\top \rangle_{\ell_2} \bigg| \leq 2 C_{\mathrm{B}1} \frac{1}{(1-\gamma)\sqrt{N}} + 2 C_{\mathrm{B}2} \frac{1}{(1-\gamma)N} \, .
    \end{align*}
\end{lemma}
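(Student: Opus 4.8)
The plan is to control the inner product by Cauchy--Schwarz with respect to $\langle \cdot, \cdot \rangle_{\ell_2}$ and then bound the two resulting factors using tools already established. First I would rewrite the two arguments using the operator notation of Appendix~\ref{sec:additional-notation-results}, under which $P_x \TrueCatCDF B_x^\top = (\CatOp \TrueCatCDF)(x)$ and $\hat{P}_x \TrueCatCDF B_x^\top = (\EmpCatOp \TrueCatCDF)(x)$. Thus the first argument is exactly $[(\EmpCatOp - \CatOp)\TrueCatCDF](x)$, and the second is $(\EmpCatOp \TrueCatCDF)(x) + (\CatOp \TrueCatCDF)(x)$. Applying Cauchy--Schwarz gives
\begin{align*}
    \bigg| \langle (\hat{P}_x - P_x) \TrueCatCDF B_x^\top, (\hat{P}_x + P_x) \TrueCatCDF B_x^\top \rangle_{\ell_2} \bigg|
    \leq \| [(\EmpCatOp - \CatOp)\TrueCatCDF](x) \|_{\ell_2} \cdot \| (\EmpCatOp \TrueCatCDF)(x) + (\CatOp \TrueCatCDF)(x) \|_{\ell_2} \, .
\end{align*}

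For the first (difference) factor, I would invoke the assumed bound in Equation~\eqref{eq:pop-var}, which already controls $\| [(\EmpCatOp - \CatOp)\TrueCatCDF](x)\|_{\ell_2}$ in terms of $\sqrt{\sigma(x)}$. To remove the dependence on $\sigma(x)$, I would use the crude bound $\sigma(x) \leq (1-\gamma)^{-1}$: since $\sigma(x)$ is an expected squared Cram\'er distance between two CDFs supported on an interval of length $(1-\gamma)^{-1}$ (exactly as used for $\Sigma_{\hat{P}}$ in Corollary~\ref{corr:scc-bound}), we have $\sqrt{\sigma(x)} \leq (1-\gamma)^{-1/2}$. This turns Equation~\eqref{eq:pop-var} into the factor $(1-\gamma)^{-1/2}(C_{\mathrm{B}1} N^{-1/2} + C_{\mathrm{B}2} N^{-1})$.

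For the second (sum) factor, I would bound each summand by the triangle inequality, using that both $(\EmpCatOp \TrueCatCDF)(x)$ and $(\CatOp \TrueCatCDF)(x)$ are valid CDF vectors with entries in $[0,1]$. Any such vector $F \in \mathbb{R}^m$ satisfies $\| F \|_{\ell_2}^2 = \tfrac{1}{m(1-\gamma)}\sum_{i=1}^m F_i^2 \leq (1-\gamma)^{-1}$, so each summand has $\ell_2$-norm at most $(1-\gamma)^{-1/2}$, giving the bound $2(1-\gamma)^{-1/2}$ for the sum factor. Multiplying the two factors yields $\tfrac{2}{1-\gamma}(C_{\mathrm{B}1}N^{-1/2} + C_{\mathrm{B}2}N^{-1})$, which is exactly the claimed bound.

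The main obstacle here is not analytical depth but bookkeeping: the result is essentially immediate once the two factors are identified, so the only care needed is to verify the two norm bounds (that $\sigma(x) \leq (1-\gamma)^{-1}$, and that the images of the categorical operators applied to $\TrueCatCDF$ are genuine CDF vectors with entries in $[0,1]$) and to track the constants so that the final product matches $2C_{\mathrm{B}1}(1-\gamma)^{-1}N^{-1/2} + 2C_{\mathrm{B}2}(1-\gamma)^{-1}N^{-1}$ exactly.
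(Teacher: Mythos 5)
Your proposal is correct and follows essentially the same route as the paper: Cauchy--Schwarz in $\langle\cdot,\cdot\rangle_{\ell_2}$, the first factor controlled by Equation~\eqref{eq:pop-var} together with the crude bound $\sqrt{\sigma(x)} \leq (1-\gamma)^{-1/2}$, and the second factor bounded by $2(1-\gamma)^{-1/2}$ (the paper bounds the sum vector directly via its components lying in $[0,2]$ rather than splitting by the triangle inequality, but this is the same computation).
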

\begin{proof}
    First, by the Cauchy-Schwarz inequality, we have
    \begin{align*}
        \bigg| \langle (\hat{P}_x - P_x) \TrueCatCDF B_x^\top , (\hat{P}_x + P_x) \TrueCatCDF B_x^\top \rangle_{\ell_2} \bigg| \leq \| (\hat{P}_x - P_x) \TrueCatCDF B_x^\top \|_{\ell_2} \|(\hat{P}_x + P_x) \TrueCatCDF B_x^\top \|_{\ell_2} \, .
    \end{align*}
    The first of the two factors in the product on the right-hand side is precisely the term bounded on the left-hand side of Equation~\eqref{eq:pop-var}. On the right-hand side, the vector inside the $\ell_2$-norm has components in $[0,2]$, so the norm can be straightforwardly bounded by
    \begin{align*}
        \left\lbrack \frac{1}{m(1-\gamma)}\sum_{i=1}^m 2^2 \right\rbrack^{1/2} = \frac{2}{(1-\gamma)^{1/2}} \, .
    \end{align*}
    Combining the inequalities for these two factors, and using the trivial bound $\sqrt{\sigma}(x) \leq (1-\gamma)^{-1/2}$, we obtain the stated result.
\end{proof}

Combining all these bounds together in Equation~\eqref{eq:pre-sqrt}, and taking a union bound, we conclude that with probability at least $1-\delta/2$, for all $x \in \mathcal{X}$ we have
\begin{align*}
    \sigma(x) \leq (2C_{\mathrm{B}1} + C_{\mathrm{H}}) \frac{1}{(1-\gamma)\sqrt{N}} + 2C_{\mathrm{B}2} \frac{1}{(1-\gamma)N} + \left( \gamma \| \TrueCatCDF  - \EmpCatCDF \|_{\ell_2, \infty} + \sqrt{\hat{\sigma}(x)} \right)^2 \, ,
\end{align*}
We now take square-roots of both sides, using the inequality $\sqrt{a + b} \leq \sqrt{a} + \sqrt{b}$ on the right-hand side to obtain
\begin{align*}
    \sqrt{\sigma(x)} \leq \sqrt{\hat{\sigma}(x)}  + \gamma \| \TrueCatCDF - \EmpCatCDF \|_{\ell_2, \infty} +\sqrt{2C_{\mathrm{B}1} + C_{\mathrm{H}}} \frac{1}{(1-\gamma)^{1/2} N^{1/4}} + \sqrt{2C_{\mathrm{B}2}} \frac{1}{(1-\gamma)^{1/2}N^{1/2}} \, .
\end{align*}
Substituting this into Equation~\eqref{eq:pop-var} and taking a union bound then yields that with probability at least $1-\delta$,  we have
\begin{align}\label{eq:app:bernstein-final}
    \bigg\| [(\EmpCatOp - \CatOp) \TrueCatCDF](x)  \bigg\|_{\ell_2} \leq C_{\mathrm{B}1}\frac{1}{\sqrt{N}} \sqrt{\hat{\sigma}(x)} + C_{\mathrm{B}1}\frac{1}{\sqrt{N}} \| \TrueCatCDF - \EmpCatCDF \|_{\ell_2, \infty} + C' \frac{1}{(1-\gamma)^{1/2} N^{3/4}} \, ,
\end{align}
where
\begin{align*}
    C' = C_{\mathrm{B}1}(\sqrt{2 C_{\mathrm{B}1} + C_{\mathrm{H}}} + \sqrt{2 C_{\mathrm{B}2}}) + C_{\mathrm{B}2} \, ,
\end{align*}
so that our concentration inequality is expressed in terms of the empirical local squared-Cram\'er variation $\hat{\sigma}$.

\subsection{Converting local bounds to global bounds}

To convert the local concentration results obtained above into a bound on $\ell_2(\EmpCatCDF(x), \TrueCatCDF(x))$, we first prove the following lemma.

\begin{lemma}\label{lem:T-bound-Z0}
    For $U \in \{ F \in \mathbb{R}^{\mathcal{X} \times m} : F_m(x) = 0 \text{ for all } x \in \mathcal{X}\}$, and any $k \geq 1$, we have
    \begin{align*}
        \| (T_{\hat{P}}^k U)(x) \|_{\ell_2} \leq \gamma^{k/2} \sum_{x' \in \mathcal{X}} \mathbb{P}_{\hat{P}}(X_k = x' \mid X_0 = x) \| U(x')  \|_{\ell_2} \, ,
    \end{align*}
    where $\mathbb{P}_{\hat{P}}$ denotes state transition probabilities for the MRP with transition matrix $\hat{P}$.
\end{lemma}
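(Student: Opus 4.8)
The plan is to prove the bound by induction on $k$, propagating the per-state contractivity of $B_x$ from Proposition~\ref{prop:Bx-contraction} through the mixing structure of $\EmpCatOp$. The key enabling observation is the finer-grained operator decomposition from Appendix~\ref{sec:additional-notation-results}, which lets us write $(\EmpCatOp V)(x) = \hat{P}_x V B_x^\top = \sum_{y \in \mathcal{X}} \hat{P}(y|x) B_x V(y)$, separating the transition-mixing over $y$ from the shift/scale/project map $B_x$, which depends only on $x$ (and we freely identify row- and column-vector forms, as $\|\cdot\|_{\ell_2}$ is insensitive to this).

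For the base case $k=1$, I would apply the triangle inequality to pull the sum outside $\|(\EmpCatOp U)(x)\|_{\ell_2} = \| \sum_{y} \hat{P}(y|x) B_x U(y) \|_{\ell_2}$, then use that each $U(y)$ lies in the zero-mass subspace $\{ F : F_m = 0 \}$ together with Proposition~\ref{prop:Bx-contraction} to bound $\|B_x U(y)\|_{\ell_2} \leq \sqrt{\gamma} \|U(y)\|_{\ell_2}$. Since $\mathbb{P}_{\hat{P}}(X_1 = x' \mid X_0 = x) = \hat{P}(x'|x)$, this gives exactly the claim for $k=1$.

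For the inductive step, I would set $V = \EmpCatOp^k U$ and first note that $V$ again lies in the zero-mass subspace, since Proposition~\ref{prop:T-contraction} shows $\EmpCatOp$ maps this subspace to itself; this is precisely what allows the contraction bound to be reapplied. Expanding $(\EmpCatOp^{k+1} U)(x) = (\EmpCatOp V)(x)$ via the decomposition above and repeating the triangle-inequality-plus-$B_x$-contraction argument yields $\|(\EmpCatOp^{k+1} U)(x)\|_{\ell_2} \leq \sqrt{\gamma} \sum_{y} \hat{P}(y|x) \|V(y)\|_{\ell_2}$. Substituting the inductive hypothesis for each $\|V(y)\|_{\ell_2}$ and collecting the factor $\gamma^{k/2}$ with the new $\sqrt{\gamma}$ produces a double sum over $y$ and $x'$ weighted by $\gamma^{(k+1)/2} \hat{P}(y|x) \mathbb{P}_{\hat{P}}(X_k = x' \mid X_0 = y)$.

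The final step is to recognise the inner sum $\sum_{y} \hat{P}(y|x) \mathbb{P}_{\hat{P}}(X_k = x' \mid X_0 = y)$ as $\mathbb{P}_{\hat{P}}(X_{k+1} = x' \mid X_0 = x)$ by the Chapman--Kolmogorov identity (equivalently, the Markov property and time-homogeneity of the chain with transition matrix $\hat{P}$), collapsing the double sum to the desired single sum. I do not anticipate a genuine obstacle: the argument is a clean induction, and the only thing requiring care is tracking that every iterate $\EmpCatOp^j U$ remains in the zero-mass subspace, so that the $\sqrt{\gamma}$-contractivity of $B_x$ applies uniformly at each time step.
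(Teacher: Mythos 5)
Your proposal is correct and follows essentially the same route as the paper's proof: induction on $k$, with the base case and inductive step both handled by the decomposition $(\EmpCatOp V)(x) = \sum_{y} \hat{P}(y|x) B_x V(y)$, the triangle inequality, the $\sqrt{\gamma}$-contractivity of $B_x$ from Proposition~\ref{prop:Bx-contraction}, and the Chapman--Kolmogorov collapse of the double sum. Your explicit remark that each iterate $\EmpCatOp^j U$ must remain in the zero-mass subspace (via Proposition~\ref{prop:T-contraction}) is a point the paper leaves implicit, and is a worthwhile addition.
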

\begin{proof}
    We proceed by induction. For the base case $k=1$, we have, using the operator notation introduced in Appendix~\ref{sec:additional-notation-results},
    \begin{align*}
        \| (T_{\hat{P}} U)(x) \|_{\ell_2} & = \bigg\| \sum_{y \in \mathcal{X}} \hat{P}(y|x) \sum_{j=1}^{m-1} B_x U(y) \bigg\|_{\ell_2} \\
        & \overset{(a)}{\leq}  \sum_{y \in \mathcal{X}} \hat{P}(y|x) \bigg\| B_x U(y) \bigg\|_{\ell_2} \\
        & \overset{(b)}{\leq} \sum_{y \in \mathcal{X}} \hat{P}(y|x) \gamma^{1/2} \bigg\| U(y) \bigg\|_{\ell_2} \, ,
    \end{align*}
    as required. 
    where (a) follows from the triangle inequality, and (b) follows from contractivity of $B_x$ in $\|\cdot \|_{\ell_2}$, as established by Proposition~\ref{prop:Bx-contraction}.
    
    For the inductive step, we suppose that the result holds for some $l \in \mathbb{N}$. Now, letting $k=l+1$, we have
    \begin{align*}
        \| (T_{\hat{P}}^{l+1} U)(x) \|_{\ell_2} & =  \| (T_{\hat{P}} T_{\hat{P}}^{l} U)(x) \|_{\ell_2} \\
        & = 
        \bigg\| \sum_{y \in \mathcal{X}} \hat{P}(y|x) B_x (T_{\hat{P}}^l U)(y) \bigg\|_{\ell_2} \\
        & \leq  \sum_{y \in \mathcal{X}} \hat{P}(y|x) \bigg\| B_x (T_{\hat{P}}^l U)(y) \bigg\|_{\ell_2} \\
        & \leq \sum_{y \in \mathcal{X}} \hat{P}(y|x) \gamma^{1/2} \bigg\| (T_{\hat{P}}^l U)(y) \bigg\|_{\ell_2} \\
        & \overset{(a)}{\leq} \sum_{y \in \mathcal{X}} \hat{P}(y|x) \gamma^{1/2} \gamma^{l/2} \sum_{x' \in \mathcal{X}} \mathbb{P}_{\hat{P}}(X_l = x' \mid X_0 = y) \| U(x')  \|_{\ell_2} \\
        & = \gamma^{(l+1)/2} \sum_{x' \in \mathcal{X}} \bigg[ \sum_{y \in \mathcal{X}} \mathbb{P}_{\hat{P}}(X_l = x' \mid X_0 = y) \hat{P}(y|x) \bigg] \| U(x')  \|_{\ell_2} \\
        & = \gamma^{(l+1)/2} \sum_{x' \in \mathcal{X}} \mathbb{P}_{\hat{P}}(X_{l+1} = x' \mid X_0 = x) \| U(x')  \|_{\ell_2} \, ,
    \end{align*}
    as required, where (a) follows by the inductive hypothesis.
\end{proof}

We therefore have, with probability at least $1-\delta$:
\begin{align}\label{eq:pre-var}
    & \phantom{=}\ell_2(\EmpCatCDF(x), \TrueCatCDF(x)) \\
    & = \| \EmpCatCDF(x) - \TrueCatCDF(x) \|_{\ell_2} \nonumber \\
    & \overset{(a)}{=} \bigg\| \Big[ \sum_{k \geq 0} \EmpCatOp^k (\EmpCatOp - \CatOp) \TrueCatCDF\Big](x) \bigg\|_{\ell_2} \nonumber \\
    & \overset{(b)}{\leq}  \sum_{k \geq 0}  \bigg\| \big[\EmpCatOp^k (\EmpCatOp - \CatOp) \TrueCatCDF\Big](x)  \bigg\|_{\ell_2} \nonumber \\
    & \overset{(c)}{\leq} \sum_{k \geq 0} \sum_{x' \in \mathcal{X}}
    \mathbb{P}_{\hat{P}}(X_k = x' \mid X_0 = x) \gamma^{k/2} 
    \bigg\| \Big[(\EmpCatOp - \CatOp) \TrueCatCDF\Big](x')  \bigg\|_{\ell_2} \nonumber \\
    & \overset{(d)}{\leq} \sum_{k \geq 0} \sum_{x' \in \mathcal{X}}
    \mathbb{P}_{\hat{P}}(X_k = x' \mid X_0 = x) \gamma^{k/2} 
    \bigg[
        C_{\mathrm{B}1}\frac{1}{\sqrt{N}} \sqrt{\hat{\sigma}(x')} + C_{\mathrm{B}1}\frac{1}{\sqrt{N}} \| \TrueCatCDF - \EmpCatCDF \|_{\ell_2, \infty} \nonumber \\
    & \qquad\qquad\qquad\qquad\qquad\qquad\qquad\qquad\qquad\qquad\qquad\qquad\qquad\qquad    + C' \frac{1}{(1-\gamma)^{1/2} N^{3/4}} 
    \bigg] \nonumber \\
    & \overset{(e)}{\leq}
     \frac{C_{\mathrm{B}1}}{\sqrt{N}} \| (I - \sqrt{\gamma} \hat{P})^{-1} \sqrt{\hat{\sigma}} \|_\infty
    + \frac{C_{\mathrm{B}1}}{(1-\sqrt{\gamma}) \sqrt{N}} \| \TrueCatCDF - \EmpCatCDF \|_{\ell_2, \infty}
    + \frac{C'}{(1-\sqrt{\gamma})(1-\gamma)^{1/2} N^{3/4}}  \nonumber \\
    & \overset{(f)}{\leq}
     \frac{C_{\mathrm{B}1}}{\sqrt{N}} \| (I - \sqrt{\gamma} \hat{P})^{-1} \sqrt{\hat{\sigma}} \|_\infty
    + \frac{2C_{\mathrm{B}1}}{(1-\gamma) \sqrt{N}} \| \TrueCatCDF - \EmpCatCDF \|_{\ell_2, \infty}
    + \frac{2C'}{(1-\gamma)^{3/2} N^{3/4}} \, .
\end{align}
Here, (a) follows from Equation~\eqref{eq:app:local-to-global}, (b) follows from the triangle inequality, and (c) follows from Lemma~\ref{lem:T-bound-Z0}. The step (d) is the one step in the derivation that does not hold with probability 1, but rather with probability $1-\delta$, and follows from the Bernstein-style bounds in Equation~\eqref{eq:app:bernstein-final}, (e) follows from algebraic manipulation using the identity $\sum_{k \geq 0} (\gamma^{1/2} \hat{P})^k = (I - \sqrt{\gamma}\hat{P})^{-1}$, as $\hat{P}$ is a stochastic matrix, and bounding the elements of $(I - \sqrt{\gamma}\hat{P})^{-1} \sqrt{\hat{\sigma}}$ by the $L^\infty$ norm of the vector, and (f) follows from the inequality $(1-\sqrt{\gamma})^{-1} \leq 2 (1-\gamma)^{-1}$ for $\gamma \in [0, 1)$. 

Focusing now on the coefficient $\| (I - \sqrt{\gamma} \hat{P})^{-1} \sqrt{\hat{\sigma}} \|_\infty$, we note that similar to the analysis of \citet[][Lemma~4]{agarwal2020model} in the mean-return case, we can write
\begin{align*}
    (I - \sqrt{\gamma} \hat{P})^{-1} \sqrt{\hat{\sigma}} = (1 - \sqrt{\gamma})^{-1} (1-\sqrt{\gamma}) (I - \sqrt{\gamma} \hat{P})^{-1} \sqrt{\hat{\sigma}} \, ,
\end{align*}
so that each row of 
\begin{align*}
    (1 - \sqrt{\gamma})(I - \sqrt{\gamma} \hat{P})^{-1}
\end{align*}
forms a probability distribution, and so we may apply Jensen's inequality with the map $z \mapsto \sqrt{z}$, to obtain
\begin{align}\label{eq:post-jensen}
    (I - \sqrt{\gamma} \hat{P})^{-1} \sqrt{\hat{\sigma}} \leq (1 - \sqrt{\gamma})^{-1} \sqrt{ (1-\sqrt{\gamma}) (I - \sqrt{\gamma} \hat{P})^{-1} \hat{\sigma}} = (1-\sqrt{\gamma})^{-1/2} \sqrt{ (I - \sqrt{\gamma} \hat{P})^{-1} \hat{\sigma}} \, ,
\end{align}
where the inequality above holds coordinate-wise. We thus obtain the inequality
\begin{align*}
    \| \TrueCatCDF& - \EmpCatCDF \|_{\ell_2, \infty}
    \leq \\
    &\frac{2C_{\mathrm{B}1}}{(1-\gamma)^{1/2}\sqrt{N}} \sqrt{ \| (I - \sqrt{\gamma} \hat{P})^{-1} \sqrt{\hat{\sigma}} \|_\infty }
    + \frac{2C_{\mathrm{B}1}}{(1-\gamma) \sqrt{N}} \| \TrueCatCDF - \EmpCatCDF \|_{\ell_2, \infty}
    + \frac{2C'}{(1-\gamma)^{3/2} N^{3/4}}  \, .
\end{align*}
Note that the term $\| \TrueCatCDF - \EmpCatCDF \|_{\ell_2, \infty}$ appears on both sides of the inequality above. By taking $N \geq 16 C_{\mathrm{B}1}^2 (1-\gamma)^{-2}$, we ensure that the coefficient in front of $\| \TrueCatCDF - \EmpCatCDF \|_{\ell_2, \infty}$ on the right-hand side is made smaller than $1/2$, similar to the approach taken in the proof of instance-dependent sample complexity bounds for mean-return estimation by
\citet[Theorem~1(a)]{pananjady2020instance}. Under this assumption, rearrangement yields
\begin{align}\label{eq:rearranged}
    \| \TrueCatCDF - \EmpCatCDF \|_{\ell_2, \infty}
    \leq
     \frac{4C_{\mathrm{B}1}}{(1-\gamma)^{1/2}\sqrt{N}} \sqrt{ \| (I - \sqrt{\gamma} \hat{P})^{-1} \sqrt{\hat{\sigma}} \|_\infty }
    + \frac{4C'}{(1-\gamma)^{3/2} N^{3/4}} 
\end{align}
as described in Section~\ref{sec:analysis}.

\subsection{The stochastic categorical CDF Bellman equation}
\label{sec:app:sccdf}

We now aim to use our developments regarding the stochastic categorical CDF Bellman equation (see Section~\ref{sec:sc-cdf-bellman}) to bound the quantity appearing within the square root. To be able to use the bound obtained in Corollary~\ref{corr:scc-bound}, we first replace the factor $\sqrt{\gamma}$ inside the square-root on the right-hand side of Equation~\eqref{eq:post-jensen} with $\gamma$, using the bound $(I - \sqrt{\gamma} P)^{-1} \leq 2 (I - \gamma P)^{-1}$, shown by \citet[][Lemma~4]{agarwal2020model}. This, together with Corollary~\ref{corr:scc-bound}, yields
\begin{align*}
    \sqrt{ (I - \sqrt{\gamma} \hat{P})^{-1} \hat{\sigma}} \leq \sqrt{2} \sqrt{(I - \gamma \hat{P})^{-1} \hat{\sigma}} \leq \sqrt{2} \sqrt{ \frac{2}{1-\gamma} } \mathbf{1} = \frac{2}{(1 -\gamma)^{1/2}} \mathbf{1} \, .
\end{align*}
Thus, returning to Equation~\eqref{eq:rearranged} and substituting this bound in, we obtain (again, with probability at least $1-\delta$, for all $x \in \mathcal{X}$):
\begin{align}\label{eq:final-ineq}
    \ell_2(\EmpCatCDF(x), \TrueCatCDF(x)) &
    \leq
     \frac{8C_{\mathrm{B}1}}{(1-\gamma) \sqrt{N}} 
    + \frac{4C'}{(1-\gamma)^{3/2} N^{3/4}} \, .
\end{align}

\subsection{Final steps of the proof of Theorem~\ref{thm:cramer}}
\label{sec:proof-final-steps}

We now let $N \geq c_0 \varepsilon^{-2} (1-\gamma)^{-2}$, with $c_0$ any positive number satisfying
\begin{align}\label{eq:constant-condition}
    c_0 \geq 2^{10} C_{\mathrm{B}1}^2 \text{ and } c_0 > 16^{4/3} (C')^{4/3} \, .
\end{align}
Note that the first of these conditions implies the earlier assumption $N \geq 16 C_{\mathrm{B}1}^2 (1-\gamma)^{-2}$ made in order to arrive at  Equation~\eqref{eq:rearranged}.
We then conclude from Equation~\eqref{eq:final-ineq} that (with probability at least $1-\delta$, for all $x \in \mathcal{X}$):
\begin{align*}
    \ell_2(\EmpCatCDF(x), \TrueCatCDF(x))
    & \leq
    \frac{8C_{\mathrm{B}1}}{\sqrt{c_0}} \varepsilon + \frac{4C'}{c_0^{3/4}} \varepsilon^{3/2} \\
    & \leq \frac{8C_{\mathrm{B}1}}{\sqrt{c_0}} \varepsilon + \frac{4C'}{c_0^{3/4}} \varepsilon \\
    & \leq \varepsilon / 2 \, ,
\end{align*}
with the final inequality following due to the assumption on $c_0$ making both coefficients of $\varepsilon$ bounded by $1/4$. This concludes the proof of Theorem~\ref{thm:cramer}.

To derive a concrete sample complexity bound, including logarithmic terms, we may bound the terms in Equation~\eqref{eq:constant-condition} as follows; we emphasise that we do not aim to be as tight as possible in the following analysis, but merely to provide an concrete, valid bound on $c_0$. First, we have that the
\begin{align*}
     2^{10} C_{\mathrm{B}1}^2 & = 2^{10} (\sqrt{2 \log(4|\mathcal{X}|/\delta)})^2 \\
     & = 2^{11} \log(4|\mathcal{X}|/\delta) \, .
\end{align*}
Next, we have (introducing the shorthand $L = \log(4|\mathcal{X}|/\delta)$):
\begin{align*}
    16^{4/3} (C')^{4/3} & \leq 2^6 \bigg( C_{\mathrm{B}1} (\sqrt{2 C_{\mathrm{B}1} + C_{\mathrm{H}}} + \sqrt{2C_{\mathrm{B}2}}) + C_{\mathrm{B}2}  \bigg)^{4/3} \\
    & = 2^6 \big(\sqrt{2 L} \big(\sqrt{2 \sqrt{2L} + \sqrt{L/2}}  + \sqrt{4 L}\big) + 2 L \big)^{4/3} \\
    & \leq 2^6 \big( L\sqrt{2} \big(\sqrt{2 \sqrt{2} + \sqrt{1/2}}  + 2 \big) + 2 L \big)^{4/3} \\
    & = 2^6 L^{4/3} \big( \sqrt{2} \big(\sqrt{2 \sqrt{2} + \sqrt{1/2}}  + 2 \big) + 2 \big)^{4/3} \\
    & \leq 2^6 L^{4/3} 2^4 \\
    & = 2^{10} L^{4/3} \, .
\end{align*}
We therefore conclude, for example, that taking $c_0 \geq 2^{11} \log(4|X|/\delta)^{4/3}$ is sufficient.

\section{Extensions}\label{sec:app-extensions}

Below, we describe several straightforward extensions of Theorem~\ref{thm:wasserstein}, that allow us to give guarantees for related algorithms and problems concerning sample-efficient distributional reinforcement learning.

\subsection{Categorical dynamic programming}\label{sec:app-cdp}

The core theoretical result, Theorem~\ref{thm:wasserstein}, can be used to provide guarantees for the iterative CDP algorithm. Let us write $F_k \in \mathbb{R}^{\mathcal{X} \times [m]}$ for the result of applying the empirical CDP operator $\Pi_m \mathcal{T}$ a total $k$ times to an initial estimate $F_0 \in \mathbb{R}^{\mathcal{X} \times [m]}$. By the existing categorical theory described in Proposition~\ref{prop:cdrl} \citep{rowland2018analysis}, we can bound the distance between the CDP estimate $F_k$ and the DCFP solution $\hat{F}$ as
\begin{align*}
    \overline{\ell}_2(F_k, \hat{F}) \leq \gamma^{k/2} \overline{\ell}_2(F_0, \hat{F}) \leq \frac{\gamma^{k/2}}{(1-\gamma)^{1/2}} \, .
\end{align*}
Thus, taking $k \geq \frac{2 \log(1/\varepsilon) + 3\log(1/(1-\gamma))}{\log(1/\gamma)}$, this error is smaller than $\varepsilon(1-\gamma)^{1/2}$. Thus, by Lemma~\ref{lem:wasserstein-to-cramer}, we have
\begin{align*}
    \overline{w}_1(F_k, \hat{F}) \leq \varepsilon \, .
\end{align*}
By the triangle inequality, we therefore have that if $\overline{w}_1(\hat{F}, \eta^*) \leq \varepsilon$, then $\overline{w}_1(F_k, \eta^*) \leq 2\varepsilon$ under this assumption on $k$, and we therefore conclude that under the assumptions of Theorem~\ref{thm:wasserstein}, the CDP algorithm run with $k \geq \frac{2 \log(1/\varepsilon) + 3\log(1/(1-\gamma))}{\log(1/\gamma)}$ also has sample complexity $\widetilde{\Omega}(\varepsilon^{-2} (1-\gamma)^{-3})$.

\subsection{Policy optimisation}\label{sec:app-optimisation}

Theorem~\ref{thm:wasserstein} can also be straightforwardly adapted to obtain sample complexity results for the \emph{policy optimisation} problem in Markov decision processes, in which the optimal policy must also be learnt from data, prior to estimating its return distribution. We describe one way in which this can be made concrete. Suppose we have a finite-state, finite-action MDP with reward a function of state, with a unique optimal policy $\pi^*$ with an action gap $\varepsilon \in [0, (1-\gamma)^{1/2})$, meaning that at any state $x$, if $a, b \in \mathcal{A}$ are the optimal action and a sub-optimal action respectively, then $Q^{\pi^*}(x,b) < Q^{\pi^*}(x,a) - \varepsilon$. Then, for example, Theorem~1 of \citet{agarwal2020model} yields that for $\delta \in (0, 1)$, with $N = \widetilde{O}(\tilde{\varepsilon}^{-2} (1-\gamma)^{-3} \log(1/\delta))$ sampled transitions per state-action pair we can correctly identify the optimal policy $\pi^*$ with probability at least $1-\delta$. We may then form the MRP corresponding to executing the policy $\pi^*$, and may resample from the transitions used in identifying the optimal policy to compute approximate return distributions for the optimal policy; Theorem~\ref{thm:wasserstein} and a union bound then guarantee $\varepsilon$-accurate approximations in Wasserstein distance with high probability.

To understand the role of the unique optimal policy assumption in our discussion above, recall that when there are multiple optimal policies, each generally has a distinct collection of return distributions, and so there is not a single object that one should hope to approximate. This is not the case with value functions, which are necessarily identical for all optimal policies. As described in earlier work on policy optimisation with distributional RL \citep{bellemare2017distributional,bdr2023}, this lack of uniqueness of optimal return distributions also leads to interesting theoretical issues when analysing algorithms such as value iteration. 
However, we expect a result in which one obtains an approximation to the return distributions of \emph{some} optimal policy can also be obtained, even in the non-unique-optimal-policy setting.

\subsection{Stochastic rewards}\label{sec:app-stoc-rewards}

In our main results, we have assumed that the immediate reward is a deterministic function of state. Results in mean-return RL are also commonly given under this assumption \citep{gheshlaghi2013minimax}, since this simplifies notation and the extension to stochastic rewards is often straightforward to obtain \citep{pananjady2020instance}. In this section, we describe how to modify the argument presented in the deterministic-reward case to allow for stochastic rewards.

\textbf{Assumptions.}
We now consider working instead with a reward distribution function $\mathcal{R} : \mathcal{X} \rightarrow \mathscr{P}([0,1])$, so that $\mathcal{R}(x)$ is the \emph{distribution} of immediate rewards received at state $x$. The $N$ i.i.d.\ transitions obtained from state $x$ are given by $(x, R^x_i, X^x_i)_{i=1}^N$, where as before $X^x_i \overset{\text{i.i.d.}}{\sim} P(\cdot|x)$, and independently $R^x_i  \overset{\text{i.i.d.}}{\sim} \mathcal{R}(x)$, for $i=1,\ldots,m$.

\textbf{Algorithms.}
The categorical operator $T_P$ is defined in the stochastic reward case in direct analogy with the definition presented in Section~\ref{sec:dcfp-outer} in the deterministic reward case. The generalisation is that the quantities $h^x_{ij}$ given in Equation~\eqref{eq:cat-update-pmf2}, which represent the contribution to mass allocated to $z_i$ at $x$ due to backing up from atom $z_j$, are now defined as
\begin{align*}
    h^x_{ij} = \mathbb{E}_{R \sim \mathcal{R}(x)}[h_i(R + \gamma z_j)] \, ,
\end{align*}
for each $x \in \mathcal{X}$ and $i,j=1,\ldots,m$. The quantities $H^x_{i,j}$ and $T_P$ are defined exactly as in the deterministic case, from this generalised definition of the $h^x_{i,j}$. The model-based algorithms, based on the $N$ sampled transitions at each state described above, now also approximate the $h^x_{i,j}$ from the empirical reward distributions $\hat{\mathcal{R}}(x) = N^{-1} \sum_{i=1}^N \delta_{R^x_i}$, as well as $P$. In particular, we have
\begin{align*}
    \hat{h}^x_{i,j} = \frac{1}{N} \sum_{i=1}^N h_i(R^x_i + \gamma z_j) \, ,
\end{align*}
from which we define the corresponding estimators $\hat{H}^x_{i,j}$ of $H^x_{i,j}$. We now write the empirical categorical operator as $T_{\hat{P}, \hat{\mathcal{R}}}$, to emphasise the dependence on the empirical reward distributions $(\mathcal{R}(x) : x \in \mathcal{X})$ too, which is then defined as
\begin{align*}
    T_{\hat{P}, \hat{\mathcal{R}}}(x,i;y,j) = \hat{P}(y|x) (\hat{H}^x_{i,j} - \hat{H}^x_{i,j+1}) \, .
\end{align*}
Model-based DCFP in the stochastic reward setting then simply corresponds to solving the linear system built from $T_{\hat{P}, \hat{\mathcal{R}}}$ appearing in Equation~\eqref{eq:full-DCFP}, based on this empirical categorical operator.

\textbf{Sample complexity.}
The model-based DCFP algorithm also satisfies the sample complexity bound given in Theorem~\ref{thm:cramer} (and hence in Theorem~\ref{thm:wasserstein} too) in the case of stochastic rewards. Only a few additional details are required to extend the proof in the deterministic reward case to the stochastic case, which we sketch below. As before, we write $F^*$ for the CDF values of the true categorical fixed point and $\hat{F}$ for the estimated fixed point obtained by running DCFP with the operator $T_{\hat{P}, \hat{\mathcal{R}}}$. We also introduce the notation $T_{\hat{P}, \mathcal{R}}$ for the categorical operator using estimated transition probabilities $\hat{P}$ but \emph{true} reward distributions $(\mathcal{R}(x) : x \in \mathcal{X})$, and write $\acute{F}$ for the corresponding categorical fixed point.

\textbf{Proof modifications.}
First, when performing the initial reduction to categorical fixed-point error, we invoke the triangle inequality an additional time, to obtain
\begin{align}\label{eq:stoc-reward-ineq}
    \overline{\ell}_2(\eta^*, \hat{F}) \leq \overline{\ell}_2(\eta^*, F^*) + \overline{\ell}_2(F^*, \acute{F}) + \overline{\ell}_2(\acute{F}, \hat{F}) \, .
\end{align}

\emph{First term.} 
As in the proof in the deterministic case, the first term on the right-hand side is $O(\varepsilon)$ due to the assumption on $m$ in the theorem statement; in particular, Proposition~\ref{prop:cdrl} as proven by \citet{rowland2018analysis} holds in the case of stochastic rewards too.

\emph{Second term.}
The second term on the right-hand side of Equation~\eqref{eq:stoc-reward-ineq} is bounded by following the same argument as given in Appendices~\ref{sec:prop-errors}--\ref{sec:proof-final-steps}. The only part of the argument in these sections that specifically relies on the immediate reward being deterministic, rather than relying on the contractivity of the operator (which also holds in the stochastic-reward case \citep{rowland2018analysis}), is Proposition~\ref{prop:anti-contraction}, which is used in establishing Proposition~\ref{prop:SCCIneq} and Corollary~\ref{corr:scc-bound}. We state and prove a version of Proposition~\ref{prop:anti-contraction} that holds under the weaker assumption of stochastic rewards too.

\begin{proposition}\label{prop:anti-contraction-stoc}
    In the general case of stochastic immediate rewards described above, we have that for any $F, F' \in \mathscr{F}$,
    \begin{align*}
        \| B_x F - B_x F' \|^2_{\ell_2} \geq \gamma \| F - F' \|^2_{\ell_2}  - \frac{2}{m(1-\gamma)^{1/2}} - \frac{1}{m^2 (1-\gamma)^2} - 4 \, .
    \end{align*}
\end{proposition}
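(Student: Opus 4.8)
The plan is to reduce the stochastic-reward bound to the deterministic case by observing that the only genuinely new ingredient is the averaging over the random reward, and then showing that this averaging costs at most an additive constant in squared Cramér distance. First I would record the correct interpretation of $B_x$ in the stochastic setting. Writing $\nu, \nu' \in \mathscr{P}(\{z_1,\ldots,z_m\})$ for the distributions with CDF values $F, F'$, the definition $h^x_{i,j} = \mathbb{E}_{R\sim\mathcal{R}(x)}[h_i(R + \gamma z_j)]$ together with the linearity of the projection $\Proj$ (which assigns mass $\mathbb{E}_{Z\sim\,\cdot\,}[h_i(Z)]$ to $z_i$) shows that $B_x$ sends $\nu$ to $\Proj\bar\mu$, where $\bar\mu = \mathbb{E}_{R\sim\mathcal{R}(x)}[(\mathrm{b}_{R,\gamma})_\#\nu]$ is the law of $R + \gamma G$ with $R\sim\mathcal{R}(x)$ and $G\sim\nu$ independent; similarly $B_x$ sends $\nu'$ to $\Proj\bar\mu'$. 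Since $R\in[0,1]$ and $\gamma G \in [0, \gamma(1-\gamma)^{-1}]$, both $\bar\mu,\bar\mu'$ are supported on $[0,(1-\gamma)^{-1}]$, so Lemma~\ref{lem:proj-anti-concentration} applies.

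With this in hand, the argument mirrors the proof of Proposition~\ref{prop:anti-contraction}. Applying the third inequality of Lemma~\ref{lem:proj-anti-concentration} to $\bar\mu,\bar\mu'$ peels off the projection at the cost of the two $m$-dependent terms, leaving me to lower bound $\ell_2^2(\bar\mu,\bar\mu')$. In the deterministic case this equalled $\gamma\ell_2^2(\nu,\nu')$ exactly, by homogeneity of the Cramér distance; here the reward randomness breaks exact homogeneity, and the entire content of the proposition is to show that it is broken by at most a constant, i.e.\ $\ell_2^2(\bar\mu,\bar\mu') \geq \gamma\ell_2^2(\nu,\nu') - C$ with $C\leq 4$.

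Establishing this last bound is the step I expect to be the main obstacle, since the naive estimate through CDFs (controlling the pointwise variance of the reward-convolution of $F_\nu - F_{\nu'}$) only yields a loss of order $(1-\gamma)^{-1}$ rather than a constant. Instead I would abandon CDFs and use the energy-distance representation of the squared Cramér distance in Equation~\eqref{eq:energy-distance}. Introducing independent samples $R_1,\ldots,R_4\sim\mathcal{R}(x)$, $G_1,G_2\sim\nu$, $G_1',G_2'\sim\nu'$, and setting $A = R_1+\gamma G_1$, $A'=R_2+\gamma G_2$, $B=R_3+\gamma G_1'$, $B'=R_4+\gamma G_2'$, Equation~\eqref{eq:energy-distance} writes $\ell_2^2(\bar\mu,\bar\mu') = \mathbb{E}|A-B| - \tfrac12\mathbb{E}|A-A'| - \tfrac12\mathbb{E}|B-B'|$, while applying Equation~\eqref{eq:energy-distance} to $\nu,\nu'$ and scaling by $\gamma$ gives $\gamma\ell_2^2(\nu,\nu') = \gamma\mathbb{E}|G_1-G_1'| - \tfrac{\gamma}{2}\mathbb{E}|G_1-G_2| - \tfrac{\gamma}{2}\mathbb{E}|G_1'-G_2'|$. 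Comparing term by term and using the reverse triangle inequality $\big||a+c|-|c|\big|\leq|a|$, with $a$ a difference of two rewards (so $|a|\leq 1$, as rewards lie in $[0,1]$) and $c$ the corresponding scaled $G$-difference, each of the three terms changes by at most its reward contribution, giving $|\ell_2^2(\bar\mu,\bar\mu')-\gamma\ell_2^2(\nu,\nu')|\leq 1+\tfrac12+\tfrac12 = 2 \leq 4$. Combining the two displayed reductions and recalling $\ell_2^2(\nu,\nu')=\|F-F'\|^2_{\ell_2}$ then yields the claimed inequality (in fact with the sharper constant $2$ in place of $4$).
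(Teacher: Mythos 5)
Your proposal is correct and follows essentially the same route as the paper's proof: peel off the projection via the third inequality of Lemma~\ref{lem:proj-anti-concentration}, then lower-bound $\ell_2^2(\bar\mu,\bar\mu')$ against $\gamma\ell_2^2(\nu,\nu')$ using the energy-distance representation in Equation~\eqref{eq:energy-distance} together with the triangle inequality and boundedness of rewards. Your version is in fact marginally sharper (constant $2$ rather than $4$), since you bound each reward difference by $1$ where the paper uses the looser bound $2$; this only strengthens the stated inequality.
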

\begin{proof}
    Following the proof of Proposition~\ref{prop:anti-contraction}, we obtain
    \begin{align*}
         \| B_x F - B_x F' \|^2_{\ell_2} \geq 
        \ell_2^2( R + \gamma G, R + \gamma G') - \frac{2}{m(1-\gamma)^{1/2}} - \frac{1}{m^2(1-\gamma)^2}  \, ,
    \end{align*}
    where $R \sim \mathcal{R}(X')$, and independently, $G, G'$ are random variables taking values in $\{z_1,\ldots,z_m\}$ with CDF values given by $F, F'$, respectively, and we write $\ell_2^2( R + \gamma G, R + \gamma G')$ as shorthand for the Cram\'er distance between the two distributions of these random variables, $\mathbb{E}_{R \sim \mathcal{R}(x)}[(\mathrm{b}_{R, \gamma})_\# \nu]$, $\mathbb{E}_{R \sim \mathcal{R}(x)}[(\mathrm{b}_{R, \gamma})_\# \nu']$, respectively. We then use the characterisation of the squared-Cram\'er distance in Equation~\eqref{eq:energy-distance} to obtain, writing $R_1, R_2 \sim \mathcal{R}(x)$, $G_1, G_2 \sim \nu$, $G'_1, G'_2 \sim \nu'$, all independent of one another,
    \begin{align*}
        & \phantom{=} \ell_2^2( R + \gamma G, R + \gamma G') \\
        &=  
        \mathbb{E}[|R_1 + \gamma G_1 - (R_2 + \gamma G'_2)|] - \frac{1}{2}\mathbb{E}[|R_1 + \gamma G_1 - (R_2 + \gamma G_2)|] \\
        & \qquad- \frac{1}{2}\mathbb{E}[|R_1 + \gamma G'_1 - (R_2 + \gamma G'_2)|] \\
        & \overset{(a)}{\geq} \mathbb{E}[|\gamma G_1 - \gamma G'_2| - 2] - \frac{1}{2}\mathbb{E}[|\gamma G_1 - \gamma G_2| + 2] - \frac{1}{2}\mathbb{E}[| \gamma G'_1 - \gamma G'_2| + 2] \\
        & = \gamma \ell_2^2(\nu, \nu') - 4  = \gamma \| F - F' \|^2_{\ell_2} - 4 \, ,
    \end{align*}
    as required, where (a) follows from boundedness of the rewards, and the triangle inequality.
\end{proof}

With Proposition~\ref{prop:anti-contraction-stoc}, we obtain a weaker version of Corollary~\ref{corr:scc-bound} in the stochastic reward case, yielding $\| (I - \gamma \hat{P})^{-1} \sigma_{\hat{P}} \|_{\infty} \leq 6(1-\gamma)^{-1}$, which is sufficient to conclude the claimed bound on the second term as described in Appendices~\ref{sec:app:sccdf} \& \ref{sec:proof-final-steps}, specifically yielding that this second term is $O(\varepsilon)$ with probability at least $1-\delta$.

\emph{Third term.} 
Finally, the third term on the right-hand side of Equation~\eqref{eq:stoc-reward-ineq} is a new term that emerges specifically in the stochastic-reward case, corresponding to errors in the fixed-point solely due to mis-estimated immediate reward distributions.
First, by an analogous argument to that in Section~\ref{sec:prop-errors}, we may write
\begin{align*}
    \acute{F} - \hat{F} = \sum_{k = 0}^\infty (T_{\hat{P}, \hat{\mathcal{R}}})^k (T_{\hat{P}, \mathcal{R}} - T_{\hat{P}, \hat{\mathcal{R}}}) \acute{F} \, .
\end{align*}
We have
\begin{align*}
     \bigg\| \left\lbrack(T_{\hat{P}, \mathcal{R}} - T_{\hat{P}, \hat{\mathcal{R}}}) \acute{F} \right\rbrack (x) \bigg\|_{\ell_2} \overset{(a)}{\leq} \ell_2(\mathcal{R}(x), \hat{\mathcal{R}}(x)) \overset{(b)}{\leq} \sqrt{\frac{\log(2 |\mathcal{X}|/\delta)}{2N}}
\end{align*}
with probability at least $1-\delta$. Here, (a) follows by homogeneity of the Cram\'er distance \citep[Proof of Proposition~2]{rowland2018analysis}, and (b) follows from the DKW inequality \citep{dvoretzky1956asymptotic,massart1990tight} and a union bound. Then, using contractivity of $T_{\hat{P}, \hat{\mathcal{R}}}$ and the triangle inequality, we obtain
\begin{align*}
    \| \acute{F} - \hat{F} \|_{\ell_2, \infty} \leq \sum_{k=0}^\infty \gamma^{k/2} \sqrt{\frac{\log(2 |\mathcal{X}|/\delta)}{2N}} = \widetilde{\mathcal{O}}((1-\gamma)^{-1} N^{-1/2}) \, ,
\end{align*}
with probability at least $1-\delta$, which is $O(\varepsilon)$ under the choice of $N$ in the theorem statement. In summary, taking a union over the events corresponding to all concentration bounds used, we obtain that $\overline{\ell}_2(\eta^*, \hat{F})$ is $O(\varepsilon)$ with probability at least $1-2\delta$, as required.

\section{Further experimental results and details}
\label{sec:app-experiments}

In this section, we give full details for the experiment presented in the main paper, and additionally present extended results on several additional environments. All experiments were run using the Python 3 language \citep{python3}, and made use of NumPy \citep{harris2020array}, SciPy \citep{scipy}, Matplotlib \citep{matplotlib}, and Seaborn \citep{seaborn} libraries. As all experiments are tabular, each run uses a single CPU, and timings are reported within the experimental results.

\subsection{Environments}

We report results on four MRPs defined as follows:
\begin{enumerate}
    \item \textbf{Chain}: 10 states arranged in a chain $x_1 \leftrightarrow{} x_2 \leftrightarrow{}  \cdots \leftrightarrow{} x_{10} $. Each state transitions to its neighbours with equal probability. States 1 and 10
    are terminal. Only state 10 has a reward of 1, and all other states have reward 0.
    \item \textbf{Low random}: There are five states, and the transition probabilities from each state to all five states are drawn independently from a Dirichlet distribution with concentration parameter $(0.01, \ldots, 0.01)$. The rewards 
    for each state is drawn i.i.d.\ from the uniform distribution over $[0,1]$. 
    We draw these transition probability and rewards using the same random seed to yield the 
    same MRP for all experiments.
    \item \textbf{High random}: Same as the low random environment, but with transitions drawn from a Dirichlet distribution with concentration parameter $(10, \ldots, 10)$.
    \item \textbf{Two-state}: A 2-state MRP modified from Example~6.5 of \citet{rowland2023analysis}. The transition matrix is 
    \begin{equation*}
        \begin{pmatrix}
            0.6 & 0.4\\
            0.8 & 0.2\\
        \end{pmatrix}
    \end{equation*}
    where the $(i,j)$ entry is the probability of transitioning from state $i$ to state $j$. State 1 has reward 0, and state 2 has reward 1.
\end{enumerate}

We chose this set of environments to include classic tabular settings such as the chain, two environments with very different levels of stochasticity (low and high random), as well as the two-state example from \citet{rowland2023analysis}, which illustrates the nature of fixed-point approximation error incurred by QDP in environments with tight bootstrapping loops. We vary the discount factor $\gamma\in\{0.8, 0.9, 0.95, 0.99\}$.
For reference, we plot Monte Carlo approximations of return distributions for each environment in Figure~\ref{fig:mc-returns}, with $\gamma = 0.9$. The return samples 
are computed using the first-visit Monte Carlo algorithm. For each initial state,
we run at least $T$ transitions such that the maximum error after $T$ transitions $\gamma^T r_{\text{max}} / (1-\gamma) < \varepsilon$,
where $r_{\text{max}}=1$ and we set $\varepsilon=10^{-4}$. This is repeated $10^4$ times 
for each state, giving at least $10^4$ return samples per state. 

\begin{figure}
    \centering
    \subfloat[Chain]{\includegraphics[width=0.7\textwidth]{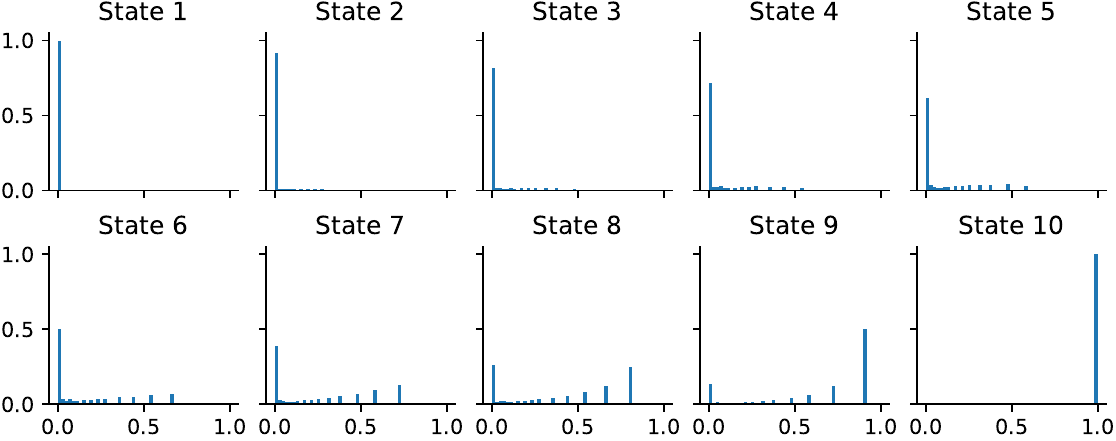}}\\
    \subfloat[Low random]{\includegraphics[width=0.7\textwidth]{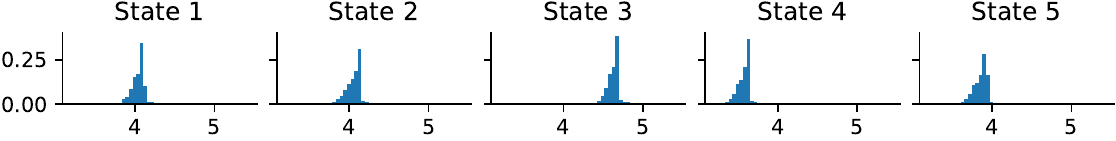}}\\
    \subfloat[High random]{\includegraphics[width=0.7\textwidth]{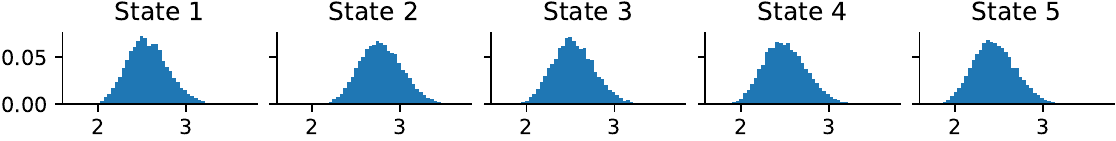}}\\
    \subfloat[Two-state]{\includegraphics[width=0.7\textwidth]{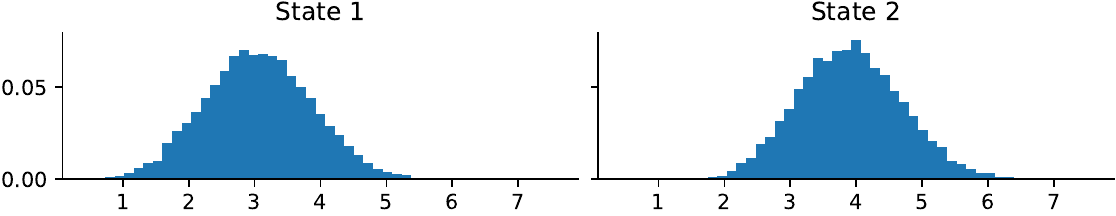}}
    \caption{Monte Carlo approximations of return distributions in each of the four environments tested.}
    \label{fig:mc-returns}
\end{figure}

\subsection{Algorithms}

As described in the main paper, we compare DCFP, QDP, and CDP algorithms.
For DCFP and CDP, we make use of the sparse structure of the update matrix (see Equation~\eqref{eq:cat-matrix}), and implement these algorithms using SciPy's sparse matrix-vector multiplication and linear system solution methods \citep{scipy}. The mathematical details regarding the sparsity of the update matrix is described in Appendix~\ref{sec:sparsity}.
For comparison, we also run implementations of DCFP and CDP that do not exploit this sparsity, and instead use standard dense NumPy implementations for matrix-vector multiplication and linear system solution methods; these are denoted by d-DCFP and d-CDP, respectively. 

By default the categorical methods (all variants of DCFP and CDP) use the atom locations described in the main paper: $z_i = \tfrac{i-1}{m-1}(1-\gamma)^{-1}$ for $i=1,\ldots,m$. These atom locations are sufficient to establish the theoretical results in the main paper, though we note that in practice, particularly when true return distributions are localised within a small sub-interval of $[0, (1-\gamma)^{-1}]$, this setting can be overly conservative. However, in many cases, there are straightforward ways in which the choice of support can be improved, essentially by replacing the uniformly valid return range $[0,(1-\gamma)^{-1}]$ with an \emph{a priori} known environment-specific reward range. As an example, if the known range of immediate rewards forms a sub-interval $[r_{\text{min}}, r_{\text{max}}] \subseteq [0,1]$, then the return must fall into the environment-specific reward range $[r_{\text{min}}(1-\gamma)^{-1}, r_{\text{max}}(1-\gamma)^{-1}]$, and improved atom locations $z_i = r_{\text{min}}(1-\gamma)^{-1} + \tfrac{i-1}{m-1}(r_{\text{max}} - r_{\text{min}})(1-\gamma)^{-1}$ (for $i=1,\ldots,m$) can be used, whilst maintaining the guarantee that the support for the true return distributions lie within this interval. We thus additionally report results for versions of DCFP and CDP that use environment-specific atom locations, to investigate what practical improvements can be gained with such additional information. Specifically, for the high random and low random environments, we use the tighter bounds on support given by known $r_{\text{min}}$ and $r_{\text{max}}$, as described above. In the chain environment, we consider using the advanced knowledge that only the transition into the terminal state is rewarding, yielding a sub-interval for possible returns of $[0,1]$. In the two-state case, the range of possible returns is the full interval $[0, (1-\gamma)^{-1}]$, and so we do not investigate an environment-specific set of atoms in this case. However, it is clear from the Monte Carlo approximations to the return distributions in this environment in Figure~\ref{fig:mc-returns} that these distributions do have the vast majority of their probability mass in a much smaller interval than the worst case interval $[0,(1-\gamma)^{-1}]$.

We ran all DP methods with 30,000 iterations. For all categorical DP
methods, we verify that the Wasserstein distance between the last 
iterate to the Monte Carlo ground truth is almost identical to 
the Wasserstein distance between the categorical fixed-point to
the ground truth.

\subsection{Efficient implementation of CDP and DCFP}\label{sec:sparsity}

A straightforward implementation of CDP and DCFP is to vectorise $F$, treating it as a vector indexed by state-index pairs $(x, i)$, and correspondingly treat $\EmpCatOp$ as a matrix whose rows and columns are indexed similarly. Iterations of CDP can then be performed by simple matrix-vector multiplications with this representation of $\EmpCatOp$, and the solution of the linear system appearing in Equation~\eqref{eq:full-DCFP} that forms the core of DCFP can be implemented with a call to a standard linear system solver, such as \texttt{numpy.linalg.solve} \citep{harris2020array}.

However, the operator $\EmpCatOp$ typically has some specific structure that can be exploited in implementations. In particular, we highlight here the sparse structure of $\EmpCatOp$, allowing for potential speed-ups in implementation by making use of sparse linear solvers, such as \texttt{scipy.sparse.linalg.spsolve} \citep{scipy}. Recall that, viewing $\EmpCatOp$ as a matrix (c.f.\ Equation~\eqref{eq:cat-matrix}), the element corresponding to the $(x, i)$ row and $(y, j)$ column is given by
\begin{align}\label{eq:T-elt}
    \hat{P}(y|x) (H^x_{i,j} - H^x_{i, j+1}) \, .
\end{align}
In an MRP with sparse transition matrix $P$, the empirical estimate $\hat{P}$ inherits this sparsity, inducing sparsity in $\EmpCatOp$. 
In addition, there is also sparsity induced by the atom index components of the row and column indices, as we now describe. Recall from Equation~\eqref{eq:H-matrix} that $H^x_{i,j} = \sum_{l \leq i} h_l(r(x) + \gamma z_j)$; see Figure~\ref{fig:sparse-structure} for an illustration the function $z \mapsto \sum_{l \leq i} h_l(z)$.

\begin{figure}[h]
    \centering
    \includegraphics[width=.5\textwidth]{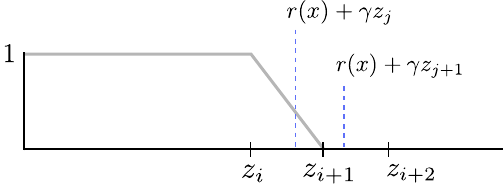}
    \caption{The function $z \mapsto \sum_{l \leq i} h_l(z)$ (grey), and a possible configuration for $r(x) + \gamma z_j$, $r(x) + \gamma z_{j+1}$ in the event of a non-zero $H^x_{i,j} - H^x_{i,j+1}$ term.}
    \label{fig:sparse-structure}
\end{figure}

Now, suppose that for some $x \in \mathcal{X}$ and $1 \leq i,j \leq m-1$, $H^x_{i,j} - H^x_{i, j+1}$ is non-zero.
This says that the function $z \mapsto \sum_{l \leq i} h_l(z)$ takes on different values at $r(x) + \gamma z_j$ and $r(x) + \gamma z_{j+1}$, and since the distance between these arguments is $\gamma$ times the grid width $(1-\gamma)^{-1}m^{-1}$, it follows that at least one of these two arguments must lie in the interval $[z_i, z_{i+1}]$; see Figure~\ref{fig:sparse-structure}.

From this observation, we deduce two forms of sparsity for the elements given in Equation~\eqref{eq:T-elt}.
First, since one of $r(x) + \gamma z_j$ and $r(x) + \gamma z_{j+1}$ must lie in $[z_i, z_{i+1}]$, we deduce that neither of these points can lie in any interval $[z_{i'}, z_{i'+1}]$ with $i' < i-1$ or $i' > i+1$, and hence $H^x_{i',j} - H^x_{i',j} = 0$ for such $i'$. From this reasoning, it follows that ranging over $i$ in Equation~\eqref{eq:T-elt}, there are at most 2 non-zero elements. For large $m$, this means that $\EmpCatOp$ is very sparse.

Similarly, we can deduce row sparsity by noting that at most $\lceil 2/\gamma \rceil$ indices $j$ can have the property that $r(x) + \gamma z_j$ or $r(x) + \gamma z_{j+1}$ can lie in $[z_i, z_{i+1}]$, and it is only for these indices $j$ that we can have $H^x_{i,j} - H^x_{i,j+1}$ non-zero. So for $\gamma \approx 1$, this also implies sparsity of the elements in Equation~\eqref{eq:T-elt} as we range over $j$.

\subsection{Results}

For each setting, we repeat the experiment 30 times with different sampled transitions.
We display trade-off plots (supremum-Wasserstein-1 error and wallclock time) for each of the four environments described above, in Figures~\ref{fig:chain}, \ref{fig:low-random}, \ref{fig:high-random}, and \ref{fig:row2023}, 
for the cases of $m\in\{30, 100, 300, 1000\}$ atoms
and using $N=10^6$ sample transitions from each state
to estimate transition matrix. These curves are obtained by averaging across the 30 repetitions. Some central themes emerge from the results.

In the case when the categorical methods use the environment-independent, standard atom locations $z_i = \tfrac{i-1}{m-1}(1-\gamma)^{-1}$ for $i=1,\ldots,m$, we find that for a given atom count $m$, QDP often achieves the lowest asymptotic Wasserstein-1 error. A notable exception is the two-state environment; \citet{rowland2023analysis} observed that such environments, in which there is a short, high-probability path from a state to itself, can cause high approximation error for QDP, which we believe is the cause of the inaccuracy observed here, particularly at high discounts. However, the categorical approaches often deliver better performance as judged by wallclock time. This is owing to the efficient implementations of the linear operator, and solution methods for the linear system, that are associated with these algorithms. By contrast, the QDP operator is non-linear, and requires a call to a sorting method. We tend to observe the greatest benefits of the sparse implementation for large atom counts, as expected, and also observe the greatest benefits of DCFP over the iterative CDP algorithm in settings with high discount factors. This is also to be expected, since the discount factor controls the rate of convergence of the DP algorithms. Note additionally that the use of environment-specific atom locations generally improves the results obtained with categorical approaches; the improvements in the chain environment are particularly strong, since the narrow return range allows for a denser packing of atom locations, by a factor of $(1-\gamma)^{-1}$, for a given atom count $m$.

We also compare how the supremum-Wasserstein distance changes 
while the number of samples used to estimate 
$\hat{P}$ increases from $10^2$ to $10^6$. Since the categorical
methods all converge to the same fixed point, we only show the result of DCFP and QDP, 
using $m\in\{30, 100, 300, 1000\}$ atoms. 
Figure~\ref{fig:w1_ss_detail}(a) shows that the distance decreases as the number of samples increase. 
One exception is the QDP with 30 atoms applied to high random at high $\gamma$, where
$N=100$ produced smallest distance on average. 
Figure~\ref{fig:w1_ss_detail}(b) shows the results when the environment-specific return range is given. 
This substantially reduced the supremum-Wasserstein distance, especially when using a small number of atoms.

\begin{figure}
    \centering
    \subfloat[Global return range]{\includegraphics[width=0.9\textwidth]{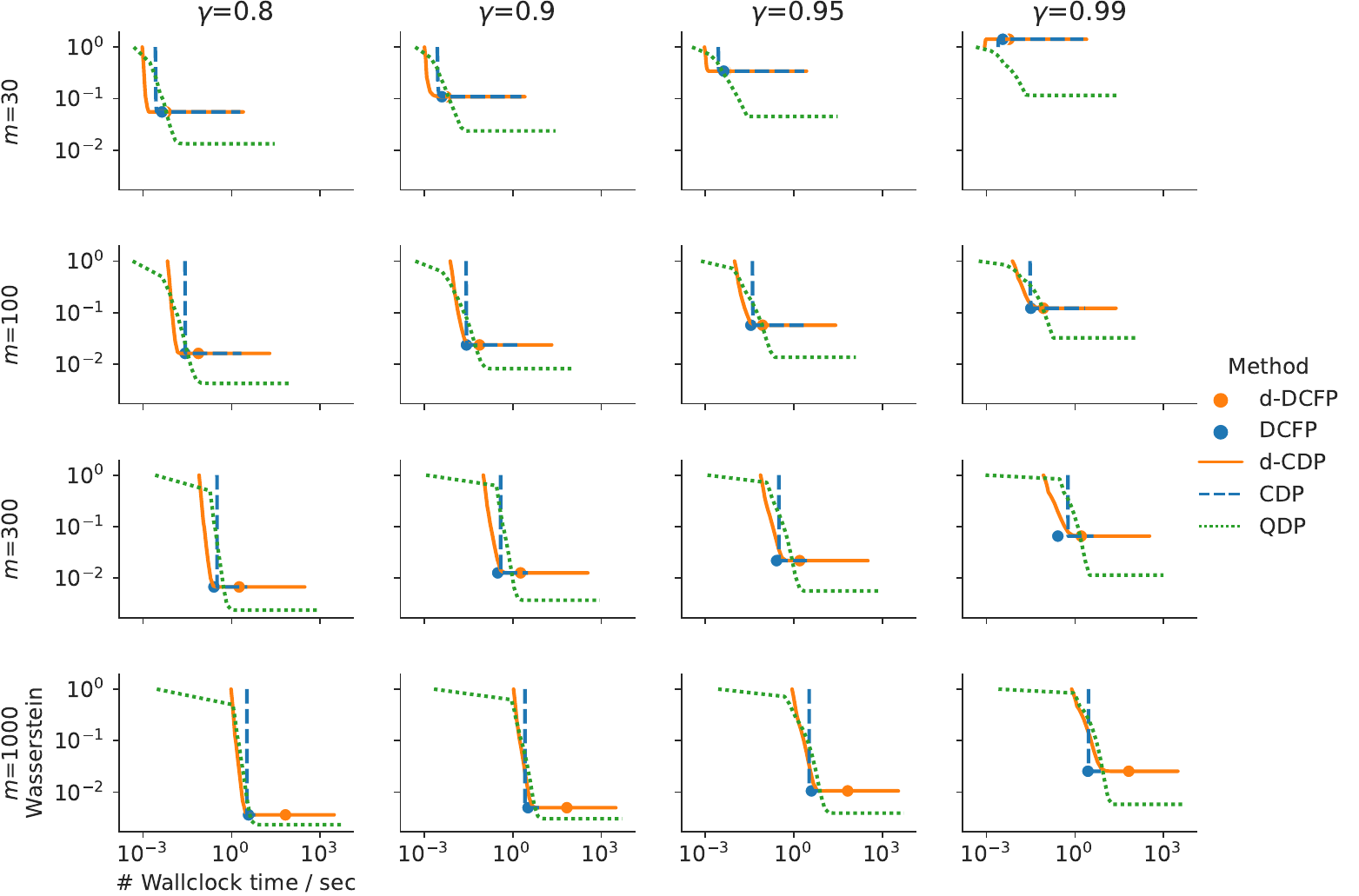}}\\
    \subfloat[Environment-specific return range]{\includegraphics[width=0.9\textwidth]{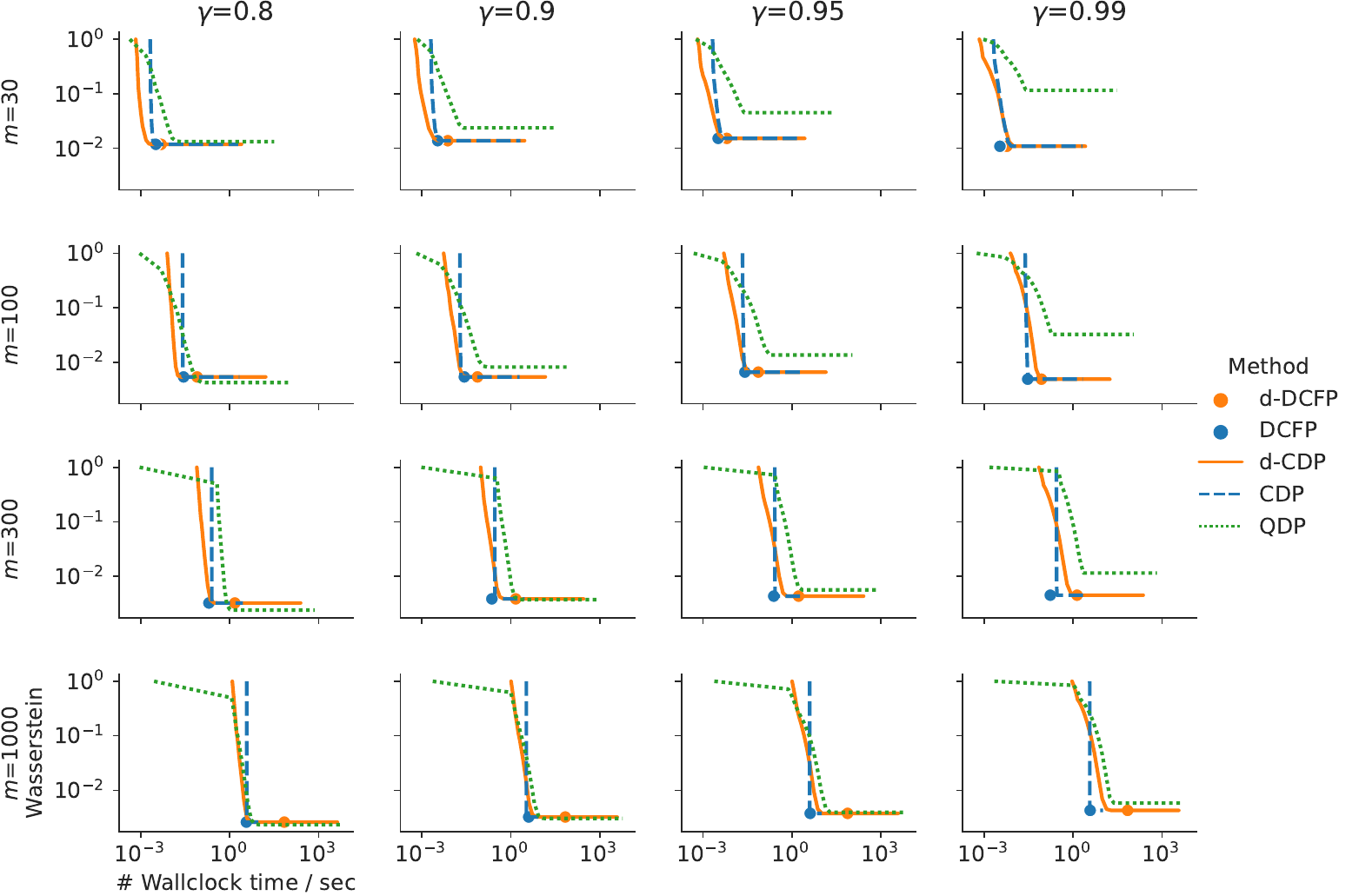}}\\
    \caption{Distance vs.\ run time results for the chain environment, for $\hat{P}$ estimated from $N=10^6$ sample transitions.}
    \label{fig:chain}
\end{figure}

\begin{figure}
    \centering
    \subfloat[Global return range]{\includegraphics[width=0.9\textwidth]{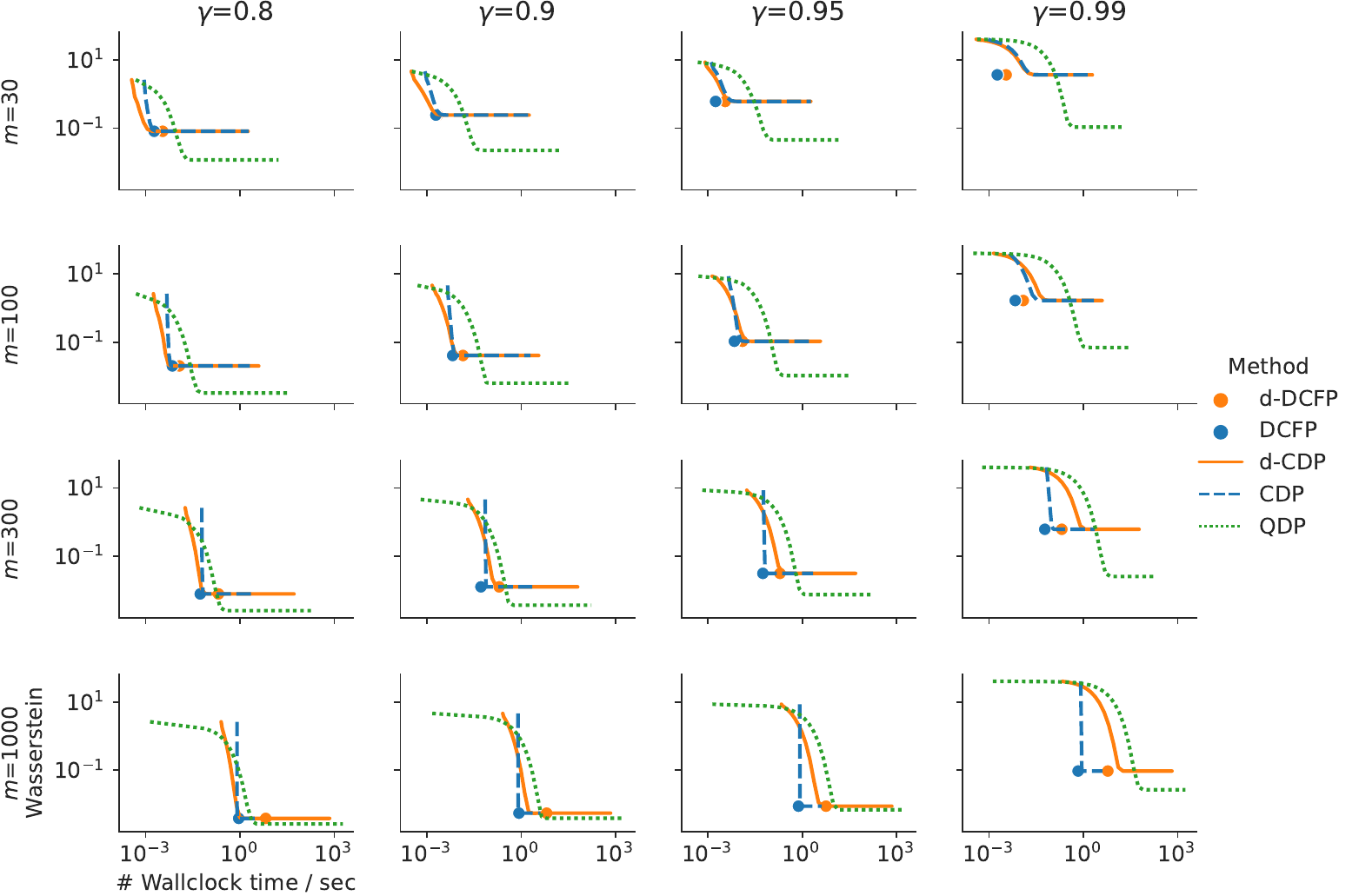}}\\
    \subfloat[Environment-specific return range]{\includegraphics[width=0.9\textwidth]{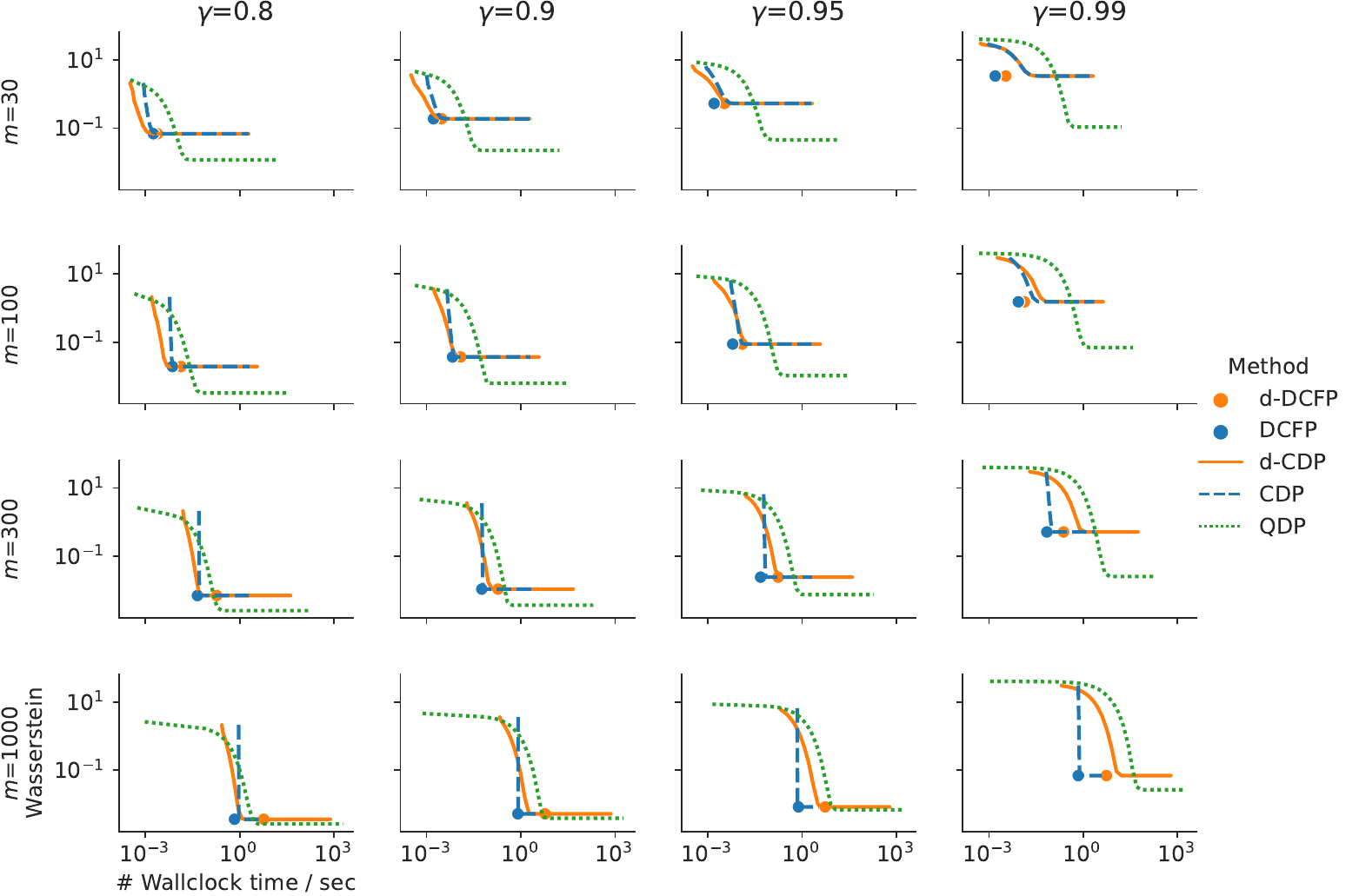}}\\
    \caption{Distance vs.\ run time results for the low random environment, for $\hat{P}$ estimated from $N=10^6$ sample transitions.}
    \label{fig:low-random}
\end{figure}

\begin{figure}
    \centering
    \subfloat[Global return range]{\includegraphics[width=0.9\textwidth]{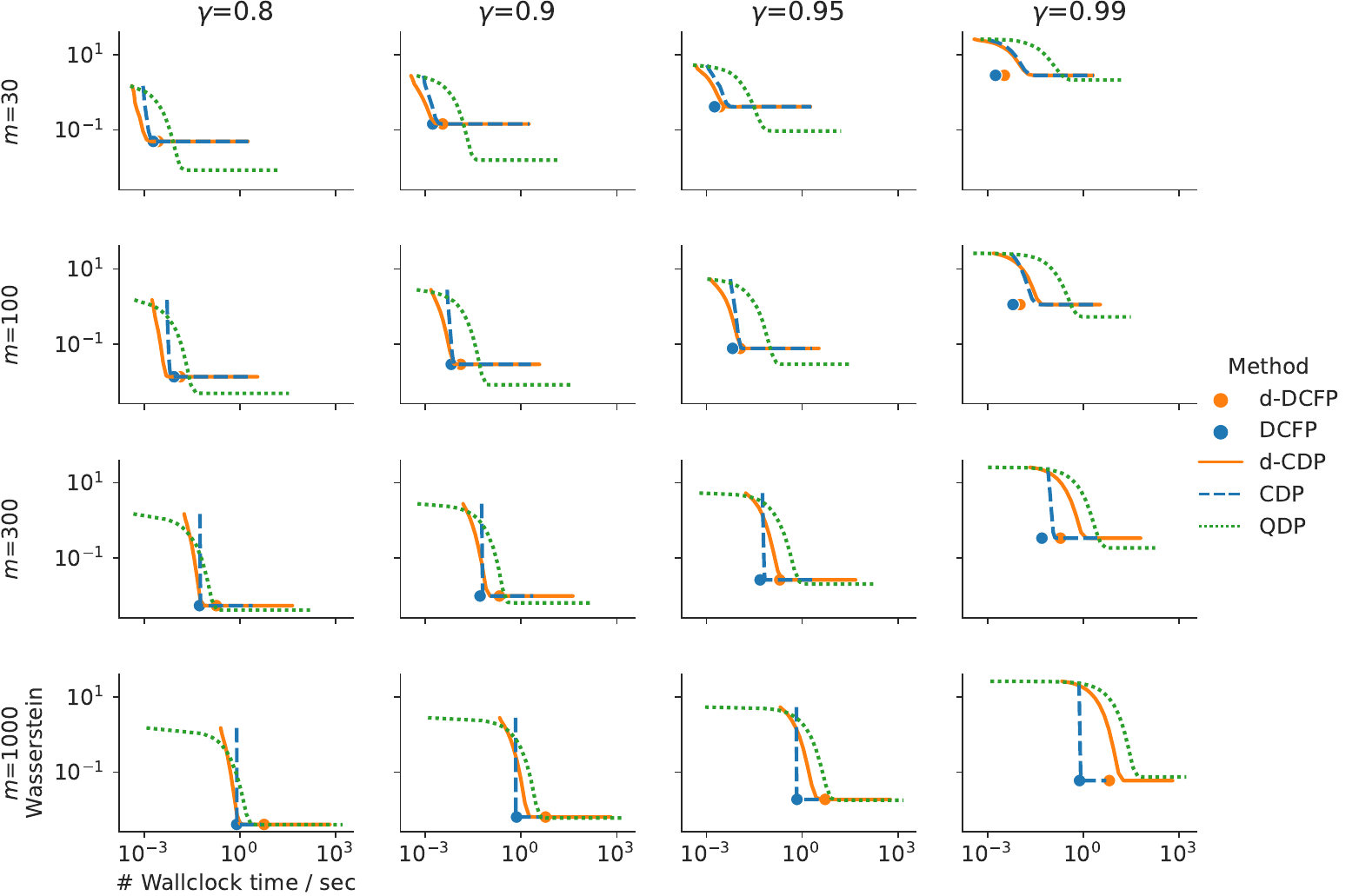}}\\
    \subfloat[Environment-specific return range]{\includegraphics[width=0.9\textwidth]{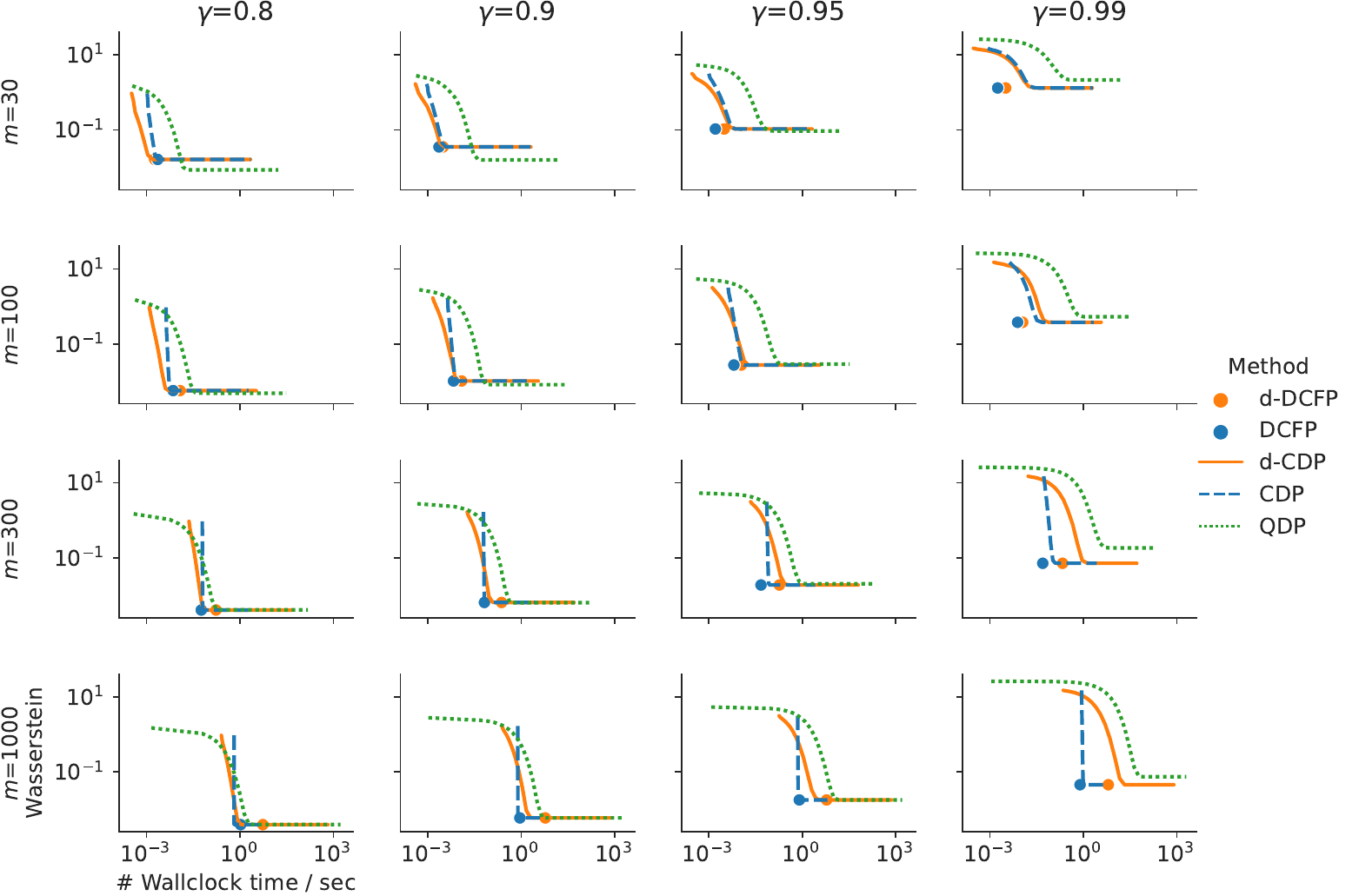}}\\
    \caption{Distance vs.\ run time results for the high random environment, for $\hat{P}$ estimated from $N=10^6$ sample transitions.}
    \label{fig:high-random}
\end{figure}

\begin{figure}
    \centering
    \includegraphics[width=0.9\textwidth]{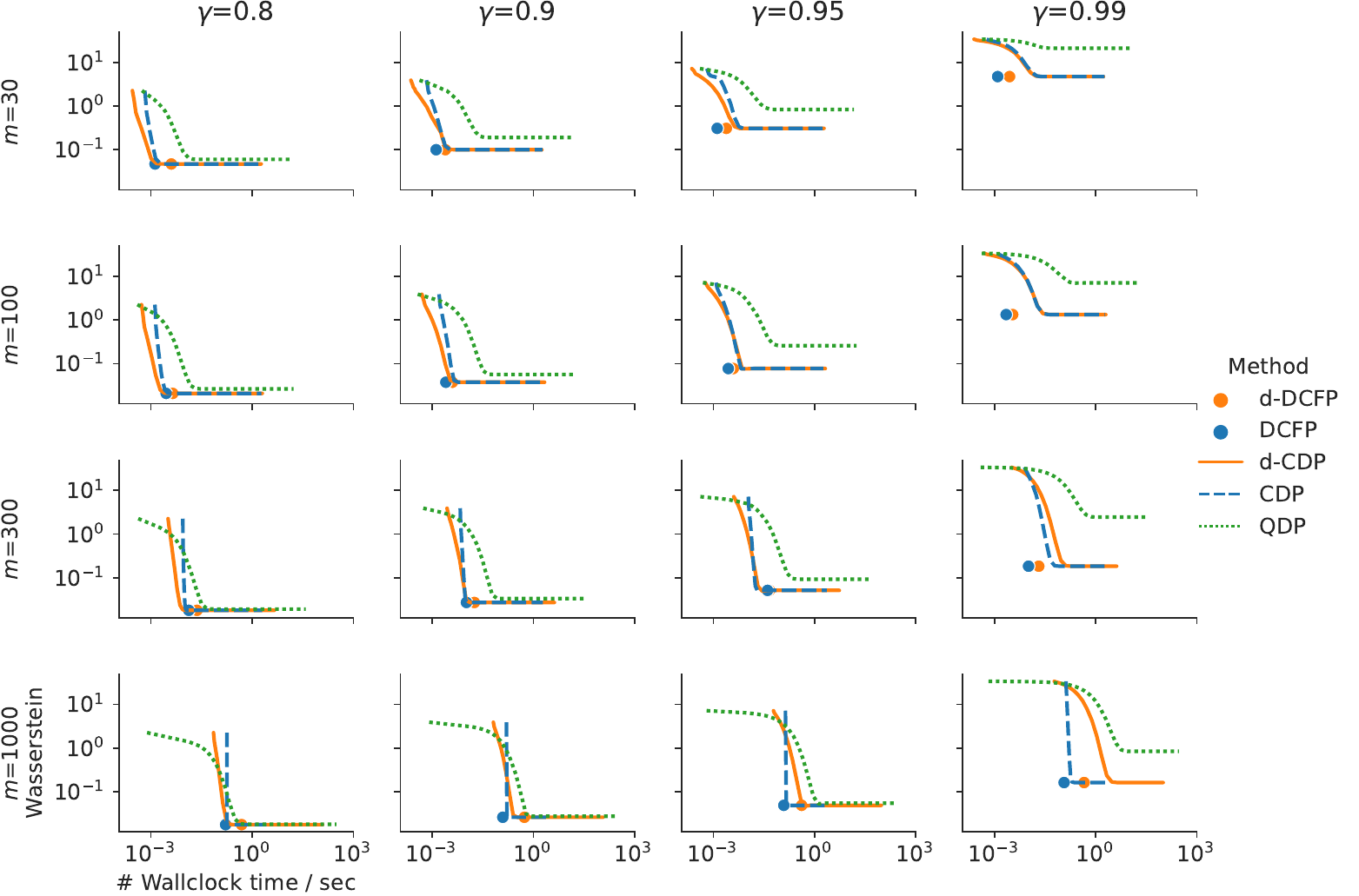}
    \caption{Distance vs.\ run time results for the two-state environment, for $\hat{P}$ estimated from $N=10^6$ sample transitions. The return range of this environment coincides with the global return range.}
    \label{fig:row2023}
\end{figure}

\begin{figure}
    \centering
    \subfloat[Global return range]{\includegraphics[width=0.9\textwidth]{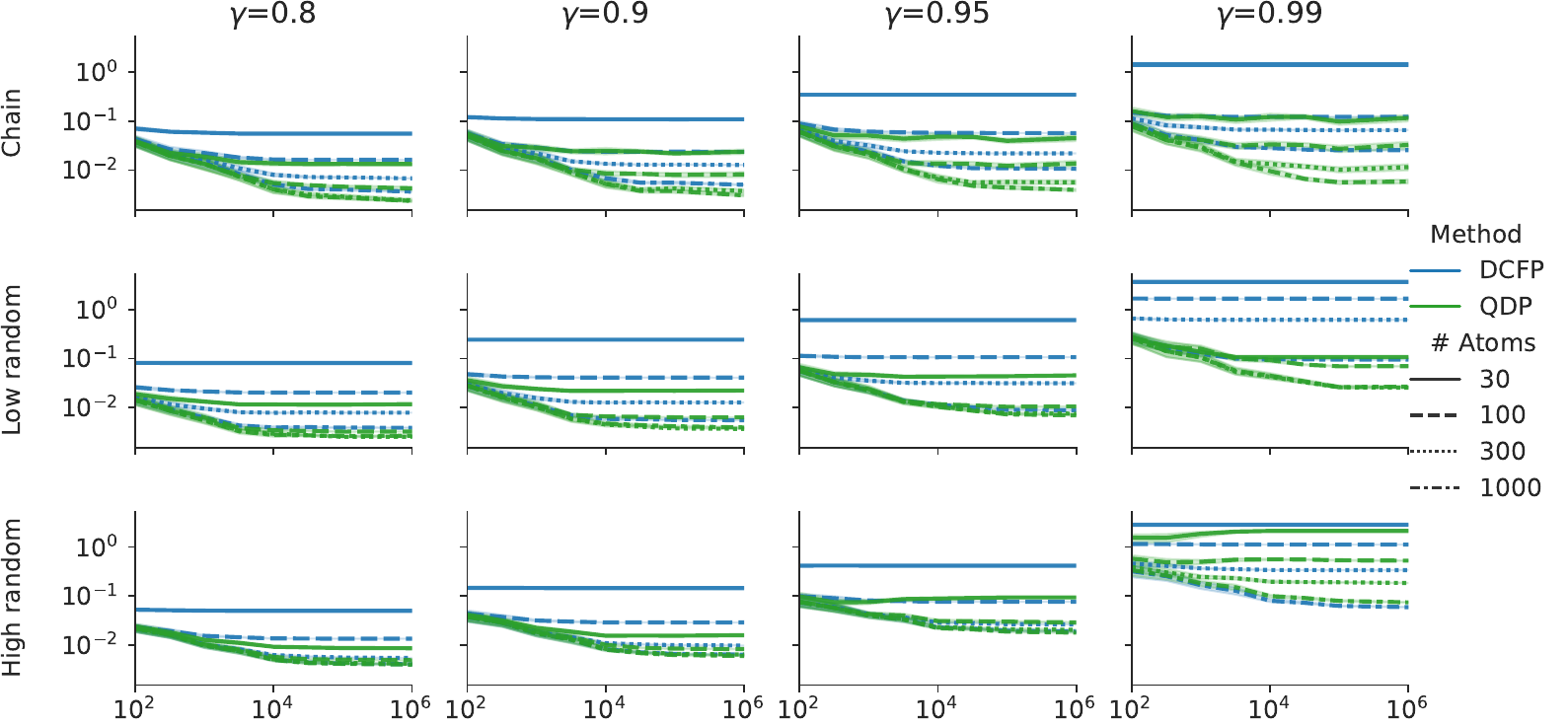}}\\
    \subfloat[Environment-specific return range]{\includegraphics[width=0.9\textwidth]{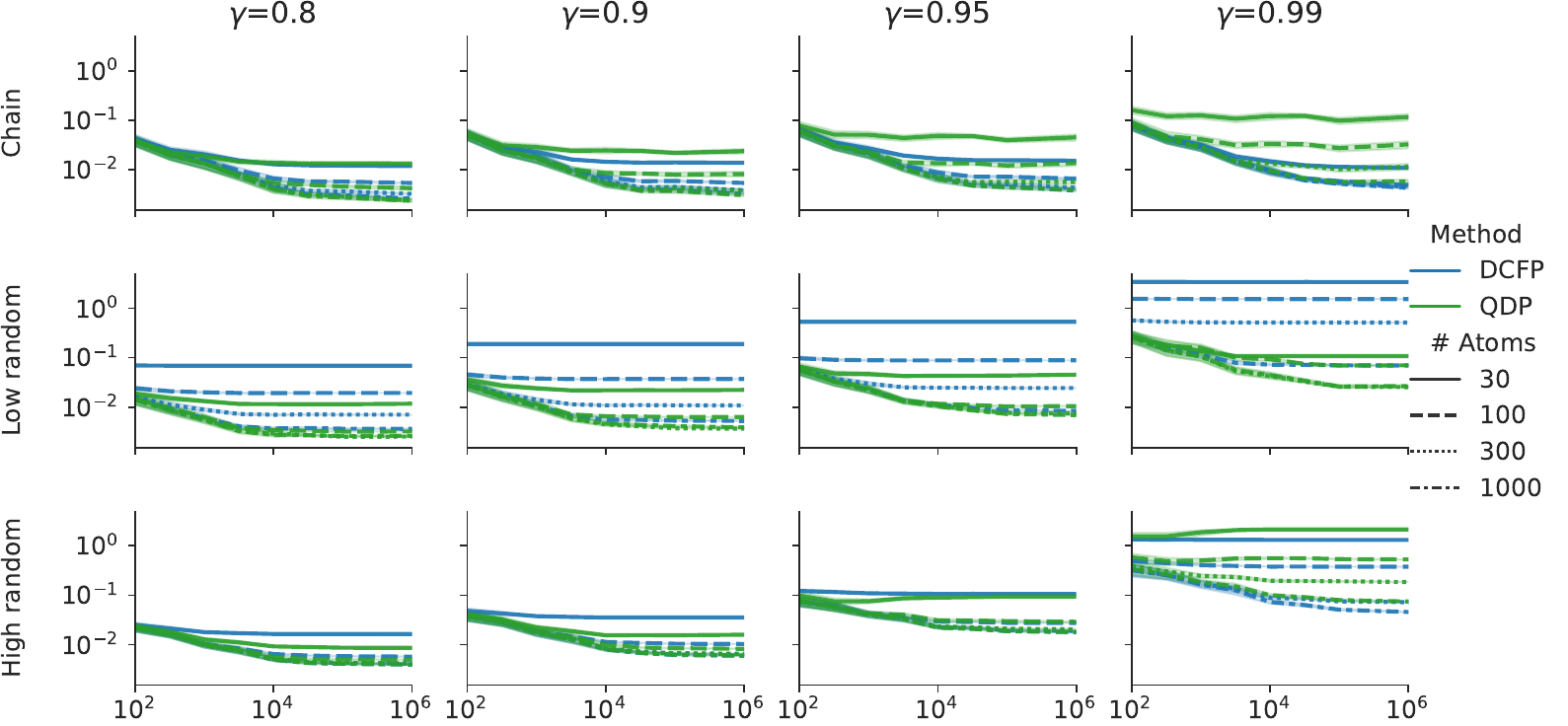}}\\
    \caption{Supremum-Wasserstein distance on convergence. Error envelope indicates 95\% 
    confidence interval by bootstrapping. The return range of the two-state environment coincides with the global return range.}
    \label{fig:w1_ss_detail}
\end{figure}

\end{document}